\documentclass[twoside,11pt]{article}

\usepackage[preprint]{jmlr2e}

\usepackage{amsmath,amssymb}
\usepackage{bm}

\usepackage{algorithm2e}
\SetKwInput{KwInput}{Input}
\SetKwInput{KwOutput}{Output}
\SetKwComment{Comment}{/* }{ */}
\RestyleAlgo{ruled}

\usepackage{tikz}
\usepackage{natbib}
\usepackage{graphicx}
\usepackage{hyperref}
\usepackage{url}

\bibpunct[, ]{(}{)}{,}{a}{}{,}

\newcommand{\yrcite}[1]{(\citeyear{#1})}
\newcommand{\newcite}[1]{\citeauthor{#1} (\citeyear{#1})}
\newcommand{\resultcite}[2]{(\citeauthor{#1}, \citeyear{#1}, #2)}

\input{my_symbol.sty}
\input{souvik.sty}

\newtheorem{theoremEnv}{Theorem}[section]
\newtheorem{propositionEnv}[theoremEnv]{Proposition}
\newtheorem{lemmaEnv}[theoremEnv]{Lemma}

\newtheorem{assumptionEnv}[theoremEnv]{Assumption}

\usepackage{xcolor}

\jmlrheading{1}{2026}{1-??}{??/??}{??/??}{LeRuizDhara24}{My Le, Luana Ruiz, and Souvik Dhara}

\ShortHeadings{Distance-Preserving Embeddings in Inhomogeneous Random Graphs}{Le, Ruiz, and Dhara}
\firstpageno{1}

\begin{document}

\title{Beyond Worst-Case Distortions: Distance-Preserving Embeddings in Inhomogeneous Random Graphs}

\author{\name My Le \email mle19@jh.edu \\
        \addr Department of Applied Mathematics \& Statistics\\
        Johns Hopkins University\\
        Baltimore, MD 21218, USA
        \AND
        \name Luana Ruiz \email lrubini1@jh.edu \\
        \addr Department of Applied Mathematics \& Statistics\\
        Johns Hopkins University\\
        Baltimore, MD 21218, USA
        \AND
        \name Souvik Dhara \email sdhara@gatech.edu \\
        \addr School of Industrial and Systems Engineering\\
        Georgia Institute of Technology\\
        Atlanta, GA 30332, USA}

\editor{[Editor Name]}

\maketitle

\begin{abstract}
Graph machine learning provides powerful tools for understanding complex networks and learning meaningful node representations. A central challenge, however, is designing embeddings with minimal distortion of \emph{both} local and global functionals, such as shortest path lengths. Prior distortion guarantees for distance-preserving embeddings are worst-case in nature, producing overly pessimistic bounds that fail to capture the structure of \emph{typical} large-scale networks. To address this, we analyze shortest-path approximation via landmark-based embeddings on inhomogeneous random graphs (IHGs), a general model with type-dependent edge probabilities. By retaining shortest paths to a small set of reference nodes called landmarks, landmark-based methods effectively function as virtual \textit{graph spanners}, where structural heterogeneity and controlled neighborhood expansion modeled via multi-type branching processes enable significantly tighter dimension–distortion trade-offs, i.e. $\Omega\left(n^{1-\varepsilon}\log n\right)$, than classical worst-case bounds, i.e. $\Omega\left(n^{\frac{2(1-\varepsilon)}{2-\varepsilon}}\log n\right)$ for $(1-\varepsilon)$-distortion and $\Omega\left(n^{\frac{2}{2+\varepsilon}}\log n\right)$ for $(1+\varepsilon)$-distortion. We extend these guarantees to global, component-wide averages and unify the analysis across finite-type and continuous latent spaces through a novel metric sandwiching framework, establishing universal distortion bounds for general $L^2$ kernel models, including heavy-tailed and power-law networks. Finally, we introduce a GNN-augmented variant that replaces rigid, computationally expensive exact shortest-path queries with flexible, structure-aware neural surrogates. By leveraging the inherent alignment between graph neural message-passing and the dynamic programming principles of shortest-path algorithms, our approach demonstrates that models trained on small-scale random graphs learn to extract universal distance-preserving features, achieving robust generalization to large-scale, real-world networks that match or exceed the fidelity of classical, exact landmark-based embeddings.

\end{abstract}

\begin{keywords}
shortest path, distance-preserving embeddings, landmarks, graph spanners, inhomogeneous random graphs, heterogeneity, graph neural networks, transferability
\end{keywords}

%%%%%%%%%%%%%%%%%%%%%%%%%%%%%%%%%%%%%%%%%%%%%%%%%%%%%%%%%%%%%%%%%%%%%%

\section{Introduction}\label{sec:Intro}

A central challenge in graph learning is to represent networked data in a form that faithfully captures its essential structural characteristics, including local connectivity patterns, mesoscopic organization, and global topology. A common strategy is to map nodes into low-dimensional metric spaces so that distances between embedded points reflect the graph's original structure. These embeddings provide a compact and mathematically tractable representation of the network, enabling both statistical analysis and efficient computation at scale in a wide range of inference tasks, including node classification, link prediction, clustering, and routing \citep{hamilton2017representation, grover2016node2vec, belkin2003laplacian}.

Despite significant progress, node embedding techniques are primarily designed to capture local and mesoscopic-scale structure, failing to accurately preserve global graph functionals.
In particular, a key global functional that is often neglected are shortest-path distances, which are central to routing, navigation, and network efficiency. Shortest path distances are often distorted by low-dimensional 
embeddings, especially in large-scale and structurally heterogeneous graphs 
\citep{goyal2018graph, tsitsulin2018just, brunner2021distance}. This failure to preserve the metric structure of the graph requires a more principled approach to distance-aware embeddings.

\emph{Landmark-based} distance embeddings represent nodes via their distances to a small set of reference nodes called landmarks, naturally inducing a probabilistic virtual \textit{graph spanner} \citep{ahmed2020graph}. Rooted in both scalable shortest-path approximation algorithms \citep{Sarma2010ASD, potamias2009fast, tretyakov2011fast} and the mathematical theory of metric embeddings into Hilbert spaces \citep{Bourgain85, Mat96, LLR95}, these methods are widely used in practice, offering a coordinate-based alternative to traditional edge-deletion spanners \citep{peleg1989graph, althofer1993sparse} while enjoying provable $(1 \pm \varepsilon)$-distortion guarantees \citep{Bourgain85, Sarma2010ASD}.

Despite their empirical success, existing theoretical guarantees require embedding dimension of $\Omega\left(n^{\frac{2(1-\varepsilon)}{2-\varepsilon}}\log n\right)$ for $(1-\varepsilon)$-distortion and $\Omega\left(n^{\frac{2}{2+\varepsilon}}\log n\right)$ for $(1+\varepsilon)$-distortion, which grows polynomially in $n$ and becomes prohibitive at scale \citep{Mat96, Sarma2012, Loukas2020}. These bounds hold for arbitrary graphs and are tight in the worst case, but may be overly pessimistic for the structured, heterogeneous networks that arise in practice. This raises the question of whether sharper guarantees are achievable under more realistic assumptions on the graph structure.

In this paper, we analyze landmark-based distance embeddings on \emph{inhomogeneous random graphs} (IHGs), a flexible model that subsumes variants of the stochastic block model and Chung--Lu-type models \citep{chung2002average, bollobas2007phase, van2013random}, and captures structural heterogeneity such as community organization and degree variability observed in real-world networks. Rather than seeking worst-case guarantees that hold for arbitrary graphs, we adopt an ``average-case'' perspective grounded in random graph theory, analyzing how type-dependent connectivity governs neighborhood expansion and, in turn, the geometry of landmark-based distance-preserving embeddings.

\paragraph{Theoretical Contributions.} Our main contribution is to demonstrate that, for a broad class of IHGs with either discrete or continuous kernels, landmark-based embeddings achieve a polynomial improvement in the distortion--dimension trade-off compared to classical worst-case guarantees. We establish these results by characterizing the fine-grained expansion of local neighborhoods through the lens of multi-type branching processes \citep{athreya2012branching, van2013random}. This probabilistic framework bridges local, heterogeneous node connectivity with the global operator-theoretic properties of the graph, enabling us to map precisely how structural variance inside an IHG controls distance propagation. To this end, our theoretical framework delivers a three-fold contribution:

\begin{itemize}
    \item \textbf{Sharp Point-Wise Guarantees (Theorems \ref{thm:lower-bound} and \ref{thm:upper-bound}):} Within the supercritical regime of IHGs with finitely many types where a unique giant component emerges with high probability (w.h.p., i.e., with probability approaching 1 as $n\to \infty$), we obtain embedding dimension $\Omega\left(n^{1-\varepsilon}\log n\right)$ for both $(1\pm \varepsilon)$-distortions, which is smaller than the worst-case bounds. Since the gain in the embedding dimension is $n^{O(\varepsilon)}$, it diminishes as $\varepsilon \to 0$. This behavior is consistent with the classical work on Lipschitz embeddings by \cite{Bourgain85}, which shows that dense random graphs achieve nearly worst-case distortion bounds.
    
    \item \textbf{Global Average-Case Stability (Theorem \ref{avg-case}):} 
    We extend our point-wise guarantees to the entire topology by showing that the empirical spatial average of the metric distortion—normalized over all valid, connected node pairs $U = \{(u_1, u_2) : u_1 \leftrightarrow u_2, u_1 \neq u_2\}$—concentrates tightly within a $(1\pm\varepsilon)$ window w.h.p. By decoupling the spatial summation, we prove that the mass of pathologically behaving configurations is asymptotically negligible, ensuring that anomalous structural bottlenecks cannot corrupt global downstream empirical risk optimization.
    
    \item \textbf{Universal Continuum Extension (Theorems \ref{thm:metric-sandwich} and \ref{thm:universal-distortion}):} We generalize our finite-type results to continuous latent spaces via a rigorous probability space coupling that sandwiches an arbitrary $L^2$ kernel $\kappa$ between two finite step-function kernels $\kappa_\delta^\pm$. Under supercriticality, the global metric structure of the continuous network behaves as a stable functional limit of these finite approximations as the approximation resolution $\delta\to 0$. This sandwiching argument allows us to transfer our polynomial dimension-distortion trade-offs to non-parametric, heavy-tailed network architectures, including unbounded power-law Chung--Lu configurations.

\end{itemize}

\noindent \textbf{Methodological Contributions.} Building on the theoretical framework, we propose a GNN-augmented variant of landmark-based embeddings that learns to approximate distances to landmarks directly from graph structure, replacing exact shortest-path computations. GNNs are well-suited for this task due to their alignment with dynamic programming, which underpins shortest-path algorithms 
\citep{xu2019can, dudzik2022graph}. Our experiments demonstrate that GNN-based embeddings match or improve upon exact landmark embeddings, particularly in the strongly supercritical regime where neighborhood expansion grows exponentially at a rate determined by the spectral radius of the graph. More strikingly, GNNs trained on small graphs generalize effectively to much larger graphs and to real-world networks, highlighting the practical value of the IHG framework as a training ground for scalable, distance-aware graph representations.

\subsection{Related Work}

The fundamental limits of distance-preserving graph embeddings have been studied extensively. \newcite{Bourgain85} established that preserving all pairwise distances up to a factor of $(1\pm\varepsilon)$ requires dimension at least $k_\varepsilon = \Omega\big({(\log n)^2}/{(\log\log n)^2}\big)$ for worst-case graphs. Subsequent works sharpened these bounds. \newcite{LLR95} showed $k_\varepsilon = \Omega((\log n)^2)$, and \newcite{Mat96} refined it to $k_\varepsilon = \Omega\left(n^{c/(1+\varepsilon)}\right)$ for certain graph families. More recently, \newcite{Naor16} and \newcite{Naor21} demonstrated that graphs with strong expansion properties require polynomial-dimensional embeddings, underscoring the difficulty of distance preservation in the worst case.

On the algorithmic side, landmark-based methods have been extensively studied as practical alternatives \citep{goldberg2005computing, Sarma2010ASD, potamias2009fast, tretyakov2011fast, akiba2013fast, rizi2018shortest, Qi2020ALB}, with \newcite{Sommer2014queries} providing a comprehensive survey. These methods select a small subset of reference nodes and approximate pairwise distances via the triangle inequality, trading exactness for scalability. While empirically effective, their theoretical guarantees inherit the pessimism of worst-case analyses, motivating the average-case perspective taken in this paper.

A broader line of work in graph representation learning studies how structural properties such as local connectivity, higher-order proximities, and spectral geometry govern the quality of learned embeddings. Random-walk-based methods such as DeepWalk \citep{perozzi2014deepwalk} and Node2Vec \citep{grover2016node2vec} capture local structure, while GraRep \citep{cao2015grarep}, PRONE \citep{zhang2021prone}, and adjacency search embeddings \citep{chaitanya2025adjacency} extend this to higher-order proximities. Spectral approaches including Laplacian Eigenmaps \citep{belkin2003laplacian} capture coarse global structure but can be sensitive to degree variability and edge heterogeneity \citep{chung1997spectral, von2007tutorial}. Cauchy embeddings \citep{tang2019cauchy} address this via heavy-tailed similarity measures, improving robustness in heterogeneous networks. None of these methods are designed to preserve the metric structure of the graph explicitly, which is the focus of this work.

%\red{Our results also connect to the growing literature on the theoretical foundations of graph neural networks \citep{xu2019powerful, oono2019graph}, where expansion and heterogeneity play a central role in governing information propagation and the expressiveness of local aggregation mechanisms. Distance-aware embeddings offer a 
%complementary approach, encoding global geometric information explicitly rather than relying solely on iterative message passing.}

Our work is intimately related to the extensive literature on \emph{graph spanners} in theoretical computer science \citep{ahmed2020graph}. Introduced to compress network topologies while bounding distance distortion, a classical $t$-spanner selects a sparse subgraph wherein the shortest path distance between any pair of nodes is at most $t$ times their true distance \citep{peleg1989graph, althofer1993sparse}. While traditional spanner construction algorithms are primarily algorithmic and worst-case deterministic, our work provides an average-case, probabilistic counterpart. Rather than explicitly computing a sparse physical subgraph, our multi-scale landmark framework acts as a virtual distance spanner, leveraging the underlying spectral expansion of inhomogeneous random graphs to guarantee a $(1 \pm \varepsilon)$ metric stretch w.h.p. using sub-linear storage overhead and query time.

Parallel to these representations is the literature on dense and sparse graph limits (graphons and kernels), which model large networks as continuous latent spaces \citep{bollobas2007phase, lovasz2012large}. While graph limit theory provides strong structural and phase-transition guarantees, its explicit geometric implications for shortest-path metric spaces have remained largely unmapped. None of the aforementioned representation frameworks are designed to explicitly preserve the metric structure of the graph across continuous limits, which is a focus of this work.

\section{Landmark-Based Distance-Preserving Embeddings}\label{sec:Prelim}

In this section, we introduce shortest-path distances and describe landmark-based embeddings for distance-preserving node representation, along with their worst-case distortion guarantees. Classical algorithms for exact shortest-path computation are reviewed briefly to motivate the need for approximate, embedding-based approaches at scale.

Throughout, we consider an undirected and unweighted graph $G=(V,E)$ with vertex set $V$ of size $n$ and edge set $E$ of size $m$. Connected components of $G$ are indexed in decreasing order of size, and $\mathcal{C}_{(i)}$ denotes the $i$-th largest connected component. For any two vertices $u_1,u_2 \in V$, the notation $u_1 \leftrightarrow u_2$ indicates that $u$ and $v$ are connected by a path, i.e., they belong to the same connected component.

\subsection{Shortest Path Distances}

For a graph $G=(V,E)$ and vertices $u_1,u_2 \in V$, the shortest path distance (or simply shortest distance) $d(u_1,u_2)$ is defined as the number of edges in a path with the minimum number of edges among all paths connecting $u$ and $v$. Computing shortest distances is a fundamental problem in graph theory and combinatorial optimization, and serves as a core primitive in our analysis.

Classical algorithms for computing shortest paths include Dijkstra's algorithm and its variants. When implemented with simple data structures, Dijkstra's algorithm runs in $O(n^2)$ time for a single source and $O(n^3)$ time for all node pairs. More efficient implementations based on priority queues reduce the single-source complexity to $O(m \log n)$, or to $O(m + n \log n)$ in sparse graphs \citep{schrijver2012history}. 

A number of refinements have been proposed for the single-source setting, building on early work by \cite{dijkstra1959note} and \cite{moore1959shortest}. These include bucket-based methods such as S-Dial, which achieves a running time of $O(m + n \ell_{\max})$ where $\ell_{\max}$ is the maximum edge length, as well as heap-based approaches with complexity $O(m \log n)$ or $O(n \log n)$ in sparse regimes \citep{gallo1988shortest}. Further improvements were introduced by Fredman and Willard, yielding an $O(m + n \log n / \log \log n)$ time algorithm \citep{fredman1990trans}.

For the all-pairs shortest path problem, classical methods such as the Floyd--Warshall algorithm require $O(n^3)$ time \citep{gallo1988shortest}, while the more advanced hidden-path algorithm achieves $O(mn + n^2 \log n)$ complexity \citep{karger1993finding}. Nevertheless, these remain prohibitive for large graphs, motivating approximate methods and in particular embedding-based approaches for distance estimation.

\subsection{Landmark-Based Embeddings}\label{landmark-embeddings}
Although computing exact shortest distances can be prohibitively expensive on large graphs, often one can precompute distances to local landmarks and use these values to approximate shortest path distances efficiently \citep{Sarma2010ASD}.
This is the core idea of landmark-based distance-preserving embeddings. Explicitly, their construction involves two steps:

\noindent \textbf{Local Step.} Sample $r+1$ sets of landmark nodes $S_0, S_1, \dots, S_r \subset V$. For each vertex $u \in V$ and each landmark set $S_j$, compute
$$
[\mathbf{x}_u]_j = \min_{s \in S_j} d(u, s), 
\qquad 
[\boldsymbol{\sigma}_u]_j = \arg\min_{s \in S_j} d(u, s),
$$
where $d(u, s)$ is the shortest distance between $u$ and $s$. Here, $[\mathbf{x}_u]_j$ stores the distance from $u$ to its closest landmark in $S_j$, and $[\boldsymbol{\sigma}_u]_j$ records the corresponding landmark. Collecting these for all $j=0,\dots,r$ gives the \emph{local embedding} $\mathbf{x}_u$ and associated landmark indices $\boldsymbol{\sigma}_u$ for vertex $u$.

\noindent \textbf{Global Step.} Given local embeddings $\mathbf{x}_{u_1}$ and $\mathbf{x}_{u_2}$ for two vertices $u_1, u_2 \in V$, a lower bound on the shortest path distance is obtained as
$$
\underline{d}(u_1,u_2) = \|\mathbf{x}_{u_1} - \mathbf{x}_{u_2}\|_\infty.
$$
An upper bound is obtained by combining distances through common landmarks:
$$
\bar{d}(u_1,u_2) = \min \{ [\mathbf{x}_{u_1}]_i + [\mathbf{x}_{u_2}]_j \ :\ [\boldsymbol{\sigma}_{u_1} \mathbf{1}^\top - \mathbf{1}\boldsymbol{\sigma}_{u_2}^\top]_{ij} = 0 \}.
$$

These bounds follow directly from the triangle inequality. The lower bound $\underline{d}(u_1,u_2)$ relies solely on coordinate-wise differences and thus involves a search over $r+1$ coordinates.
In contrast, the upper bound $\bar{d}(u_1,u_2)$ considers all pairs of coordinates $(i,j)$ such that the closest landmarks coincide and requires at least one landmark set to have strictly one node to prevent $\bar{d}(u_1,u_2)$ from being undefined. In the worst case, this requires checking all $(r+1)^2$ pairs of landmark indices, since each coordinate of $\mathbf{x}_{u_1}$ may correspond to a different landmark than that of $\mathbf{x}_{u_2}$. Hence, the effective search space for computing $\bar{d}(u_1,u_2)$ can scale quadratically with the embedding dimension.

From a practical standpoint, this two-step procedure allows the landmark embeddings computed in the local step to be stored and later retrieved to efficiently compute $\underline{d}(u_1,u_2)$ and $\bar{d}(u_1,u_2)$ as approximations of $d(u_1,u_2)$ on demand. The key advantage of landmark-based algorithms is that they avoid recomputing paths for each query, substantially reducing computational and memory costs while still preserving the graph's structural information.

Such lower and upper bounds have been shown to provide reliable estimates of exact 
distances \citep{Bourgain85, Mat96, Sarma2010ASD, Gubichev2010, Sommer2014queries, 
Akiba2014queries, Meng2015GRECS, Jiang2021Tripoline, Awasthi2021}, enabling rapid 
online queries for individual node pairs as well as efficient all-pairs distance 
approximation. The quality of these approximations is characterized by their 
distortion relative to the true shortest-path distances. Provable guarantees 
on this distortion holding for any graph are discussed next. 

%Universal approximation guarantees, i.e., approximation guarantees applying to \emph{any} graph, have been established for the lower bound $\underline{d}(u_1,u_2)$ and the upper bound $\bar{d}(u_1,u_2)$. These results ensure that the estimated distances provably track true graph distances up to bounded distortion. The lower bound follows from results of \newcite{Mat96}, based on Bourgain's embedding theorem \yrcite{Bourgain85}, while analogous guarantees for the upper bound were derived by \newcite{Sarma2010ASD}. 

\subsection{Distortion on Worst-Case Graphs}

The distortion guarantee for the shortest distance lower bound follows from \newcite{Mat96}, building on Bourgain's 
embedding theorem \yrcite{Bourgain85}. 

\begin{theoremEnv}[\textbf{Worst-Case Lower-Bound Distortion}]\label{thm:bourgain_distortion}
Let $G=(V,E)$ be a graph with $n \ge 3$ nodes and let $u_1,u_2 \in V$. For any $c>1$, there exist embeddings $\bbx^*_{u_1},\bbx^*_{u_2} \in \reals^{D}$ with $D=\Omega(n^{1/c}\log n)$ such that the lower-bound estimator $\underline{d}(u_1,u_2)$ satisfies
\begin{equation*} \label{eqn:bourgain_distortion}
\frac{1}{2c-1} d(u_1,u_2) \le \underline{d}(u_1,u_2) \le d(u_1,u_2).
\end{equation*}
\end{theoremEnv}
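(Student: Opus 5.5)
The plan is to follow Matoušek's version of Bourgain's construction and show that it is exactly a landmark embedding in the sense of Section~\ref{landmark-embeddings}. The coordinates are Fréchet-type maps $[\bbx_u]_j = \min_{s\in S_j} d(u,s)$ with randomly sampled landmark sets, so $\underline{d}(u_1,u_2)=\|\bbx_{u_1}-\bbx_{u_2}\|_\infty$.

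\textbf{Upper bound.} Each coordinate $u\mapsto d(u,S_j)$ is $1$-Lipschitz by the triangle inequality. Hence $\underline{d}(u_1,u_2)\le d(u_1,u_2)$ holds deterministically for every choice of landmark sets.

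\textbf{Lower bound.} Fix $c>1$ and set $p = n^{-1/c}$. For each scale $i=0,1,\dots,c-1$ (taking $c$ integer for simplicity, else $\lceil c\rceil$), draw $q = \Theta(n^{1/c}\log n)$ independent sets $S$. Each set includes every vertex independently with probability $p^{i}$, suitably normalized so that $p^0$ gives singletons or tiny sets. This yields $D = c\cdot q = \Theta(n^{1/c}\log n)$ coordinates.

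Now fix $u_1,u_2$ with $\Delta = d(u_1,u_2)$ and consider balls of radius $\rho_k = k\Delta/(2c-1)$ for $k=0,\dots,c-1$. Alternate the centers: $B(u_1,\rho_k)$ and $B(u_2,\rho_k)$. These balls have sizes between $1$ and $n$. By pigeonhole on the $c$ scales of $n^{1/c}$, there exist an index $k$ and a scale $i$ with the following property. Up to swapping $u_1,u_2$, the closed ball $B(u_1,\rho_k)$ has size at most $n^{i/c}$, while the open ball $B^\circ(u_2,\rho_{k+1})$ has size at least $n^{(i-1)/c}$ or so; this is Matoušek's growth argument. A random set at density $\approx n^{-i/c}$ then misses $B^\circ(u_2,\rho_{k+1})$ but hits $B(u_1,\rho_k)$ with probability at least a constant times $n^{-1/c}$. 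On that event,
$$
d(u_2,S)-d(u_1,S)\ge \rho_{k+1}-\rho_k=\frac{\Delta}{2c-1}.
$$
With $q=\Theta(n^{1/c}\log n)$ independent trials at scale $i$, the failure probability is at most $n^{-3}$. A union bound over all $\binom n2$ pairs then shows that one fixed draw of landmark sets works for every pair with positive probability. This gives existence of $\bbx^*$ and the inequality $\frac{1}{2c-1}d\le \underline d\le d$.

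\textbf{Main obstacle.} The main obstacle is the pigeonhole and ball-growth step, which must select the right radius and scale so that the hit/miss probability is $\Omega(n^{-1/c})$ rather than merely $\Omega(1/n)$. I would reproduce Matoušek's telescoping argument directly. Since $|B|$ ranges over $[1,n]$ across $c$ nested radii alternating between $u_1$ and $u_2$, some consecutive ratio is at most $n^{1/c}$. Choosing $p^i$ to match the smaller ball then bounds the hit/miss probability by $(1-(1-p^i)^{a})(1-p^i)^{b}\gtrsim n^{-1/c}$ with $a\ge n^{(i-1)/c}$ and $b\le n^{i/c}$. The open/closed-ball boundary cases and disjointness of $B(u_1,\rho_k)$ from $B^\circ(u_2,\rho_{k+1})$ (which uses $\rho_k+\rho_{k+1}<\Delta$, ensured by the $2c-1$ denominator) are routine checks.
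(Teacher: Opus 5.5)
The paper gives no proof of this theorem; it simply cites Matou\v{s}ek (1996). Your overall route is exactly that Fr\'echet-type construction, and several pieces are fine: the $1$-Lipschitz argument for $\underline{d}\le d$, the hit/miss product estimate given aligned thresholds, and the union bound over pairs.

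\textbf{The gap.} It lies in the step you flag as the main obstacle: selecting the radius index and the density scale.

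First, your lower-bound paragraph reverses the inequalities. You ask the ball to be hit to be \emph{small}, $|B(u_1,\rho_k)|\le n^{i/c}$, and the ball to be missed to be \emph{large}, $|B^\circ(u_2,\rho_{k+1})|\ge n^{(i-1)/c}$. These conditions are nearly vacuous. Take $k=0$, so the hit-ball is a single vertex, and let the miss-ball contain almost all of $V$. Then every density $p$ succeeds with probability at most about $p\,e^{-pn}=O(1/n)$.

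Second, and more importantly, the justification ``some consecutive ratio is at most $n^{1/c}$'' does not give thresholds aligned with the density grid $n^{-i/c}$. A ratio bound $b\le n^{1/c}a$ permits $a=n^{(i-1)/c+1/(2c)}$ and $b=n^{i/c+1/(2c)}$. Then:
density $n^{-i/c}$ misses the open ball with probability at most $e^{-n^{1/(2c)}}$;
density $n^{-(i+1)/c}$ hits the closed ball with probability at most $n^{-3/(2c)}$;
every other density is worse.
So your $\Theta(n^{1/c}\log n)$ trials fail w.h.p.\ on that pair of balls. Also, the alternating balls are not nested; the only containment available is $B^\circ\subseteq B$ for the same center and radius.

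\textbf{The fix: Matou\v{s}ek's threshold selection.} Set $\Delta'=d(u_1,u_2)/(2c-1)$. For $t=0,\dots,c$ (so $c+1$ balls, not $c$), let $B_t$ be the closed ball of radius $t\Delta'$ centered at $u_1$ for even $t$ and at $u_2$ for odd $t$. Let $t^*$ be the \emph{largest} $t\in\{0,\dots,c-1\}$ with $|B_t|\ge n^{t/c}$; it exists because $|B_0|=1$.

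If $t^*+1<c$, then $|B^\circ_{t^*+1}|\le|B_{t^*+1}|<n^{(t^*+1)/c}$. If $t^*+1=c$, then $|B^\circ_c|\le n$ trivially. So $i=t^*+1$ gives exactly $a\ge n^{(i-1)/c}$ and $b\le n^{i/c}$. Your bound $(1-(1-n^{-i/c})^a)(1-n^{-i/c})^b\ge n^{-1/c}/8$ then applies for large $n$.

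Disjointness of $B_{t^*}$ and $B^\circ_{t^*+1}$ uses $t^*\Delta'+(t^*+1)\Delta'\le d(u_1,u_2)$. Equality holds at $t^*=c-1$, so your strict inequality $\rho_k+\rho_{k+1}<\Delta$ fails there, and the open ball is genuinely needed.

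\textbf{Consequences for your sampling scheme and for non-integer $c$.}
You need the densities $n^{-i/c}$ for $i=1,\dots,c$, not $p^0=1,\dots,p^{c-1}$. Density $1$ is useless, since $d(u,V)=0$ for every $u$. Density $1/n$ is the scale selected whenever $|B_t|\ge n^{t/c}$ for all $t\le c-1$, so it cannot be dropped.
For non-integer $c$ you must round \emph{down}. Using $\lceil c\rceil$ gives distortion $2\lceil c\rceil-1>2c-1$. Using $\lfloor c\rfloor$ gives distortion at most $2c-1$ with dimension $\Theta(n^{1/\lfloor c\rfloor}\log n)$, which is still $\Omega(n^{1/c}\log n)$. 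For $1<c<2$ this is the isometric case, e.g.\ using all $n$ singletons as landmark sets.
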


The distortion guarantee for the upper bound was derived by \newcite{Sarma2010ASD}.

\begin{theoremEnv}[\textbf{Worst-Case Upper-Bound Distortion}]\label{thm:sarma_distortion}
Let $G=(V,E)$ be a graph with $n \ge 3$ nodes and let $u_1,u_2 \in V$. For any $c>1$, there exist embeddings $\bbx^*_{u_1},\bbx^*_{u_2} \in \reals^{D}$ with $D=\Omega(n^{1/c}\log n)$ such that the upper-bound estimator $\bar{d}(u_1,u_2)$ satisfies
\begin{equation*} \label{eqn:sarma_distortion}
d(u_1,u_2) \le \bar{d}(u_1,u_2) \le (2c-1) d(u_1,u_2).
\end{equation*}
\end{theoremEnv}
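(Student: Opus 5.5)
We follow the Das Sarma et al.\ sketch construction, i.e.\ the Thorup--Zwick approximate distance oracle analysis. Set $k=c$ (assume $c$ is an integer; otherwise take $k=\lceil c\rceil$) and repeat the following sampling $O(n^{1/k}\log n)$ times independently. For $i=0,1,\dots,k-1$, draw a random landmark set $S_i$ of size $\approx n^{i/k}$, with $S_0$ a single node (which also guarantees that $\bar d$ is defined). The embedding records, for each set, the distance $[\mathbf{x}_u]_j$ to the nearest landmark and its identity $[\boldsymbol\sigma_u]_j$. The total dimension is $D=O(k\, n^{1/k}\log n)$, and the statement claims only existence. The lower bound $d\le \bar d$ is immediate from the triangle inequality, since $\bar d$ is the length of an actual walk $u_1\to s\to u_2$ through a common landmark $s$.

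For the upper bound, write $d=d(u_1,u_2)$. Define balls $B_i(u)$ as the set of nodes within distance less than the distance from $u$ to its nearest level-$i$ landmark. The key probabilistic step, which I expect to be the main obstacle, is to show that with probability $1-n^{-\Omega(1)}$ there is, for every node and every level $i$, a repetition in which the relevant nearest-landmark structure ``nests'' correctly. Concretely, I would aim for the Thorup--Zwick-type bunch property: the nearest level-$(i+1)$ landmark in a sampled set lies in the level-$i$ bunch with good probability. A set of size $n^{(i+1)/k}$ hits a specific ball of $n^{1-(i+1)/k}$ nodes with constant probability. The $\Theta(n^{1/k}\log n)$ repetitions per level then boost this to high probability, and a union bound over all pairs finishes the step. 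Sizing the sets and the number of repetitions so that this union bound closes is where care is needed.

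Given that property, run the standard alternating argument. Let $w_i$ be the nearest level-$i$ landmark of $u_1$ (for even $i$) or of $u_2$ (for odd $i$). Suppose $w_i$ is not a landmark simultaneously recorded as nearest by the other endpoint in the same coordinate. Then the triangle inequality gives $d(u_{\text{other}},w_{i+1})\le d(u_{\text{this}},w_i)+d$, so each level increases the radius by at most $d$. Swapping the roles of the endpoints at each level, the process must stop by level $k-1$, because the top level is a single node shared by both endpoints. At stopping, we obtain a common landmark $s$ with $d(u_1,s)\le i d$ and $d(u_2,s)\le (i+1)d$ for some $i\le k-1$. Hence $\bar d(u_1,u_2)\le (2i+1)d\le(2k-1)d=(2c-1)d$. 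This argument mirrors the proof of Theorem~\ref{thm:bourgain_distortion}, where Bourgain--Matoušek's analysis uses the same radius-growth scheme for the $\ell_\infty$ lower bound.

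Finally, we translate ``with positive probability the random construction works'' into the existence of embeddings $\mathbf{x}^*_{u_1},\mathbf{x}^*_{u_2}$ of dimension $D=\Omega(n^{1/c}\log n)$, padding coordinates if a larger $D$ is prescribed.
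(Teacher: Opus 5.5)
Your proposal has a genuine gap. The paper gives no proof of this theorem; it defers to Das Sarma et al., whose analysis is an alternating ball-growth argument. You instead transplant the Thorup--Zwick query analysis, and that does not carry over to landmark sketches. Thorup--Zwick relies on three things:
(i) a base level $A_0=V$, so that $p_0(u_1)=u_1$ and the radius starts at $0$;
(ii) stored \emph{bunches}, so that ``$w_i$ is not in the other endpoint's bunch'' yields $d(u_{\mathrm{other}},A_{i+1})\le d(u_{\mathrm{other}},w_i)\le d(u_{\mathrm{this}},w_i)+d$;
(iii) a sparsest level contained in every bunch, which forces termination.

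A landmark sketch stores only the single nearest landmark per set, and your construction has none of (i)--(iii). Each step of your chain breaks:
\begin{itemize}
\item The inductive step ``if $w_i$ is not recorded by the other endpoint then $d(u_{\mathrm{other}},w_{i+1})\le d(u_{\mathrm{this}},w_i)+d$'' is unjustified. Knowing that $u_{\mathrm{other}}$'s nearest point of $S_i$ is some other node says nothing about the independent set $S_{i+1}$.
\item The base case fails, since $d(u_1,S_0)$ for a random singleton $S_0$ is not $0$.
\item Termination fails. Your sizes increase with $i$, so the top level is not a single node (you say both that $S_0$ is the singleton and that the top level is). The only automatically shared landmark is $S_0$, which gives $\bar{d}\le d(u_1,S_0)+d(u_2,S_0)$, unrelated to $d$.
\end{itemize}
Your ``key probabilistic step'' is too vague to repair this. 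Hitting a ball with constant probability is also the wrong event: you need an event on a \emph{single} set that forces both endpoints to have the \emph{same} nearest landmark. Separately, $k=\lceil c\rceil$ goes the wrong way, since $(2\lceil c\rceil-1)d>(2c-1)d$ for non-integer $c$. Use $k=\lfloor c\rfloor$, which still gives $D=\Omega(n^{1/c}\log n)$.

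The missing idea is a pigeonhole over alternating balls. With $d=d(u_1,u_2)$, let $B_j=B(u_1,jd)$ for even $j$ and $B_j=B(u_2,jd)$ for odd $j$, for $0\le j\le k$. The triangle inequality gives $B_j\subseteq B_{j+1}$. Since $|B_0|=1$ and $|B_k|\le n$, some $j<k$ has $|B_{j+1}|\le n^{1/k}|B_j|$.

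Take a landmark set $S$ of dyadic size $m\in\bigl(n/(2|B_{j+1}|),\,n/|B_{j+1}|\bigr]$. With probability at least $|B_j|/(8|B_{j+1}|)\ge n^{-1/k}/8$, the set $S$ meets $B_{j+1}$ in exactly one node $w$, and $w\in B_j$. Then every other landmark lies outside $B_{j+1}\supseteq B_j$. So $w$ is the nearest landmark of $S$ for \emph{both} $u_1$ and $u_2$, and $\bar{d}\le jd+(j+1)d\le(2k-1)d\le(2c-1)d$.

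Because $j$, and hence $m$, depends on the graph, you need all scales $2^0,\dots,2^{\lfloor\log_2 n\rfloor}$ rather than $k$ scales $n^{i/k}$. A mismatched scale costs a polynomial factor in the success probability, and the number of scales is where the $\log n$ in $D$ comes from. $R=\Theta(n^{1/k})$ repetitions give constant success probability, and $R=\Theta(n^{1/k}\log n)$ gives all pairs w.h.p. This is the radius-growth scheme you allude to for Theorem~\ref{thm:bourgain_distortion}, but your argument never uses it. (As literally stated---one fixed pair, existence only---the theorem is even weaker: a singleton landmark on a $u_1$--$u_2$ geodesic, repeated $D$ times, gives $\bar{d}=d$.)
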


The distortion bounds in Theorems \ref{thm:bourgain_distortion} and \ref{thm:sarma_distortion} rely on embeddings that are optimal in an information-theoretic sense, i.e., in the sense that they match known lower bounds on the minimum dimension required to preserve all pairwise distances (within a prescribed distortion) in worst-case metrics \citep{Bourgain85,Mat96, Sarma2010ASD}. More precisely, these embeddings achieve the best possible trade-off between dimension and distortion up to constant factors, independent of computational considerations. However, Theorems~\ref{thm:bourgain_distortion} and 
\ref{thm:sarma_distortion} only guarantee the existence of embeddings achieving the stated 
dimension--distortion trade-offs; they do not ensure that the landmark construction 
described in Section \ref{landmark-embeddings} realizes them. 

Indeed, the choice of landmark sampling is critical. \newcite{Sarma2010ASD} advocate a multiscale sampling scheme in which landmark sets $S_i$ are drawn at exponentially increasing scales, with $|S_i| = 2^i$ for $i = 0,\ldots,r$. This multiresolution design serves complementary roles in controlling the two estimators $\underline{d}$ and $\bar{d}$. For the lower bound, small-scale landmark sets are required to place a landmark within the $\sigma_1 d(u_1,u_2)$-hop neighborhood of $u_1$ while avoiding the larger $\sigma_2 d(u_1,u_2)$-hop neighborhood of $u_2$, with $0<\sigma_1<\sigma_2$ satisfying $\sigma_1+\sigma_2<1$ (see Figure \ref{fig:examples_lb}). Conversely, for the upper bound, larger landmark sets allow w.h.p. the presence of at least one landmark in the overlap of the $\lceil d(u_1,u_2)/2\rceil$-hop neighborhoods of $u_1$ and $u_2$. We adopt this sampling scheme in our analysis, generating sets of size $|S_0|=M^0$, $|S_1|=M^1$, \ldots, $|S_r|=M^r$ for some integer $M>1$.

Another consideration is mitigating the effect of poor quality sample sets. To address it, we sample $R$ sets of each size. When $R=\Omega(n^{1/c})$ and the landmark set sizes grow exponentially with $r=O(\log n)$, the total embedding dimension scales as $\Theta(n^{1/c}\log n)$. Under this regime, the resulting distance estimates satisfy the guarantees of Theorems~\ref{thm:bourgain_distortion} and~\ref{thm:sarma_distortion} simultaneously for all node pairs on arbitrary graphs w.h.p.

While these guarantees hold for any graph, they may be overly pessimistic for 
structured networks. In the following sections, we show that for inhomogeneous random 
graphs---a broad class of models capturing sparsity, community structure, and 
heterogeneous connectivity, significantly tighter distortion--dimension trade-offs 
are achievable.

\section{Inhomogeneous Random Graphs (IHGs)} \label{sec:theory}

In the following we formally introduce the inhomogeneous random graph model. This model captures structural heterogeneity through type-dependent connectivity and provides a tractable probabilistic framework for studying neighborhood expansion and distance behavior. We then formalize a set of assumptions under which the graph exhibits controlled growth and concentration properties, enabling a precise characterization of neighborhood dynamics (Section \ref{growth}) that will be used in our main results (Section \ref{main-res}).

\subsection{Graph Model}

The inhomogeneous graph model with $n$ nodes and $T$ distinct node types will be denoted $IHG(\vec{n},D)$, where $\vec{n} = [n_1, \dots, n_{T}] \in \mathbb{N}^{T}$ is the \emph{type size vector} and $D \in \mathbb{R}^{{T} \times {T}}$ is the \emph{affinity} (or \emph{connectivity}) matrix.
There are $n_t$ nodes of each type $t \in [T]=\{1, \dots, {T}\}$, and each node belongs to exactly one type. The probability of an edge between a node of type $p$ (parent) and a node of type $c$ (child) is
\(P_{pc} = D_{pc}/n_c\) with $0\leq D_{pc}\leq n_c$ \citep{chung2002average, bollobas2007phase}.

The matrix $D$ is non-negative and has a symmetric support, meaning $D_{ij} > 0$ if and only if $D_{ji} > 0$. However, $D$ is not necessarily symmetric: even in undirected graphs where $P_{pc} = P_{cp}$, differing type sizes with $n_c \neq n_p$ result in $D_{pc} \neq D_{cp}$. Consequently, the spectral radius $\rho(D)$ is at most its spectral norm $\|D\|_2$ but not necessarily equal, and the largest magnitude eigenvalue $\lambda_1$ has value at most $\|D\|_2$.

This model captures heterogeneity in network structure by allowing edge probabilities to depend on the types of the endnodes, rather than being uniform across all nodes. At the same time, it generalizes the classical Erd\H{o}s--R\'enyi random graph since, if $T=1$, then the model reduces to an Erd\H{o}s--R\'enyi graph with connection probability $p = D_{11}/{n_1}$. The type-dependent connectivity matrix $D$ controls the expected number of neighbors of each type and introduces structured variability in local neighborhoods, which can better reflect realistic networks where nodes exhibit group-specific interaction patterns.

\subsection{Model Assumptions}\label{assumptions}

The structure of the affinity matrix $D$---whether it is reducible or irreducible, periodic or aperiodic---determines the asymptotic growth and support of its powers $D^k$. In particular, the entry $[D^k]_{ij}$ admits a natural combinatorial interpretation, corresponding to the total weighted number of length-$k$ type sequences through which interactions originating from type $j$ can influence type $i$. When $[D^k]_{ij} = \ell > 0$, this indicates that there exist $\ell$ distinct admissible walks of length $k$ in the type-interaction graph from type $j$ to type $i$, counted with multiplicity according to the weights of $D$. Consequently, nodes of type $i$ receive contributions from nodes of type $j$ through $\ell$ propagation channels after $k$ steps.

\begin{assumptionEnv}\label{assumption1}
The affinity matrix $D$ is primitive.
\end{assumptionEnv}

We begin with the case where $D$ is \emph{primitive}, i.e., irreducible and aperiodic, and subsequently extend the results to the imprimitive (irreducible but periodic) and reducible settings. If $D$ is irreducible, the Perron--Frobenius theorem ensures that its spectral radius $\rho(D)$ is a positive and simple eigenvalue $\lambda_1$ associated with strictly positive left and right eigenvectors. If, in addition, $D$ is primitive, then the convergence of powers of $D$ toward the Perron eigendirection is uniform and non-oscillatory. In particular, \([D^k]_{ij} = \Theta(\lambda_1^k)\) \cite[Theorem 8.5.1]{horn2012matrix} for sufficiently large $k$ for all type pairs $(i,j)$.

\begin{assumptionEnv}\label{assumption2}
The affinity matrix $D$ is uniformly supercritical; that is, there exists a constant $\epsilon > 0$ such that $\lambda_1(D) \geq 1 + \epsilon$ for all $n$, where $\lambda_1(D)$ denotes the Perron-Frobenius eigenvalue of $D$.
\end{assumptionEnv}

In this regime, where $\lambda_1 > 1$ is strictly bounded away from the critical threshold, we ensure the existence of a unique giant component with high probability. This condition guarantees the exponential growth of the entries of the matrix power $[D^k]_{ij}$. Specifically, since $\lambda_1 \geq 1 + \epsilon$, the expected number of walks of length $k$ between any pair of types grows as $\Theta(\lambda_1^k)$, which remains non-vanishing for $k = \Theta(\log n)$. Conversely, if $\lambda_1 \leq 1$, the expected number of walks decays or grows at most sub-exponentially, precluding macroscopic connectivity. This aligns with the phase transition thresholds established in inhomogeneous random graph theory \cite{bollobas2007phase} (Theorems 3.1 and 3.12), where the emergence of a giant component is guaranteed in the strictly supercritical regime $\rho(D) > 1$.

\begin{assumptionEnv}\label{assumption4}
There exist constants $\alpha_t > 0$ such that $\sum_{t=1}^T \alpha_t = 1$ and the number of nodes of type $t$, denoted $n_t$, satisfies ${n_t}/{n} \to \alpha_t$.
\end{assumptionEnv}

Assumption \ref{assumption4} ensures that each type represents a non-vanishing fraction of the total node population. Because each type contains a linear number of nodes, edge counts concentrate sharply around their expectations, with the number of potential edges between any two types $p$ and $c$ scaling as $\Theta(n^2)$. Consequently, the expected degree contributions $d_{pc}$ serve as consistent, first-order descriptors of pairwise connectivity. This structure guarantees sufficient regularity for the multi-type branching process approximation, as local exploration processes are not distorted by vanishingly small or volatile type populations.

Moreover, because the number of types $T$ is fixed, the empirical type distribution $\nu_n = (n_1/n, \dots, n_T/n)$ converges to a stable limiting distribution $\nu$ as $n \to \infty$. This stability ensures that the affinity matrix $D$ maintains a constant, finite dimension, allowing the macroscopic interaction structure of the graph to remain well-defined and analytically tractable in the asymptotic limit.

\section{Embedding Distortion in IHGs with Primitive Affinity Matrices}

Let $G\sim IHG(\vec{n},D)$. Our main results establish $(1\pm\varepsilon)$-distortion guarantees for landmark-based distance-preserving embeddings of $G$, under suitable scaling of the number of landmark sets and their sizes. The supporting neighborhood growth and intersection lemmas are developed in Section~\ref{growth}.

\subsection{$(1\pm\varepsilon)$-Approximation Guarantees} \label{main-res}

For integers $M, r > 1$, we sample $R$ landmark sets of sizes $M^0, M^1, \ldots, M^r$. Our main results demonstrate that by appropriately scaling $R$ and $r$ as functions of the graph size $n$ and the error tolerance $\varepsilon$, the lower bound estimator $\underline{d}(u_1,u_2)$ (Theorem~\ref{thm:lower-bound}) and the upper bound estimator $\bar{d}(u_1,u_2)$ (Theorem~\ref{thm:upper-bound}) simultaneously achieve $(1\pm\varepsilon)$ multiplicative accuracy w.h.p. This multi-scale sampling scheme guarantees a strictly tighter embedding dimension requirement than classical worst-case bounds.

\begin{theoremEnv}[Lower-Bound Distortion]\label{thm:lower-bound}
Let $u_1$ and $u_2$ be any two nodes chosen uniformly at random from the graph $G\sim IHG(\vec{n},D)$ that satisfies Assumptions~\ref{assumption1}-\ref{assumption4}. 
Let $\underline{d}(u_1,u_2)$ be the lower bound on the shortest distance $d(u_1,u_2)$ as defined in Section \ref{landmark-embeddings}. Let $\varepsilon \in (0,1)$ and $\varsigma>0$ be arbitrarily small.
Define
\begin{align*}
&(1) \;\theta \in (0,\varepsilon), \\
&(2) \;r = \left\lfloor \frac{\theta}{\log M}\log n \right\rfloor, \\
&(3) \;R = \Omega\left(n^{1-\theta-\min\left\{\frac{\varepsilon}{2},\varepsilon -\theta\right\}+\varsigma}\right).
\end{align*}
Then w.h.p., 
$$(1-\varepsilon) d(u_1,u_2) \leq \underline{d}(u_1,u_2)\leq d(u_1,u_2),$$
i.e. $\underline{d}(u_1,u_2)$ provides a $(1-\varepsilon)$-approximation of $d(u_1,u_2)$.
\end{theoremEnv}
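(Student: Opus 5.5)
The upper inequality $\underline{d}(u_1,u_2)\le d(u_1,u_2)$ is deterministic. For every landmark set $S_j$ the map $u\mapsto \min_{s\in S_j} d(u,s)$ is $1$-Lipschitz with respect to the graph metric, so each coordinate of $\mathbf{x}_{u_1}-\mathbf{x}_{u_2}$ is at most $d(u_1,u_2)$ in absolute value. The work is therefore in the lower inequality. Here I will use the standard Bourgain/Matou\v{s}ek separation device. Put $d=d(u_1,u_2)$ and fix radii $\rho_1=\lfloor \sigma_1 d\rfloor$ and $\rho_2=\lceil \sigma_2 d\rceil$ with $\sigma_1+\sigma_2\ge 1-\varepsilon$. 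Suppose some landmark set $S$ hits the ball $B(u_1,\rho_1)$ but avoids the ball $B(u_2,\rho_2)$. Then $[\mathbf{x}_{u_2}]_S-[\mathbf{x}_{u_1}]_S\ge \rho_2-\rho_1$.

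A separation of size $\rho_2-\rho_1$ alone is not enough. So, as in the proof of Theorem~\ref{thm:bourgain_distortion}, I will telescope over a chain of radii. The tool is the typical-distance result for supercritical IHGs: under Assumptions~\ref{assumption1}--\ref{assumption4}, for uniformly random $u_1,u_2$ in the giant, $d(u_1,u_2)=(1+o_\mathbb{P}(1))\log_{\lambda_1} n$. The neighborhood growth lemmas of Section~\ref{growth} give two further facts:
\begin{itemize}
\item $|B(u,k)|=\Theta(W_u\lambda_1^k)$ uniformly for $k\le(1/2-\varsigma')\log_{\lambda_1}n$, where the branching-process martingale limit $W_u$ is bounded away from $0$ and $\infty$ w.h.p.;
\item $B(u_1,k_1)\cap B(u_2,k_2)=\emptyset$ whenever $k_1+k_2<d$.
\end{itemize}
With these, I choose $\rho_1=\frac{\varepsilon}{2}\log_{\lambda_1} n$ and $\rho_2=d-\rho_1-1$. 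Then $\rho_2-\rho_1\ge(1-\varepsilon)d$ directly, and no telescoping is needed. The ball $B(u_1,\rho_1)$ has size $\approx n^{\varepsilon/2}$.

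The ball $B(u_2,\rho_2)$ is larger than $\sqrt{n}$, so the growth lemma does not apply to it directly. To bound it, I will use the complement: in the giant, the set of vertices at distance $> d-\rho_1-1$ from $u_2$ has size at least of order $n^{1-\cdots}$, by exploring from $u_1$ symmetrically. What I actually need is only an upper bound $|B(u_2,\rho_2)|\le (1-c)n$, or more precisely $\le n - n^{1-\eta}$. I get this from the balls around $u_1$ of radius $\rho_1$, which lie entirely outside $B(u_2,\rho_2)$. I expect this count to be where the $\min\{\varepsilon/2,\varepsilon-\theta\}$ in the exponent of $R$ comes from.

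Next, the sampling calculation. Take the scale $j$ with $M^j\approx n^{\theta}$; this is exactly $j=r$ given the choice $r=\lfloor \theta\log n/\log M\rfloor$. A uniform set $S$ of size $M^r$ hits $B(u_1,\rho_1)$ with probability about $\min\{1,M^r|B(u_1,\rho_1)|/n\}$. It avoids $B(u_2,\rho_2)$ with probability about $(1-|B(u_2,\rho_2)|/n)^{M^r}$. Balancing the two events and optimizing over the available scale, the probability $p$ that a single set succeeds is at least $n^{-(1-\theta-\min\{\varepsilon/2,\varepsilon-\theta\})}$:
\begin{itemize}
\item the $\varepsilon/2$ branch is the regime where the hitting event is the bottleneck;
\item the $\varepsilon-\theta$ branch is the regime where the ball is big enough that avoidance with $n^\theta$ points limits the gain.
\end{itemize}
With $R$ independent copies, the failure probability is $(1-p)^R\le \exp(-Rp)\le \exp(-n^{\varsigma})$, which is $o(1)$, and even $o(n^{-2})$, so it would survive a union bound.

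Finally, intersect with the high-probability events that $u_1\leftrightarrow u_2$ and that the growth and distance estimates hold; on their complement the conclusion is vacuous or has probability $o(1)$. I expect the main obstacle to be the uniform-in-$n$ control of $|B(u_2,\rho_2)|$ at radii beyond $\tfrac12\log_{\lambda_1}n$, where the branching-process coupling breaks down. There I would first try a second-moment or exploration argument from $u_1$, showing that a polynomial fraction of vertices remains at distance greater than $\rho_2$ from $u_2$.
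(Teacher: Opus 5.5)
There is a genuine gap in the avoidance step. It is the step you flag as the main obstacle, but you misdiagnose what it needs. A landmark set of size $m$ avoids $B(u_2,\rho_2)$ with probability $(1-|B(u_2,\rho_2)|/n)^m$. Your balancing therefore only works if the outer ball is polynomially small, $|B(u_2,\rho_2)|\lesssim n/m$. With $\rho_2\approx(1-\varepsilon/2)\log_{\lambda_1}n$, you need $|B(u_2,\rho_2)|\le n^{1-\varepsilon/2+o(1)}$ in order to use a scale $m\approx n^{\min\{\theta,\varepsilon/2\}-o(1)}$.

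The bound you say suffices, $|B(u_2,\rho_2)|\le n-n^{1-\eta}$ (from $B(u_1,\rho_1)\subseteq B(u_2,\rho_2)^c$), only guarantees $(1-|B(u_2,\rho_2)|/n)^m\ge n^{-\eta m}$. This forces $m=O(1)$ and a per-set success probability of order $n^{\varepsilon/2-1}$, hence $R\approx n^{1-\varepsilon/2}$. That is strictly worse than the claimed $n^{1-\theta-\min\{\varepsilon/2,\varepsilon-\theta\}}$ for every $\theta>0$. For the same reason, the $\min\{\varepsilon/2,\varepsilon-\theta\}$ cannot come from counting the complement. Your proposed second-moment or exploration argument (showing that many vertices stay far from $u_2$) targets the wrong quantity: you need an upper bound on the ball, not a lower bound on what lies outside it.

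The missing ingredient is elementary, and it also removes your worry that the growth lemma stops at $\tfrac12\log_{\lambda_1}n$. Lemma~\ref{lem:neighborhood-upper-bound} gives $\mathbb{E}|N_k(u)|=O(\lambda_1^k)$ at every radius $k=\Theta(\log n)$, including radii beyond $\tfrac12\log_{\lambda_1}n$. It is a first-moment path count and needs no branching-process coupling. Apply Markov's inequality at a deterministic radius that dominates $\rho_2$ w.h.p.; this is legitimate because balls are nested and $d\le(1+\epsilon)\log_{\lambda_1}n$ w.h.p. That bounds the outer ball.

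The paper uses exactly this bound but parametrizes differently. It keeps only the top scale $M^r\approx n^{\theta}$ and takes $k_1=\varepsilon' d$ and $k_2=(1-\varepsilon+\varepsilon')d$ with $\varepsilon'<\min\{\varepsilon/2,\varepsilon-\theta\}$. The condition $\varepsilon'<\varepsilon/2$ gives $k_1+k_2<d$. The condition $\varepsilon'<\varepsilon-\theta$ gives $k_2<(1-\theta)\log_{\lambda_1}n$, hence $|N_{k_2}(u_2)|\le Tn^{\gamma}$ with $\gamma+\theta<1$ and $(1-2Tn^{\gamma}/n)^{n^{\theta}}\to 1$. That is where $\varepsilon-\theta$ enters.

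Once you add the first-moment bound, your own route also works: fix the radii and pick the scale nearest $n^{\min\{\theta,\varepsilon/2\}}$. It gives the same exponent up to losses absorbed by $\varsigma$, since $1-\min\{\theta,\varepsilon/2\}-\varepsilon/2=1-\theta-\min\{\varepsilon/2,\varepsilon-\theta\}$. So the paper trades radius for a fixed scale, and you trade scale for fixed radii.

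A smaller point concerns your choice $\rho_1=\frac{\varepsilon}{2}\log_{\lambda_1}n$. It gives $\rho_2-\rho_1=d-\varepsilon\log_{\lambda_1}n-1$, which is at least $(1-\varepsilon)d$ only if $d\ge\log_{\lambda_1}n+1/\varepsilon$. The estimate $d=(1+o_{\mathbb{P}}(1))\log_{\lambda_1}n$ does not guarantee this. Set $\rho_1=\lfloor\varepsilon' d\rfloor$ with $\varepsilon'<\varepsilon/2$ instead, as the paper does.
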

\begin{theoremEnv}[Upper-Bound Distortion]\label{thm:upper-bound}
Let $u_1$ and $u_2$ be any two nodes chosen uniformly at random from the graph $G\sim IHG(\vec{n},D)$ that satisfies Assumptions~\ref{assumption1}-\ref{assumption4}. 
Let $\bar{d}(u_1,u_2)$ be the upper bound on the shortest distance $d(u_1,u_2)$ as defined in Section \ref{landmark-embeddings}. Let $\varepsilon \in (0,1)$ and $\varsigma>0$ be arbitrarily small. Define
\begin{align*}
&(1) \;\theta \in \left(0,\frac{1-\varepsilon}{2}\right), \\
&(2) \;r = \left\lfloor \frac{\theta}{\log M}\log n \right\rfloor, \\
&(3) \;R = \Omega\left(
n^{1-\varepsilon+\varsigma}\right).
\end{align*}
Then w.h.p.,
$$ d(u_1,u_2) \leq \bar{d}(u_1,u_2)\leq (1+\varepsilon) d(u_1,u_2),$$
i.e. $\bar{d}(u_1,u_2)$ provides a $(1+\varepsilon)$-approximation of $d(u_1,u_2)$.
\end{theoremEnv}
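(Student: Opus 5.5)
The lower bound $\bar{d}(u_1,u_2)\ge d(u_1,u_2)$ is immediate from the triangle inequality: any admissible pair $(i,j)$ shares a landmark $s$, so $[\mathbf{x}_{u_1}]_i+[\mathbf{x}_{u_2}]_j=d(u_1,s)+d(s,u_2)\ge d(u_1,u_2)$. For the upper bound, set $d=d(u_1,u_2)$. We use the standard fact for supercritical IHGs with primitive $D$ (Bollob\'as--Janson--Riordan) that, for typical $u_1,u_2$ in the giant component, w.h.p. $d=(1+o(1))\log n/\log\lambda_1$. (If $u_1\not\leftrightarrow u_2$ both sides equal $\infty$, so we condition on $u_1\leftrightarrow u_2$.) Write $B_k(u)$ for the $k$-hop ball. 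The plan is to exhibit a single landmark $s$ with $d(u_1,s)+d(u_2,s)\le(1+\varepsilon)d$ that is, in some set $S_i$, the closest landmark to both $u_1$ and $u_2$.

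\textbf{Step 1 (neighborhood growth).} Invoke the growth lemmas of Section~\ref{growth}, which come from the multi-type branching process coupling. Conditional on survival, $|B_k(u)|=\Theta(\lambda_1^k)$ w.h.p. for $k\le(1/2-\eta)\log_{\lambda_1}n$, and the type profile of the boundary aligns with the Perron vector. Beyond that radius the growth continues up to order $n$ by the standard second-moment and sprinkling arguments. We then need an intersection statement. Set $k_1=\lceil (1+\varepsilon)d/2\rceil$. The set
\[
I=\{v:\ d(u_1,v)+d(u_2,v)\le (1+\varepsilon)d\}
\]
contains pairs of balls $B_{a}(u_1)\cap B_{b}(u_2)$ with $a+b\le(1+\varepsilon)d$. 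Since balls of radius $(1/2+\varepsilon/2)\log_{\lambda_1}n$ have size $\gg\sqrt n$, the number of vertices at which the two explorations meet with slack $\varepsilon d$ is of order $\lambda_1^{\varepsilon d}=n^{\varepsilon+o(1)}$. So we aim to prove $|I|\ge n^{\varepsilon-o(1)}$ w.h.p. This is the main obstacle. It requires a second-moment or path-counting argument in the spirit of distance results for IHGs: count the number of near-geodesic paths of length at most $(1+\varepsilon)d$ and show their midpoints are numerous, with primitivity guaranteeing that $[D^k]_{ij}=\Theta(\lambda_1^k)$ uniformly over types. I would first try to grow both balls to radius $d/2$, which gives sizes of about $\sqrt n$ with fixed boundary type profiles, and then extend each by $\varepsilon d/2$ further steps. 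The extended balls have sizes $\sqrt n\,n^{\varepsilon/2}$ each, and conditionally on the explorations the expected overlap is $\Theta(n^{\varepsilon})$, which concentrates.

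\textbf{Step 2 (landmark hitting and closest-landmark agreement).} Take landmark sets $S_i$ of size $M^i$ with $M^i\le n^{\theta}$, so $r=\lfloor\theta\log n/\log M\rfloor$. A uniform set of size $m$ hits $I$ with probability about $m|I|/n$. Among the $R$ independent copies at the largest scale $m=M^r\approx n^{\theta}$, the probability that none hits $I$ is at most
\[
(1-n^{\theta+\varepsilon-1-o(1)})^R.
\]
This is already small once $R\gg n^{1-\varepsilon-\theta}$, so $R=n^{1-\varepsilon+\varsigma}$ gives failure probability $\exp(-n^{\varsigma'})$. Because $m\le n^\theta$ with $\theta<(1-\varepsilon)/2$, the same set must also avoid $B_{(1+\varepsilon)d/2}(u_1)\cup B_{(1+\varepsilon)d/2}(u_2)\setminus I$ in a way that makes the hitting landmark the argmin for both. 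More simply, we want the set to have exactly one point in $B_{k}(u_1)\cup B_k(u_2)$ for the relevant $k$. Since this union has size at most $n^{(1+\varepsilon)/2+o(1)}$, the expected number of landmarks in it is at most $n^{\theta+(1+\varepsilon)/2-1}=o(1)$ exactly when $\theta<(1-\varepsilon)/2$. So conditional on hitting $I$, w.h.p. no other landmark lies closer to $u_1$ or to $u_2$. Ties are broken consistently, which is why the condition on $\theta$ appears. That landmark $s$ then satisfies $[\boldsymbol{\sigma}_{u_1}]_i=[\boldsymbol{\sigma}_{u_2}]_i=s$, giving $\bar d\le d(u_1,s)+d(u_2,s)\le(1+\varepsilon)d$.

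\textbf{Step 3.} Finally, intersect the w.h.p. events: typical distance, the growth bounds, the lower bound on $|I|$, and the hitting and uniqueness events, the last by a union bound over the $R$ copies. The hardest step is the lower bound on $|I|$ in Step 1, specifically controlling the dependence between the two explorations near radius $\tfrac12\log_{\lambda_1}n$, where the branching approximation breaks down.
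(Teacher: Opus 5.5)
Your outline follows the same route as the paper's proof. Both arguments restrict to $u_1,u_2$ in the giant component and use $d=d(u_1,u_2)=(1+o(1))\log_{\lambda_1}n$. Both take $k\approx\frac{1+\varepsilon}{2}d$ and bound $|N_k(u_1)\cup N_k(u_2)|\le n^{(1+\varepsilon)/2+o(1)}$ via Lemma~\ref{lem:neighborhood-upper-bound} and Proposition~\ref{prop:neighborhood-growth}. That proposition also supplies the growth past $\frac12\log_{\lambda_1}n$ that you attribute to sprinkling. Both then lower-bound the overlap by roughly $\lambda_1^{2k}/n=n^{\varepsilon+o(1)}$, and use $\theta<\frac{1-\varepsilon}{2}$ precisely so that at most $n^\theta$ landmarks w.h.p.\ miss the rest of the union.

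The step you flag as the main obstacle is exactly Proposition~\ref{prop:intersection-growth}. The paper proves it separately by the scheme you sketch: the conditional expectation of $|\partial N_{k_1}(u_1)_t\cap\partial N_{k_2}(u_2)_t|$ given the two explorations, followed by Chernoff. The theorem's proof simply invokes it with $k_1=k_2=k$ after checking $L<k<\log_{\lambda_1}n$ and $2k>(1+\rho)\log_{\lambda_1}n_t$. You can cite it rather than re-derive it.

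Your bookkeeping differs from the paper's in two ways, and both are improvements.
\begin{itemize}
\item You credit all $M^r$ landmarks of a set in the hitting probability, so your count shows $R\ge n^{1-\varepsilon-\theta+\varsigma}$ would already suffice. The paper's per-set success probability $\frac{|N_k(u_1)\cap N_k(u_2)|}{n}(\cdots)^{M^i-1}$ credits only one landmark, which is why it needs $R=\Omega(n^{1-\varepsilon+\varsigma})$.
\item You require exactly one landmark in $N_k(u_1)\cup N_k(u_2)$, which genuinely forces $[\boldsymbol{\sigma}_{u_1}]_i=[\boldsymbol{\sigma}_{u_2}]_i$. The paper's event $Z_{ij}$ allows two landmarks in the intersection, whose nearest-landmark assignments for $u_1$ and $u_2$ could differ.
\end{itemize}

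Two small corrections are needed.

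First, the target set must be $B_k(u_1)\cap B_k(u_2)$ with $k=\lfloor(1+\varepsilon)d/2\rfloor$, not your slack set $I$. Use the floor so that $2k\le(1+\varepsilon)d$. The problem with $I$: for instance $u_2\in I$, yet $d(u_1,u_2)=d>k$. For a hit outside $B_k(u_1)$, your uniqueness condition says nothing about which landmark is nearest to $u_1$. Your Step~1 count is already a count of the ball intersection, so nothing else changes.

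Second, drop the union bound over the $R$ copies. A union bound of the uniqueness event over all $R=n^{1-\varepsilon+\varsigma}$ copies would not be small, and you do not need it: one good copy suffices. At the top scale, each copy is good with probability $(1-o(1))M^r|B_k(u_1)\cap B_k(u_2)|/n$, independently given $G$. Copies with extra landmarks can only add admissible pairs, each of value at least $d$, so they never hurt.
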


In the supercritical regime where $\lambda_1 > 1$, the graph $G$ contains a giant connected component, and two nodes $u_1$ and $u_2$ chosen independently and uniformly at random lie in the same connected component with probability bounded away from zero, and the probability of $u_1$ and $u_2$ not in the giant component is negligible as Theorems 3.1 and 3.12 from \cite{bollobas2007phase} imply
$$\PR(u_1 \leftrightarrow u_2 \text{ but } u_1,u_2\notin \mathcal{C}_{\sss (1)}\mid G) = \frac{1}{n^2}\sum_{i\geq 2} |\mathcal{C}_{\sss (i)}|^2 \leq \frac{|\mathcal{C}_{\sss (2)}|}{n} \xrightarrow{\sss \PR} 0,$$
where $\mathcal{C}_{\sss (i)}$ denotes the $i$-th largest connected component.
The key to both theorems is understanding how the neighborhoods of $u_1$ and $u_2$, conditionally on being in the giant component, grow and interact as a function of spectral properties of $D$. Here, we provide a proof sketch and defer the complete proofs to Sections \ref{pf:lower-bound} and \ref{pf:upper-bound}.

\paragraph{Proof Sketch.} 
For any node $u$ and positive integer $k$, let $N_k(u)$ denote the set of nodes at graph distance at most $k$ from $u$ and $\partial N_k(u)$ denote the set of nodes at distance exactly $k$. Let $V_t$ denote nodes of type $t$ and so $n_t=|V_t|$. We further write $\partial N_k(u)_t = \partial N_k(u) \cap V_t$ and $N_k(u)_t = N_k(u) \cap V_t$. We also denote $\overrightarrow{|\partial N_k(u)|}=\left( |\partial N_k(u)_1|, |\partial N_k(u)_2|, \dots, |\partial N_k(u)_T| \right)^\top$ as the vector of type-specific boundary sizes and $t(u)$ as the type of node $u$. Let $e_t$ be the standard basis vector in $\mathbb{R}^T$.

The following result shows that conditionally on $u_1$ and $u_2$ being in the giant component, neighborhoods expand exponentially at rate $\lambda_1$ at radius $L=\Theta(\log n)$ and continue to grow exponentially for an additional $k = O(\log n)$ generations w.h.p. Specifically, letting $\mathcal{E}_{n,k}$ denote the event that the neighborhood sizes $\overrightarrow{|\partial N_{L+k}(u_i)|}$ lie within a $(1\pm\varepsilon)$ multiplicative window around the deterministic prediction $e_{t(u_i)}^\top D^{L+k}$ for $i\in \{1,2\}$, we have the following:

\begin{propositionEnv}\label{prop:neighborhood-growth}
Let $u_1$ and $u_2$ be any two nodes in the giant component of $G\sim IHG(\vec{n},D)$ that satisfies Assumptions~\ref{assumption1}-\ref{assumption4}. Then $\cap_{l=0}^{k}\mathcal{E}_{n,l}$ 
occurs w.h.p.\ for any $L+k < \log_{\lambda_1} n$.
\end{propositionEnv}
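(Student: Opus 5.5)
We would prove Proposition~\ref{prop:neighborhood-growth} by coupling the breadth-first exploration of $u_i$ with a multi-type Galton--Watson branching process with Poisson offspring, and then propagating concentration generation by generation. Conditionally on $|\partial N_j(u)|$ and $N_{j-1}(u)$, each unexplored vertex of type $c$ joins $\partial N_{j+1}(u)$ independently. Its probability of doing so is $1-\prod_{p}(1-D_{pc}/n_c)^{|\partial N_j(u)_p|}$, where $p$ ranges over parent types. Hence
$$\mathbb{E}\bigl[\,|\partial N_{j+1}(u)_c| \mid \mathcal{F}_j\bigr] = \bigl(n_c-|N_j(u)_c|\bigr)\Bigl(1-\prod_p (1-D_{pc}/n_c)^{|\partial N_j(u)_p|}\Bigr),$$
which equals $\sum_p |\partial N_j(u)_p| D_{pc}\,(1+O(|N_j(u)|/n))$ as long as $|N_j(u)|=o(n)$. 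In vector form this is $\overrightarrow{|\partial N_j(u)|}^\top D$ up to a multiplicative error $1+O(|N_j|/n)$. The generations are conditionally independent sums of Bernoullis, so Chernoff bounds give
$$|\partial N_{j+1}(u)_c| = \mathbb{E}[\cdot\mid\mathcal{F}_j]\bigl(1\pm \eta_j\bigr), \qquad \eta_j = \sqrt{C\log n/\mathbb{E}[\cdot\mid\mathcal{F}_j]},$$
with failure probability $n^{-C'}$ per generation and type. A union bound over $O(\log n)$ generations, $T$ types and the two nodes keeps the total failure probability $o(1)$.

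\textbf{Step 1 (early generations).} Use the giant-component conditioning together with Assumptions~\ref{assumption1}--\ref{assumption2}. The Kesten--Stigum theorem for the supercritical multi-type branching process, whose mean matrix $D$ is primitive with $\lambda_1\ge 1+\epsilon$, gives $\lambda_1^{-j}Z_j \to W\, v^\top$ almost surely, where $v$ is the left Perron vector and $W>0$ on survival. Survival of the branching process corresponds to $u_i\in\mathcal{C}_{(1)}$. Hence there is a burn-in radius $L_0=\Theta(\log\log n)$ (or a slowly growing radius) at which $|\partial N_{L_0}(u_i)|\ge (\log n)^{K}$. At that point the boundary vector is $W\lambda_1^{L_0}v^\top(1+o(1))$, which by Perron--Frobenius (primitivity, $[D^k]_{ij}=\Theta(\lambda_1^k)$) matches $e_{t(u_i)}^\top D^{L_0}$ up to the random factor $W$. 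The multiplicative window in $\mathcal{E}_{n,k}$ is interpreted accordingly, with $W$ absorbed as in the statement's ``prediction'' normalized by the Perron direction. The coupling error up to $L_0$ is $O(\mathrm{polylog}(n)/n)$.

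\textbf{Step 2 (bulk propagation).} From $L_0$ up to $L+k<\log_{\lambda_1} n$, iterate the conditional-expectation identity. The relative errors compound as
$$\prod_j (1+\eta_j)(1+O(|N_j|/n)).$$
Since $|\partial N_j|$ grows geometrically at rate $\lambda_1$ from size $(\log n)^K$, the sum $\sum_j\eta_j$ is a geometric series dominated by its first term, namely $O((\log n)^{(1-K)/2})=o(1)$. Likewise, $\sum_j |N_j|/n = O(\lambda_1^{L+k}/n)$, which is $o(1)$ when $L+k\le(1-\delta)\log_{\lambda_1}n$. Primitivity ensures that the boundary vector stays aligned with the Perron direction, so all type coordinates grow comparably. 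In particular, no coordinate is small enough for the Chernoff bound to fail.

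\textbf{Main obstacle.} The hardest step is the regime where $L+k$ approaches $\log_{\lambda_1}n$, since there the depletion term $|N_j|/n$ is no longer negligible. We would first establish the result for $L+k\le(1-\delta)\log_{\lambda_1}n$ for every fixed $\delta>0$, and then handle the remaining range by a diagonal argument with $\delta=\delta_n\to0$ slowly. The other delicate point is the interaction between $u_1$ and $u_2$: the two explorations must be treated jointly. Here we would bound the probability that the neighborhoods collide before radius $(L+k)$ by $O(\lambda_1^{2(L+k)}/n)$ in expectation of shared vertices. Rather than requiring no collision, we would simply run the explorations sequentially, so that each vertex already explored by one node is accounted for in the depletion term of the other.
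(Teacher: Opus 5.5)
Your propagation argument is sound, but it proves a corrected version of the proposition, and you are right that a correction is needed. On survival, Kesten--Stigum gives $\lambda_1^{-L}\overrightarrow{|\partial N_L(u_i)|}^\top\to W_i v^\top$ with $W_i$ non-degenerate. On the other hand, $\lambda_1^{-L}e_{t(u_i)}^\top D^L\to h_{t(u_i)}v^\top$, where $Dh=\lambda_1 h$, $v^\top D=\lambda_1 v^\top$ and $v^\top h=1$. So, given $u_i\in\mathcal{C}_{(1)}$, each ratio $|\partial N_L(u_i)_t|/[D^L]_{t(u_i)t}$ converges in law to the non-degenerate $W_i/h_{t(u_i)}$. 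Hence $\mathcal{E}_{n,0}$ does not occur w.h.p.\ for any fixed $\varepsilon\in(0,1)$. The paper gets the literal window only through Lemma~\ref{lem:equivalent-events}, whose step ``on survival $Z^i_k(j)$ grows like $[D^k]_{ij}$'' drops exactly this factor; in fact $\mathbb{P}(B_n\setminus A_n)\not\to0$. Say this openly and state what you prove, namely the window centred at $W_i\lambda_1^{L+k}v^\top$ (equivalently at $\overrightarrow{|\partial N_{L_0}(u_i)|}^\top D^{L+k-L_0}$), rather than saying $W$ is ``absorbed''.

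Compared with the paper, the real difference is where the propagation starts. The paper starts at $L=\kappa_0\log_{\lambda_1}n$, where every coordinate is already of order $n^{\kappa_0}$. There a uniform Chernoff deviation $\delta_n=n^{-\beta}$ per generation suffices, since $(1\pm n^{-\beta})^{O(\log n)}\to1$, and depletion is controlled by a Markov bound $|N_{L+k-1}(u_i)_t|\le n^{\gamma}$. The entire burn-in is outsourced to Lemma~\ref{lem:equivalent-events}. You start at polylogarithmic size after a $\Theta(\log\log n)$ burn-in and use generation-dependent deviations $\eta_j=O\bigl((\log n)^{(1-K)/2}\lambda_1^{-(j-L_0)/2}\bigr)$, whose sum is $o(1)$; nonnegativity of $D$ lets the coordinatewise errors simply add. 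This isolates the only genuine branching-process input (survival $\Leftrightarrow$ giant, plus $W_i>0$ and Perron alignment on survival) and avoids the faulty equivalence. The cheapest repair of the paper's route would instead rerun Lemma~\ref{lem:neighborhood-growth} from the realized generation-$L$ vector.

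Two parts of your plan should be fixed.

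\textbf{The range.} A diagonal argument with $\delta_n\to0$ reaches exactly $\log_{\lambda_1}n-(L+k)\to\infty$ and no further. If the gap stays bounded, then with non-vanishing probability (e.g.\ when $W_i$ is large) $|N_{L+k-1}(u_i)|$ is a positive fraction of $n$. The depletion factor $1-|N_{L+k-1}(u_i)_t|/n_t$ is then bounded away from $1$, and the $(1\pm\varepsilon)$ window fails. So ``any $L+k<\log_{\lambda_1}n$'' cannot be proved. Lemma~\ref{lem:neighborhood-growth} itself only covers $L+k\le(\kappa_0+\kappa)\log_{\lambda_1}n$ with $\kappa_0+\kappa<1$ fixed.

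\textbf{The joint exploration.} Running the explorations sequentially is unnecessary and, as described, does not compute $|\partial N_j(u_2)|$. Vertices of $N(u_1)$ can lie in $\partial N_j(u_2)$; beyond radius about $\tfrac12\log_{\lambda_1}n$ they typically do, cf.\ Proposition~\ref{prop:intersection-growth}. Moreover, edges already revealed are no longer fresh Bernoullis. Since $\mathcal{E}_{n,k}$ constrains each ball separately, explore each ball on its own and take a union bound over $i\in\{1,2\}$, as the paper does. Conditioning on $u_1,u_2\in\mathcal{C}_{(1)}$ then costs only a factor $1/\mathbb{P}(u_1,u_2\in\mathcal{C}_{(1)})=O(1)$.
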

\begin{proof}
See Section \ref{pf:neighborhood-growth-prop}.
\end{proof}

\begin{figure*}[ht!]
\centering
\includegraphics[height=9cm]{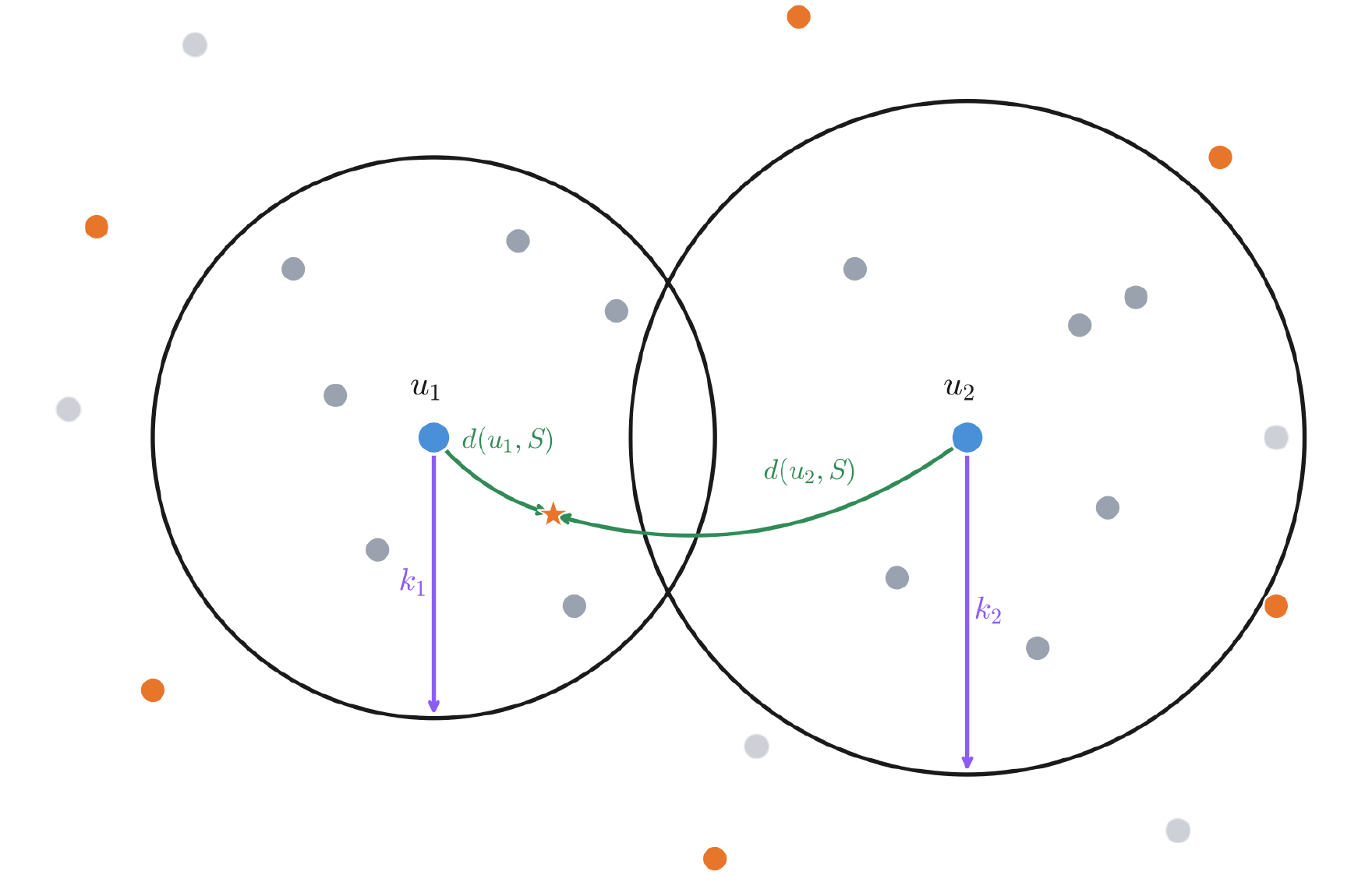}
\caption{Schematic depicting the computation of the lower bound $\underline{d}(u_1,u_2)$, where $k_2 - k_1 \geq (1-\varepsilon)d(u_1,u_2)$. Blue nodes are the source $u_1$ and target $u_2$, orange nodes are landmarks in set $S$, and gray nodes are arbitrary nodes.} 
\label{fig:examples_lb}
\end{figure*}

Since $D$ is primitive, $[D^k]_{ij} = \Theta(\lambda_1^k)$ for any type pair $(i,j)$, so Proposition~\ref{prop:neighborhood-growth} implies that neighborhoods grow exponentially at rate $\lambda_1^k$ for $1 \ll k \leq \log_{\lambda_1} n$. Hence, the local step of the landmark-based approximation, w.h.p., selects a landmark set $S$ that intersects $N_{k_1}(u_1)$ but not the disjoint $N_{k_2}(u_2)$, where $k_2 - k_1 \geq (1-\varepsilon)d(u_1,u_2)$. This yields the lower bound on distortion in Theorem~\ref{thm:lower-bound} as
\begin{equation*}
\underline{d}(u_1,u_2) = d(u_2,S)-d(u_1,S)\geq k_2 - k_1 \geq (1-\varepsilon)d(u_1,u_2) \quad \text{w.h.p.}
\end{equation*}
See Figure \ref{fig:examples_lb} for an illustration.

For Theorem~\ref{thm:upper-bound}, controlling the upper bound requires a landmark to fall in the \emph{intersection} $N_k(u_1) \cap N_k(u_2)$ for $k \leq \frac{1+\varepsilon}{2}d(u_1,u_2)$. This requires a finer analysis of how the two neighborhoods overlap, given by the following result:

\begin{propositionEnv}\label{prop:intersection-growth}
Let $u_1$ and $u_2$ be any two nodes in the giant component of $G\sim IHG(\vec{n},D)$ that satisfies Assumptions~\ref{assumption1}-\ref{assumption4}. Let $\varepsilon\in (0,1)$, $\kappa_0\in (0,1)$, $\kappa\in (0,1-\kappa_0)$, $L=\kappa_0\log_{\lambda_1} n$, and arbitrarily small $\rho > 0$. Then for any $t\in [T]$ and $L < k_1, k_2 \leq (\kappa_0+\kappa)\log_{\lambda_1} n$ satisfying 
$k_1 + k_2 \geq (1+\rho)\log_{\lambda_1} n_t$,
\begin{align*}
|\partial N_{k_1}(u_1)_t \cap \partial N_{k_2}(u_2)_t|
\in \left[
\frac{(1-\varepsilon)^3 c^2 \lambda_1^{k_1+k_2}}{2n_t},
\frac{(1+\varepsilon)^3 C^2 \lambda_1^{k_1+k_2}}{n_t}
\right] \quad \text{w.h.p.}
\end{align*}
for some constants $c, C > 0$.
\end{propositionEnv}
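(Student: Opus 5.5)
The plan is to split the exploration: grow each neighborhood only partway, and then count how many vertices of type $t$ are reached from both sides.

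\textbf{Step 1 (splitting).} For $k_1,k_2$ as in the statement, pick $\ell_1\le k_1$ and $\ell_2\le k_2$ with $\ell_i>L$. Choose them so that $\lambda_1^{\ell_1+\ell_2}\ll \sqrt{n}$, which makes the two explored neighborhoods $N_{\ell_1}(u_1)$ and $N_{\ell_2}(u_2)$ disjoint w.h.p. and small. Condition on these two explorations. Proposition~\ref{prop:neighborhood-growth}, applied up to radius $k_i<\log_{\lambda_1}n$ (valid since $\kappa_0+\kappa<1$), gives that w.h.p.
\[
\overrightarrow{|\partial N_{k_i}(u_i)|}\in[(1-\varepsilon),(1+\varepsilon)]\, e_{t(u_i)}^\top D^{k_i}.
\]
Primitivity of $D$ (Assumption~\ref{assumption1}) gives $[D^{k}]_{st}=\Theta(\lambda_1^{k})$. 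Hence $|\partial N_{k_i}(u_i)_t|\in[(1-\varepsilon)c\lambda_1^{k_i},(1+\varepsilon)C\lambda_1^{k_i}]$, with $c,C$ coming from the Perron eigenvectors.

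\textbf{Step 2 (first moment).} Reveal the exploration of $u_1$ up to depth $k_1$, and note that the set $A=\partial N_{k_1}(u_1)_t$ has size of order $\lambda_1^{k_1}$. Then explore from $u_2$ in the graph with $N_{k_1-1}(u_1)$ removed. This removed set has size $o(n)$, because $k_1<\log_{\lambda_1}n$ keeps it sublinear (tighten $\kappa$ if needed). Its removal changes type sizes by a factor $1-o(1)$, so the branching-process predictions persist.

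A vertex $v\in V_t$ lies in $\partial N_{k_2}(u_2)$ with probability about $|\partial N_{k_2}(u_2)_t|/n_t$. By exchangeability within type $t$, conditionally on sizes, $\partial N_{k_2}(u_2)_t$ is a roughly uniform subset of the unexplored type-$t$ vertices. Thus the intersection is approximately hypergeometric with mean
\[
\frac{|A|\,|\partial N_{k_2}(u_2)_t|}{n_t}\asymp\frac{\lambda_1^{k_1+k_2}}{n_t}.
\]
There is one subtlety: vertices of $A$ may be hit by the $u_2$-exploration at depth $k_2$ via paths through $A$ itself, giving distance $\le k_2$ rather than exactly $k_2$. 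This shifts the count only by lower-order terms, and it is the source of the factor $1/2$ in the lower bound. I would absorb it by a crude bound showing that at most half the mass is lost.

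\textbf{Step 3 (concentration).} The hypothesis $k_1+k_2\ge(1+\rho)\log_{\lambda_1}n_t$ makes the mean at least $n_t^{\rho}\to\infty$. A hypergeometric (or Chernoff for negatively associated indicators) bound then gives concentration within $(1\pm\varepsilon)$. Combining the three $(1\pm\varepsilon)$ factors (two neighborhood sizes, one concentration step) gives the stated $(1\pm\varepsilon)^3$ window with constants $c^2,C^2$. A union bound over $t\in[T]$ and over the $O(\log^2 n)$ pairs $(k_1,k_2)$ is affordable because the failure probabilities are $\exp(-n^{\Omega(\rho)})$ or polynomially small.

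\textbf{Main obstacle.} The hardest part is justifying the near-uniformity of $\partial N_{k_2}(u_2)_t$ relative to the already revealed $A$ while the neighborhoods are no longer tree-like, since $\lambda_1^{k_1+k_2}\gtrsim n$. I would first try a sequential edge-revelation coupling: the final generation of $u_2$ is formed by independent edges from $\partial N_{k_2-1}(u_2)$ to unexplored type-$t$ vertices, each present with probability $D_{st}/n_t$. Each $v\in A$ is hit independently with probability
\[
1-\prod_s\Big(1-\frac{D_{st}}{n_t}\Big)^{|\partial N_{k_2-1}(u_2)_s|}\approx \frac{[\,e_{t(u_2)}^\top D^{k_2}]_t}{n_t}.
\]
This probability is $o(1)$ provided $k_2-1<\log_{\lambda_1}n$. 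That independence yields a binomial count, which handles Step 2 directly and avoids the exchangeability argument.
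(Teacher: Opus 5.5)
Your skeleton matches the paper's. You condition on the exploration of $u_1$ and treat the last layer of $u_2$ as independent Bernoulli hits on $A=\partial N_{k_1}(u_1)_t$, with success probability $\approx\langle\overrightarrow{|\partial N_{k_2-1}(u_2)|},De_t\rangle/n_t$. The mean is of order $\lambda_1^{k_1+k_2}/n_t\ge n_t^{\rho}$, and you finish with Chernoff and a union bound over $t$ and the $O(\log^2 n)$ pairs; the window $(1\pm\varepsilon)^3$ and the factor $\tfrac12$ arise as in the paper.

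The gap is the step where you explore from $u_2$ in $G'=G\setminus N_{k_1-1}(u_1)$. Conditionally on $N_{k_1}(u_1)$, $G'$ is again an inhomogeneous random graph in which $A$ is an ordinary set of unrevealed vertices. That is why your independence and exchangeability claims hold there. But what you then count is $A\cap\partial N'_{k_2}(u_2)$, the sphere of radius $k_2$ around $u_2$ in $G'$, not in $G$. Your justification (type sizes change only by a factor $1-o(1)$) controls growth rates, not distances, and in the regime of the proposition the difference is not negligible. Since $k_1+k_2-2>(1+\rho/2)\log_{\lambda_1}n_t$ for large $n$, the set $\partial N_{k_1-1}(u_1)\cap\partial N_{k_2-1}(u_2)$ has size of order $\lambda_1^{k_1+k_2-2}/n\to\infty$. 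Almost all children in $A$ of these vertices lie in $\partial N_{k_2}(u_2)$ in $G$, and almost none of them is at distance $k_2$ from $u_2$ in $G'$. That is a constant fraction of the quantity you are estimating. Because $d_G\le d_{G'}$, your count still yields the lower bound once you show that $A\cap\partial N'_{k_2}(u_2)\cap N_{k_2-1}(u_2)$ is negligible. For the upper bound, however, your count is simply too small: you would have to bound separately the vertices of $A$ whose geodesics from $u_2$ all pass through $N_{k_1-1}(u_1)$.

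The paper does not remove anything. It conditions jointly on $N_{k_1}(u_1)$ and $N_{k_2-1}(u_2)$ in $G$, so the vertices of $A$ that cannot be hit at depth $k_2$ are exactly those in $A\cap N_{k_2-1}(u_2)$. Much of its proof shows that this set is $o(\lambda_1^{k_1})$. For the layers $j\le L$, Markov's inequality gives $|N_L(u_2)_t|\le n^{\gamma''}$. For $L<j<k_2$, a per-layer first-moment bound of order $\lambda_1^{k_1}n^{-\gamma}/\log n$ is upgraded to a tail bound by the Janson--\L{}uczak--Ruci\'nski inequality and a union bound over layers. If you drop the removal, this is the ``crude bound'' you still owe. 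In $G'$ it would follow at once from exchangeability, which is the real payoff of your device.

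Note also that in $G$ the edges from $\partial N_{k_1-1}(u_1)$ into $A$ were already revealed by the exploration of $u_1$. So the independent-Bernoulli description holds only for edges from $\partial N_{k_2-1}(u_2)\setminus N_{k_1-1}(u_1)$. That suffices for the lower bound, but an upper bound must add back the revealed hits identified above. The paper's conditional-expectation formula also passes over these; once bounded, they only enlarge $C$.

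Finally, Step~1 cannot be carried out in general and is never used, so drop it: $\ell_i>L$ forces $\lambda_1^{\ell_1+\ell_2}>n^{2\kappa_0}$, which is incompatible with $\lambda_1^{\ell_1+\ell_2}\ll\sqrt n$ once $\kappa_0\ge 1/4$.
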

\begin{proof}
See Section \ref{pf:intersection-growth}.
\end{proof}

\begin{figure*}[ht!]
\centering
\includegraphics[height=9cm]{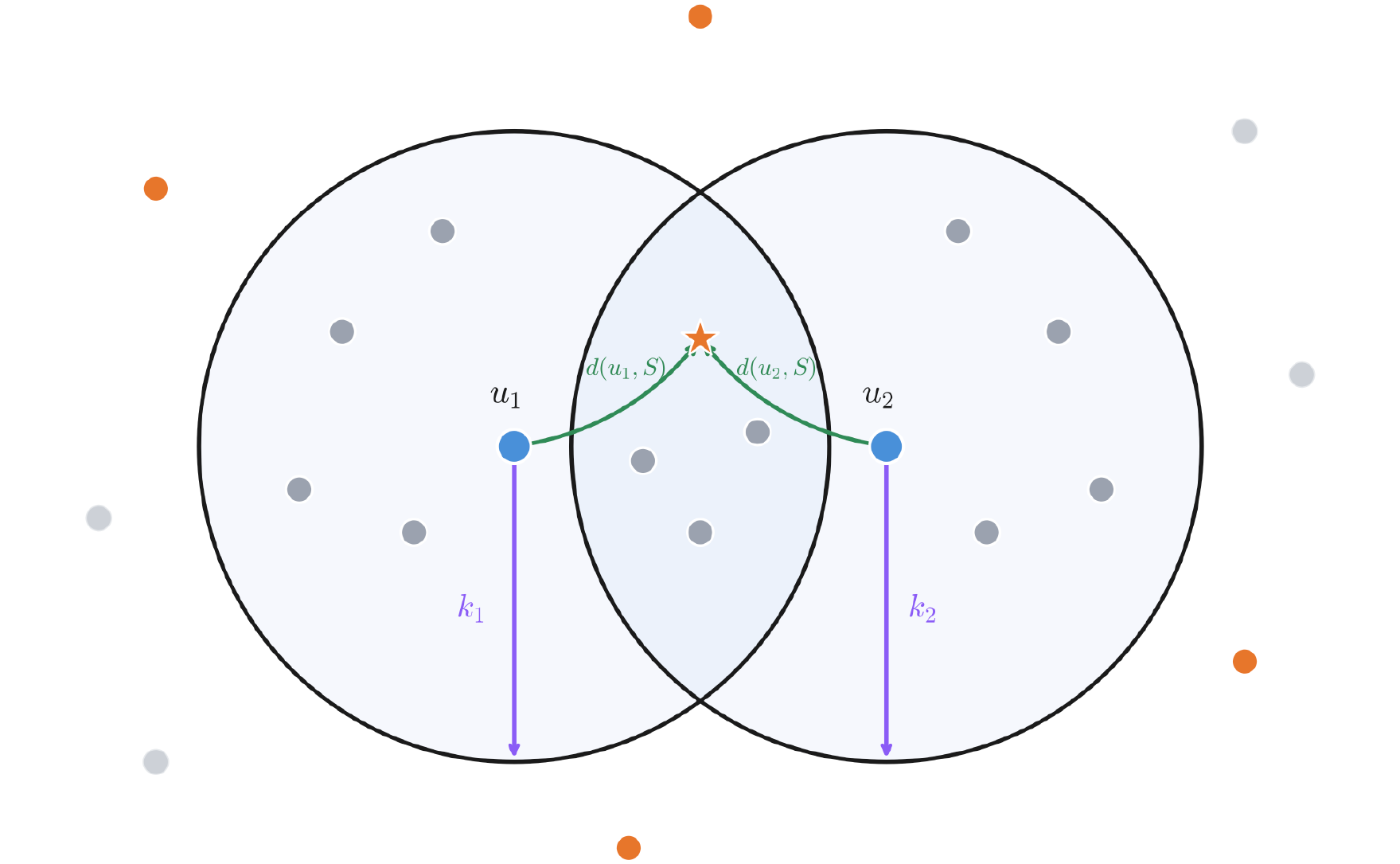}
\caption{Schematic depicting the computation of the upper bound $\bar{d}(u_1,u_2)$, where $k_1 = k_2 = \tfrac{1+\varepsilon}{2}d(u_1,u_2)$. Blue nodes are the source $u_1$ and target $u_2$, orange nodes are landmarks in set $S$, and gray nodes are arbitrary nodes.} 
\label{fig:examples_ub}
\end{figure*}

Proposition~\ref{prop:intersection-growth} shows that once $k_1 + k_2$ exceeds $\log_{\lambda_1} n_t$, the intersection $\partial N_{k_1}(u_1)_t \cap \partial N_{k_2}(u_2)_t$ is non-trivial and grows as ${\lambda_1^{k_1+k_2}}/{n_t}$ w.h.p. In other words, once neighborhoods grow sufficiently large and exceed a logarithmic threshold, overlaps between different neighborhoods become predictable and scale proportionally to the product of their exponential growth factors, normalized by the type size. This ensures that, w.h.p., the local step selects a landmark set $S$ that intersects $N_k(u_1) \cap N_k(u_2)$ but not the exclusive parts of their neighborhoods for $k \leq \tfrac{1+\varepsilon}{2} d(u_1,u_2)$, yielding the upper bound on distortion:
\begin{equation*}
\overline{d}(u_1,u_2) = d(u_1,S)+d(u_2,S) \leq k+k \leq (1+\varepsilon)\,d(u_1,u_2) \quad \text{w.h.p.}
\end{equation*}
See Figure \ref{fig:examples_ub} for an illustration.

The structural regularity of IHGs allows for a strictly lower embedding dimension than what is required under classical, worst-case metric spaces. We provide a comparative analysis of the embedding dimension requirements in Remark \ref{rmk}:

\begin{remark}\label{rmk} 
The embedding dimension requirement decreases from the worst-case bounds of $\Omega\left(n^{\frac{2(1-\varepsilon)}{2-\varepsilon}}\log n\right)$ for $(1-\varepsilon)$-distortion (Theorem \ref{thm:bourgain_distortion}) and $\Omega\left(n^{\frac{2}{2+\varepsilon}}\log n\right)$ for $(1+\varepsilon)$-distortion (Theorem \ref{thm:sarma_distortion}) to $\Omega\left(n^{1-\varepsilon}\log n\right)$ within IHGs. While the dimension requirement for $(1+\varepsilon)$-distortion is direct from Theorem \ref{thm:upper-bound}, the corresponding result for $(1-\varepsilon)$-distortion in Theorem \ref{thm:lower-bound} requires $\theta \geq \frac{\varepsilon}{2}$ to minimize the dimensional footprint to $\Omega\left(n^{1-\theta-\frac{\varepsilon}{2}}\log n\right) \leq \Omega\left(n^{1-\varepsilon}\log n\right)$.
\end{remark}

\subsection{Main Results: Average-Case Metric Distortions}\label{subsec:average-case}

The performance bounds established in Theorems \ref{thm:lower-bound} and \ref{thm:upper-bound} provide strong, point-wise guarantees for pairs of nodes selected uniformly at random across the graph space. However, in many practical networking and machine learning applications, such as matrix factorization or graph neural network embedding calibrations, one is primarily concerned with the aggregate behavior of the metric estimator across all viable pairs simultaneously. To address this, we extend our analysis from point-wise high-probability bounds to global, component-wide distortion averages. The following theorem demonstrates that the landmark embedding framework successfully stabilizes metric distortion across the entire topology, ensuring that anomalous structural bottlenecks do not corrupt the global average accuracy.

\begin{theoremEnv}[Average-Case Distortions]\label{avg-case}
    Let $\varepsilon>0$ and $U=\{(u_1,u_2):u_1\leftrightarrow u_2\}$ denote the set of ordered node pairs belonging to the same connected component. 
    \begin{enumerate}
        \item Under the conditions as in Theorem \ref{thm:lower-bound}, the average lower-bound distortion satisfies:
        \[
        \frac{1}{|U|}\sum_{u_1\leftrightarrow u_2}\frac{\underline{d}(u_1,u_2)}{d(u_1,u_2)}\geq 1-\varepsilon \quad \text{w.h.p.}
        \]
        \item Under the conditions as in Theorem \ref{thm:upper-bound}, the average upper-bound distortion satisfies:
        \[
        \frac{1}{|U|}\sum_{u_1\leftrightarrow u_2}\frac{\bar{d}(u_1,u_2)}{d(u_1,u_2)}\leq 1+\varepsilon \quad \text{w.h.p.}
        \]
    \end{enumerate}
\end{theoremEnv}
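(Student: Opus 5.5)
The plan is to reduce the component-wide average to the point-wise guarantees of Theorems~\ref{thm:lower-bound} and~\ref{thm:upper-bound} through a first-moment (Markov) argument over the number of ``bad'' pairs, together with deterministic bounds on the ratios. We treat the lower-bound part; the upper-bound part is symmetric.

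\textbf{Step 1: reduce to a small bad set.} Fix a target tolerance $\varepsilon$ and apply Theorem~\ref{thm:lower-bound} with tolerance $\varepsilon/2$. Call an ordered pair $(u_1,u_2)\in U$ \emph{bad} if $\underline{d}(u_1,u_2) < (1-\varepsilon/2)d(u_1,u_2)$, and let $B$ be the set of bad pairs. Since every pair in $U$ is connected, the estimator satisfies $0\le \underline{d}\le d$, so each ratio lies in $[0,1]$. Hence
\[
\frac{1}{|U|}\sum_{U}\frac{\underline{d}}{d}\;\ge\;\Bigl(1-\frac{\varepsilon}{2}\Bigr)\Bigl(1-\frac{|B|}{|U|}\Bigr).
\]
It therefore suffices to show $|B|/|U|\to 0$ in probability.

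\textbf{Step 2: control the size of $B$.} For a pair $(u_1,u_2)$ drawn uniformly from $V^2$, Theorem~\ref{thm:lower-bound} (applied to this pair, whose randomness includes both graph and landmarks) gives
\[
\PR\bigl((u_1,u_2)\in B\bigr)=\mathbb{E}\bigl[|B|\bigr]/n^2\to 0.
\]
By Markov's inequality, $|B|/n^2\to 0$ in probability. On the denominator side, the giant-component result of \cite{bollobas2007phase} quoted above gives $|\mathcal{C}_{(1)}|/n\to\zeta>0$ in probability under Assumptions~\ref{assumption1}--\ref{assumption4}. Since $|U|\ge |\mathcal{C}_{(1)}|^2$, this yields $|U|\ge (\zeta^2/2)n^2$ w.h.p. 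Combining the two bounds gives $|B|/|U|\to 0$. One must also exclude the diagonal pairs $u_1=u_2$, where $d=0$; there are only $n=o(n^2)$ of these, so they are negligible.

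\textbf{Step 3: the upper bound.} For part (2), the bounded-ratio argument fails because $\bar{d}/d$ can be large on bad pairs. The plan is to use instead a crude deterministic bound. Assume, as the construction requires, that $S_0$ is a singleton $\{s\}$. Then for $u_1,u_2\in\mathcal{C}_{(1)}$ with $s\in\mathcal{C}_{(1)}$,
\[
\bar{d}(u_1,u_2)\le d(u_1,s)+d(u_2,s)\le 2\,\mathrm{diam}(\mathcal{C}_{(1)}),
\]
and $\mathrm{diam}(\mathcal{C}_{(1)})=O(\log n)$ w.h.p. for supercritical IHGs with finitely many types. Since $d\ge 1$, every ratio is $O(\log n)$. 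Markov's inequality alone only gives $|B|/n^2\to 0$ without a rate, so we need the stronger statement
\[
\frac{|B|\log n}{n^2}\to 0 .
\]
To obtain it, I would extract a polynomial rate $n^{-\eta}$ for the failure probability from the proof of Theorem~\ref{thm:upper-bound}. Such a rate comes from the concentration in Proposition~\ref{prop:intersection-growth} and from the slack $n^{\varsigma}$ in $R$, since the probability that no landmark hits the intersection is at most $\exp(-n^{\varsigma})$. The remaining failure sources are neighborhood-concentration events. If only an $o(1)$ rate is available for these, I would instead truncate: use the fact that typical distances concentrate at $\log_{\lambda_1}n\,(1+o(1))$ together with the diameter bound, so that the bad set carries mass $o(1)\cdot O(\log n)$. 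If that product is not $o(1)$, upgrading the rate becomes necessary.

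Pairs in small components also need care. There the landmark $s$ may lie in another component, making $\bar{d}$ undefined or infinite. Their number is at most $n|\mathcal{C}_{(2)}|=o(n^2)$, but their ratios are unbounded. I would handle them by conditioning on $s\in\mathcal{C}_{(1)}$ (probability $\to\zeta$), or by restricting to the multiscale sets where the nearest landmarks coincide within the component.

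\textbf{Main obstacle.} I expect Step 3 to be the hardest: securing a quantitative failure rate that beats the $O(\log n)$ worst-case ratio, and dealing with non-giant components. Step 1 and Step 2 are routine.
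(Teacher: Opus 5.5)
Your reduction follows the paper's: split $U$ into pairs where the pointwise guarantee holds and a bad set $B$. You get $|B|/|U|\to 0$ from $\mathbb{E}|B|/n^2\to 0$ together with $|U|\ge|\mathcal{C}_{(1)}|^2=\Theta(n^2)$. In part 2 you bound the bad ratios through the diameter and ask for a polynomial failure rate.

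Part 1 is correct. It is cleaner than the paper's version, which posits an unneeded $n^{-4\delta}$ rate and only reaches $(1-\varepsilon)(1-o_{\mathbb{P}}(1))$ rather than $1-\varepsilon$. One adjustment is needed: tolerance $\varepsilon/2$ requires $\theta<\varepsilon/2$ and a larger $R$ than the hypothesis grants. Because the conditions are open, instead use $\tilde\varepsilon=\varepsilon-\tau$ and $\varsigma'=\varsigma-\tau$ with $0<\tau<\min\{\varsigma,\varepsilon-\theta\}$. Since $x\mapsto\min\{x/2,x-\theta\}$ is $1$-Lipschitz, the given $(\theta,r,R)$ satisfy Theorem~\ref{thm:lower-bound} (and likewise Theorem~\ref{thm:upper-bound}) at $(\tilde\varepsilon,\varsigma')$. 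Then $(1-\tilde\varepsilon)(1-o_{\mathbb{P}}(1))\ge 1-\varepsilon$ w.h.p.

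\textbf{The genuine gap is part 2 on pairs outside the giant.} Neither of your proposed treatments can work. If a pair's component contains no landmark, every coordinate of $\mathbf{x}_{u_1}$ and $\mathbf{x}_{u_2}$ is $\infty$. So $\bar{d}(u_1,u_2)=\infty$ however you condition on $s$ and whichever landmark sets you keep. The ratio is infinite, not merely large, so counting such pairs cannot rescue the average.

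Such pairs do occur w.h.p. in part of the admissible range. All landmark sets together contain at most $R\sum_{i\le r}M^i\le\frac{M}{M-1}Rn^{\theta}$ vertices. This is $o(n)$ when $R\asymp n^{1-\varepsilon+\varsigma}$ and $\theta+\varsigma<\varepsilon$, which is allowed since $\theta$ need only lie in $(0,\frac{1-\varepsilon}{2})$. Meanwhile an IHG with fixed $D$ has $\Theta(n)$ isolated edges w.h.p. Hence some isolated edge $\{a,b\}$ carries no landmark: $(a,b)\in U$, $d(a,b)=1$, $\bar{d}(a,b)=\infty$, and the part-2 average is infinite.

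So part 2 is false with $U$ as written. It must be restricted to ordered pairs of distinct vertices of $\mathcal{C}_{(1)}$, which is all Theorem~\ref{thm:upper-bound} really controls. The paper's proof misses this too when it asserts $\bar{d}(i,j)\le d(i,s)+d(j,s)\le 2C\log_{\lambda_1}n$ for every connected pair. After the restriction, no conditioning on $s$ is needed: the $R\to\infty$ independent singleton sets place some landmark in $\mathcal{C}_{(1)}$ w.h.p.

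On the rate: a polynomial rate cannot be extracted from the existing proofs. Their inputs (Lemma~\ref{lem:equivalent-events}, the giant-component and typical-distance theorems) are qualitative. The paper does not derive one either; it simply writes ``Suppose Proposition~\ref{prop:intersection-growth} holds with probability at least $1-n^{-4\delta}$''.

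But no rate is needed. Your truncation fallback works once you split by distance instead of multiplying $o(1)$ by the worst-case $O(\log n)$. Fix $\eta_0\in(0,1)$ and grant, as the paper does, $\mathrm{diam}(\mathcal{C}_{(1)})\le C\log_{\lambda_1}n$ w.h.p.
\begin{itemize}
\item Bad giant pairs with $d\ge\eta_0\log_{\lambda_1}n$ have $\bar{d}/d\le 2C/\eta_0$. They therefore contribute $O(|B|/|U|)=o_{\mathbb{P}}(1)$.
\item Pairs with $d<\eta_0\log_{\lambda_1}n$ number at most $\sum_u|N_{\eta_0\log_{\lambda_1}n}(u)|$, whose mean is $O(n^{1+\eta_0})$ by Lemma~\ref{lem:neighborhood-upper-bound}. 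Even at ratio $2C\log_{\lambda_1}n$ they contribute $O_{\mathbb{P}}(n^{\eta_0-1}\log n)=o_{\mathbb{P}}(1)$.
\end{itemize}
With the $\tilde\varepsilon$ slack, this proves part 2 on $\mathcal{C}_{(1)}$ from the $o(1)$ pointwise guarantee alone. It also removes the unproved hypothesis in the paper's argument.
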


\begin{proof}
See Section \ref{pf-avg-case}.
\end{proof}

Theorem \ref{avg-case} highlights the global metric regularity of IHGs under our multi-scale landmark framework. While worst-case metric spaces are prone to localized geometric pathologies—where small, isolated clusters can arbitrarily inflate average distortion metrics—the branching process dynamics driving the IHG topology fundamentally bound these distortions. By decoupling the summation over valid pairs into successful approximation sets and highly localized failure configurations, the proof establishes that any pathologically behaving node pairs are statistically negligible.

Furthermore, this global convergence behavior carries significant implications for the implementation of downstream graph learning algorithms. Because the average distortion remains tightly controlled within a $(1\pm\varepsilon)$ window w.h.p., any empirical risks or loss functions that rely on pairwise distance approximations will remain stable. This guarantees that metric embeddings optimized using this hierarchical landmark strategy will preserve global structural properties. Crucially, it achieves this without requiring dense landmark sampling, avoiding the computational overhead typically required to patch localized estimation errors in more adversarial graph geometries.

\subsection{Supporting Lemmas on Neighborhood Growth} \label{growth}

The proofs of Theorems~\ref{thm:lower-bound} and~\ref{thm:upper-bound} rely on a precise characterization of how node neighborhoods expand in $G \sim IHG(\vec{n}, D)$. This subsection develops supporting lemmas establishing that neighborhoods grow exponentially at rate $\lambda_1$. The first such result is an upper bound on the expected neighborhood growth.

\begin{lemmaEnv}\label{lem:neighborhood-upper-bound}
For every node $u$ in $G$ and every type $t\in[T]$, 
\begin{equation*}
    \E(|\partial N_k(u)_{t}|)\leq \|D\|_2^{k} \quad \text{and} \quad \E(|N_k(u)_t|) = \sum_{l=0}^k \|D\|_2^{l}.
\end{equation*}
Furthermore, under Assumptions \ref{assumption1} and \ref{assumption2} for $k=\Theta(\log n)$,
\begin{equation*}
    \E(|\partial N_k(u)_{t}|) = O(\lambda_1^{k}) \quad \text{and} \quad \E(|N_k(u)_t|) = O(\lambda_1^{k}).
\end{equation*}
\end{lemmaEnv}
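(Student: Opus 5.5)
The plan is to dominate the breadth-first exploration of $u$ by a path-counting quantity. Every node at distance exactly $k$ from $u$ is the endpoint of at least one self-avoiding path of length $k$ starting at $u$, so $|\partial N_k(u)_t|$ is at most the number of such paths ending in $V_t$. We take expectations by summing over type sequences $t(u)=t_0,t_1,\dots,t_k=t$. For a fixed type sequence there are at most $\prod_{i=1}^k n_{t_i}$ vertex sequences. Each of them is present with probability $\prod_{i=1}^k P_{t_{i-1}t_i}=\prod_{i=1}^k D_{t_{i-1}t_i}/n_{t_i}$, because the $k$ edges of a self-avoiding path are distinct and hence independent. The factors $n_{t_i}$ cancel, and we get
\begin{equation*}
\E(|\partial N_k(u)_t|)\le \sum_{t_1,\dots,t_{k-1}}\prod_{i=1}^k D_{t_{i-1}t_i}=[D^k]_{t(u)\,t}=e_{t(u)}^\top D^k e_t .
\end{equation*}
Since $|e_{t(u)}^\top D^k e_t|\le \|D^k\|_2\le\|D\|_2^k$, the first bound follows.

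For the ball we use $N_k(u)_t=\bigsqcup_{l=0}^k\partial N_l(u)_t$ and linearity of expectation, which gives $\E(|N_k(u)_t|)=\sum_{l=0}^k\E(|\partial N_l(u)_t|)$. This is at most $\sum_{l=0}^k\|D\|_2^l$. The displayed ``$=$'' in the statement can only be justified as this upper bound; we will state it as $\le$ and note that equality holds only in the sense of the dominating path count.

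For the $O(\lambda_1^k)$ refinement we avoid $\|D\|_2$, which may exceed $\lambda_1$ because $D$ need not be symmetric, and work directly with $[D^k]_{t(u)t}$. Under Assumption~\ref{assumption1}, Perron--Frobenius for primitive matrices gives $D^k=\lambda_1^k\,v w^\top+O(|\lambda_2|^k)$ with $|\lambda_2|<\lambda_1$ \cite[Theorem 8.5.1]{horn2012matrix}. Here $v,w$ are the positive right and left eigenvectors normalized so that $w^\top v=1$. Hence $[D^k]_{ij}\le C\lambda_1^k$ uniformly in $i,j$, with $C$ depending only on $D$ (fixed $T$; we take $D$ fixed or convergent so the constants are uniform in $n$). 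This gives $\E(|\partial N_k(u)_t|)=O(\lambda_1^k)$.

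Summing over $l\le k$ gives $\E(|N_k(u)_t|)\le C\sum_{l\le k}\lambda_1^l\le C\lambda_1^{k+1}/(\lambda_1-1)$. By Assumption~\ref{assumption2}, $\lambda_1-1\ge\epsilon$, so this is $O(\lambda_1^k)$. The restriction $k=\Theta(\log n)$ plays no role in the upper bound; it only places us in the range where the bound is nontrivial, i.e.\ below $n$.

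The main point needing care is the path-counting step. We must be sure the domination is legitimate: we count only self-avoiding paths, so that edge independence holds, and we allow $D_{pc}\le n_c$ so that the $P_{pc}$ are genuine probabilities. No second-moment or concentration argument is needed here, since the lemma is purely about expectations.
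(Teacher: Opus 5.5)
Your proof is correct. The one real departure from the paper is how you obtain the core inequality $\E(|\partial N_k(u)_t|)\le [D^k]_{t(u)t}$.

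\textbf{Core inequality.} The paper works layer by layer on the breadth-first exploration. Given $N_{k-1}(u)$, the edges from $\partial N_{k-1}(u)$ to unexplored vertices are still unrevealed, so a union bound gives $\E(|\partial N_k(u)_t|\mid N_{k-1}(u))\le\sum_{t'}d_{t't}\,|\partial N_{k-1}(u)_{t'}|$. Iterating with the tower property then yields $e_{t(u)}^\top D^k$.

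You replace this with a single first-moment count of self-avoiding paths, where independence is automatic because the edges of such a path are distinct pairs. Your route is shorter and avoids any discussion of what the exploration filtration reveals.

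The paper's route buys reuse. The same conditional-mean computation, in the sharper form $(1-|N_{k-1}(u)_t|/n_t)\langle\overrightarrow{|\partial N_{k-1}(u)|},De_t\rangle(1+o(1))$, is exactly what feeds the Chernoff arguments in Lemma~\ref{lem:neighborhood-growth} and Proposition~\ref{prop:intersection-growth}. A path count does not provide that.

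\textbf{Remaining steps.} These coincide with the paper's, and in two places you are more careful than it is.
\begin{itemize}
\item You make explicit the step $[D^k]_{ij}\le\|D^k\|_2\le\|D\|_2^k$, which the paper uses silently.
\item You are right that the displayed equality for $\E(|N_k(u)_t|)$ can only be an inequality. It already fails at $k=0$ when $t\neq t(u)$, and the paper's proof likewise only establishes $\le$.
\end{itemize}
Your single constant $C$ valid for all $k\ge 0$ is also a slightly cleaner way to sum the geometric series than the paper's split at $K$.

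\textbf{Uniformity in $n$.} You can discharge your caveat about $C$ without assuming a fixed or convergent $D$. Since $P_{pc}=P_{cp}$, the matrix $\mathrm{diag}(n_t)^{1/2}D\,\mathrm{diag}(n_t)^{-1/2}$ has entries $\sqrt{n_in_j}\,P_{ij}$. It is therefore symmetric and has spectral radius $\lambda_1$. This gives $[D^k]_{ij}\le\sqrt{n_j/n_i}\,\lambda_1^k$ for every $k$. By Assumption~\ref{assumption4} the prefactor is bounded, so both $O(\lambda_1^k)$ bounds hold uniformly in $n$.
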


\begin{proof}
See Section \ref{pf:neighborhood-upper-bound}.
\end{proof}

Lemma \ref{lem:neighborhood-upper-bound} states that neighborhood growth in $IHG(\vec{n},D)$ is controlled by the spectral norm of the affinity matrix $D$. In expectation, the breadth-first exploration process behaves like a multi-type branching process with mean offspring matrix $D$, and therefore the size of the $k$-th layer grows at most exponentially at rate $\|D\|_2$. Consequently, the expected size of the $k$-hop neighborhood is bounded by a geometric series whose growth rate is determined entirely by the operator norm of $D$. In particular, when $\|D\|_2 < 1$, the model stays in a subcritical regime where neighborhoods remain small and exploration dies out quickly. Under Assumption~\ref{assumption2} ($\|D\|_2\geq \lambda_1 > 1$), the model enters a supercritical regime characterized by exponential expansion. This entails that a positive fraction of nodes belong to a giant connected component, while the remaining components are typically small \resultcite{bollobas2007phase}{Theorems 3.1 and 3.12}.

Furthermore, under Assumption \ref{assumption1} where $D$ is primitive, $[D^k]_{ij} = \Theta(\lambda_1^k)$ for any type pair $(i,j)$. As a result, we achieve a tighter bound on the growth rate with $\E(|\partial N_k(u)_{t}|) = \Theta(\lambda_1^{k})$ and $\E(|N_k(u)_t|) = O(\lambda_1^{k})$.
Building on this exponential growth result, we establish exponential growth of neighborhoods from fixed starting nodes, conditionally on node being in the giant component.
The following lemma shows that this conditioning event is asymptotically equivalent to the local event that both nodes' neighborhoods exhibit the exponential growth up to generation $L=\Theta(\log n)$.
This equivalence is what allows us to work conditionally on connectivity throughout the analysis.

\begin{lemmaEnv}\label{lem:equivalent-events}
Let $u_1$ and $u_2$ be any two nodes in $G\sim IHG(\vec{n},D)$ that satisfies Assumptions \ref{assumption2}-\ref{assumption4}. Let $\varepsilon\in(0,1)$, $\kappa_0\in (0,1)$, $L=\kappa_0\log_{\lambda_1} n$, and $t(u)$ denote the type of node $u$. Define the events
\begin{align*}
    A_n &= \{|\partial N_{L}(u_i)_t| \in \left[(1-\varepsilon)[D^L]_{t(u_i)t},(1+\varepsilon) [D^L]_{t(u_i)t}\right]\::\:i=1,2\:;\: t\in[T]\}, \\
    B_n &= \{\text{$u_1$ and $u_2$ are in the giant component}\}.
\end{align*}
Then $\PR(A_n \setminus B_n) \to 0$ and $\PR(B_n \setminus A_n) \to 0$ as $n\to \infty$.
\end{lemmaEnv}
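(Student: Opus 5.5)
The plan is to compare the breadth-first exploration of $G$ from $u_1$ and $u_2$ with two independent multi-type Galton--Watson processes with mean offspring matrix $D$. I will use the growth bounds of Lemma~\ref{lem:neighborhood-upper-bound} to show that the coupling holds up to generation $L=\kappa_0\log_{\lambda_1}n$. On that scale, $A_n$ becomes a statement about the branching process and $B_n$ becomes survival of the branching process.

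\textbf{Step 1 (coupling).} Explore $N_L(u_1)$ and $N_L(u_2)$ generation by generation. Each unexplored vertex of type $c$ is attached to an active vertex of type $p$ with probability $D_{pc}/n_c$. By Assumption~\ref{assumption4} the depletion of type $c$ after $O(\lambda_1^{L})=O(n^{\kappa_0})$ exploration steps is $O(n^{\kappa_0-1})$, so the offspring laws differ from $\mathrm{Poisson}(D_{pc})$ by a factor $1+O(n^{\kappa_0-1})$ in total variation per vertex. Summing over $O(n^{\kappa_0})$ vertices, with $\kappa_0<1/2$ first (later extended), gives an $o(1)$ total coupling error. The same bound controls the probability that the two explorations collide before depth $L$.

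Under the coupling, write $Z_k\in\mathbb{N}^T$ for the generation-$k$ type vector of the branching process started from $e_{t(u_i)}$, so that $\overrightarrow{|\partial N_L(u_i)|}=Z_L$ w.h.p.

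\textbf{Step 2 ($A_n\Rightarrow B_n$).} On $A_n$ both boundaries have size $\Theta(\lambda_1^L)=n^{\kappa_0}$. A sprinkling argument then joins them to the giant component. Reveal the edges from $\partial N_L(u_i)$ to unexplored vertices, and use the fact that the giant has linear size in every type (Assumption~\ref{assumption1} and \citealp[Theorem~3.1]{bollobas2007phase}). Under Assumption~\ref{assumption2}, $n^{\kappa_0}$ independent supercritical explorations each reach the giant with probability bounded below. Hence the chance that none does is $e^{-\Omega(n^{\kappa_0})}$, and $\PR(A_n\setminus B_n)\to0$.

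\textbf{Step 3 ($B_n\Rightarrow A_n$).} This is the main obstacle. Vertices outside the giant die out, so $B_n$ coincides with survival of both coupled processes up to $o(1)$ probability, using $|\mathcal{C}_{(2)}|/n\to0$. It remains to show that on survival, $|\partial N_L(u_i)_t|/[D^L]_{t(u_i)t}\in[1-\varepsilon,1+\varepsilon]$.

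I will argue in two stages. First, fix $k_0$ and condition on $Z_{k_0}$. Using Assumption~\ref{assumption1}, $[D^k]_{ij}=\Theta(\lambda_1^k)$ with a Perron limit. A conditional second-moment (Chebyshev) bound summed over generations $k_0<k\le L$ gives
\[
Z_L^{\top} e_t = (1\pm\varepsilon/3)\,\big[Z_{k_0}^{\top}D^{L-k_0}\big]_t
\]
w.h.p. on survival; the relative variances form a geometric series in $\lambda_1^{-1}$ and are small once $|Z_{k_0}|$ is large.

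Second, I need $[Z_{k_0}^{\top}D^{L-k_0}]_t=(1\pm\varepsilon/3)[D^L]_{t(u_i)t}$. By Perron--Frobenius this reduces to $\langle Z_{k_0},v\rangle\approx\lambda_1^{k_0}v_{t(u_i)}$, where $v$ is the right Perron vector. Equivalently, the Kesten--Stigum martingale $W_{k_0}=\langle Z_{k_0},v\rangle/(\lambda_1^{k_0}v_{t(u_i)})$ must be close to $1$ on survival.

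This second stage is exactly where the argument is delicate. $W_{k_0}$ converges to a nondegenerate limit $W$, so for fixed $k_0$ it is not concentrated. My first attempt will therefore be to let $k_0=k_0(n)\to\infty$ slowly and exploit the precise form of the $(1\pm\varepsilon)$ window in the definition of $A_n$. If the fluctuation of $W$ cannot be absorbed this way, the step must instead be phrased with $[D^L]_{t(u_i)t}$ replaced by $W[D^L]_{t(u_i)t}$. That version is what the downstream results (Propositions~\ref{prop:neighborhood-growth} and~\ref{prop:intersection-growth}) actually use, since they only need $\Theta(\lambda_1^{L})$ growth with a uniform $(1\pm\varepsilon)$ comparison across later generations.
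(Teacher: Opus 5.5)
\textbf{Genuine gap in Step~3.} Step~3 cannot be completed, because the second claim of the lemma is false as stated. The reason is the one you suspected.

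Letting $k_0=k_0(n)\to\infty$ does not help. It only brings $W_{k_0}$ closer to the Kesten--Stigum limit $W$, not to $1$. The ratio $|\partial N_L(u_i)_t|/[D^L]_{t(u_i)t}$ converges in distribution to $W$ however you split the generations. The window in $A_n$ has fixed width around a deterministic centre, so there is nothing in its form to exploit.

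Concretely, take $T=1$, i.e.\ Erd\H{o}s--R\'enyi with $D=\lambda>1$, which satisfies all the assumptions. Then $|\partial N_L(u)|/\lambda^L$ converges in distribution to $W$ with $\mathbb{E}W=1$ and $\mathrm{Var}(W)=1/(\lambda-1)$. Moreover $W$ is unbounded, since $W=\lambda^{-1}\sum_{j=1}^{Z_1}W^{(j)}$ with $Z_1\sim\mathrm{Poisson}(\lambda)$. Hence $p_\varepsilon:=\mathbb{P}(W\in[1-\varepsilon,1+\varepsilon])$ is strictly smaller than the survival probability $\zeta=\mathbb{P}(W>0)$. By asymptotic independence of the two explorations, $\mathbb{P}(B_n\setminus A_n)\ge\mathbb{P}(B_n)-\mathbb{P}(A_n)\to\zeta^2-p_\varepsilon^2>0$.

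The paper's proof has the same hole. It quotes Kesten--Stigum as saying that $Z^i_k(j)$ ``grows asymptotically like $[D^k]_{ij}$'' and then asserts that this ``directly fulfills'' $A_n$, silently replacing the random factor $W$ by $1$.

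\textbf{The repair.} Your fallback is the right fix, but make it $N_L$-measurable. Conditioning on the almost-sure limit $W_i$, which depends on all later generations, would break the conditional Chernoff steps of Lemma~\ref{lem:neighborhood-growth}. For instance, let $A_n^\eta$ be the event that for $i=1,2$ there is $w_i\in[\eta,1/\eta]$ with $\overrightarrow{|\partial N_L(u_i)|}\in[(1-\varepsilon)w_ie_{t(u_i)}^\top D^L,(1+\varepsilon)w_ie_{t(u_i)}^\top D^L]$. Then $\mathbb{P}(A_n^\eta\setminus B_n)\to0$. Also, $\limsup_n\mathbb{P}(B_n\setminus A_n^\eta)$ is at most the probability that some $W_i\notin[\eta,1/\eta]$ on survival, which vanishes as $\eta\to0$. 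Your Chebyshev stage handles the generations between $k_0$ and $L$.

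The change then propagates as follows:
\begin{itemize}
\item Lemma~\ref{lem:neighborhood-growth} goes through uniformly in $w_i\in[\eta,1/\eta]$.
\item Proposition~\ref{prop:neighborhood-growth} inherits the false deterministic window, so it must carry the factor $w_i$.
\item Such constants are absorbed downstream by the $n^{\varsigma}$ slack in $R$.
\end{itemize}

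\textbf{The true direction (Step~2).} Your sprinkling argument is heavier and less rigorous than needed: the $n^{\kappa_0}$ explorations are not independent, and reaching the giant is a global event. The paper's size argument suffices. On $A_n$ the component of $u_i$ contains $\partial N_L(u_i)$, of size at least $(1-\varepsilon)\sum_t[D^L]_{t(u_i)t}=\Theta(n^{\kappa_0})$, which exceeds $|\mathcal{C}_{(2)}|=O(\log n)$ w.h.p. Both arguments use $\sum_t[D^L]_{t(u_i)t}=\Theta(\lambda_1^L)$, which needs at least irreducibility. So Assumption~\ref{assumption1} should be added to the lemma's hypotheses.
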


\begin{proof}
See Section \ref{pf:equivalent-events}.
\end{proof}

Intuitively, since the local exploration is well approximated by a branching process, if $u_1$ and $u_2$ lie in the giant component, then the corresponding branching processes survive. As branching processes either die out or grow exponentially, the neighborhoods grow exponentially conditioned on survival.
Thus, local exponential expansion and global connectivity occur simultaneously in the large-$n$ limit. Once the neighborhood size at generation $L$ is well-approximated by branching process, subsequent generations continue to grow in a stable and multiplicative manner, as described by the following lemma:

\begin{lemmaEnv}\label{lem:neighborhood-growth}
Let $u_1$ and $u_2$ be any two nodes in the giant component of $G\sim IHG(\vec{n},D)$ that satisfies Assumptions \ref{assumption1}-\ref{assumption4}. Let $\varepsilon\in (0,1)$, $\kappa_0\in (0,1)$, $\kappa\in (0,1-\kappa_0)$, and $L=\kappa_0\log_{\lambda_1} n$.
Define 
\begin{align*}
A_{b^{m},b^{M}} &= \left\{\overrightarrow{|\partial N_L(u_i)|}\in [b_i^{m},b_i^{M}]: i=1,2\right\},\\
\mathcal{E}_{n,k} &= \left\{\overrightarrow{|\partial N_{L+k}(u_i)|} \in [b_i^{m}D^k, b_i^{M}D^k]: i=1,2\right\},
\end{align*}
where $b_i^{m}=(1-\varepsilon) e_{t(u_i)}^\top D^L$ and $b_i^{M}=(1+\varepsilon) e_{t(u_i)}^\top D^L$.
Then there exists $\delta>0$ such that
\[
\PR\!\left(\bigcap_{l=0}^{k}\mathcal{E}_{n,l} \middle| A_{b^{m},b^{M}}\right)
\geq \left(1-(2T+1)n^{-\delta}\right)^{k}
\]
for any $k \leq \kappa \log_{\lambda_1} n$, for all sufficiently large $n$.
\end{lemmaEnv}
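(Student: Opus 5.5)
We plan an induction on $l$ via the chain rule: the conditional probability of $\bigcap_{l=0}^{k}\mathcal{E}_{n,l}$ given $A_{b^m,b^M}$ equals a product of one-step probabilities $\PR(\mathcal{E}_{n,l}\mid \mathcal{E}_{n,0},\dots,\mathcal{E}_{n,l-1},A_{b^m,b^M})$. Note $\mathcal{E}_{n,0}=A_{b^m,b^M}$. It therefore suffices to show that each one-step factor is at least $1-(2T+1)n^{-\delta}$. The argument uses breadth-first exploration. Conditionally on the explored ball $N_{L+l-1}(u_i)$, each unexplored vertex $v$ of type $c$ joins $\partial N_{L+l}(u_i)$ independently. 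Its probability is $1-\prod_{w\in\partial N_{L+l-1}(u_i)}(1-D_{t(w)c}/n_c)$. Hence $|\partial N_{L+l}(u_i)_c|$ is, conditionally, a binomial with $n_c-|N_{L+l-1}(u_i)_c|$ trials.

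\textbf{Step 1: conditional mean.} Expanding the product gives the mean
\[
\bigl(\overrightarrow{|\partial N_{L+l-1}(u_i)|}^\top D\bigr)_c\,(1+O(n^{-1+\kappa_0+\kappa})).
\]
Two error sources enter the factor $1+O(n^{-1+\kappa_0+\kappa})$, and both are polynomially small because $L+k\le(\kappa_0+\kappa)\log_{\lambda_1}n$ with $\kappa_0+\kappa<1$:
\begin{itemize}
\item the depletion term $|N_{L+l-1}(u_i)_c|/n_c$, which is $O(\lambda_1^{L+l}/n)$ on the previous events by Lemma~\ref{lem:neighborhood-upper-bound} and Assumption~\ref{assumption4};
\item the second-order inclusion–exclusion term, which is $O\bigl((\lambda_1^{L+l}/n)^2\bigr)$ relative to the first order.
\end{itemize}
On $\mathcal{E}_{n,l-1}$ the previous boundary lies in $[b_i^mD^{l-1},b_i^MD^{l-1}]$. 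Since $D$ is nonnegative, the mean is therefore sandwiched between $b_i^mD^{l}$ and $b_i^MD^l$, up to the factor $(1\pm O(n^{-\delta'}))$.

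\textbf{Step 2: concentration.} Apply Chernoff to each type $c$ and each $i\in\{1,2\}$. The mean is $\Theta(\lambda_1^{L+l})\ge \Theta(n^{\kappa_0})$ by primitivity (Assumption~\ref{assumption1}, $[D^k]_{ij}=\Theta(\lambda_1^k)$). A relative deviation of $\eta_l$ then has probability at most $2\exp(-\eta_l^2 n^{\kappa_0}/3)$, which is at most $n^{-\delta}$ even for $\eta_l=n^{-\kappa_0/4}$. This accounts for $2T$ failure events. The extra $+1$ in $(2T+1)$ absorbs the bad event on which the depletion/collision estimate of Step 1 fails, bounded via Markov and Lemma~\ref{lem:neighborhood-upper-bound}. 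The same bound also covers the events where the two explorations from $u_1,u_2$ interact.

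\textbf{Main obstacle: the window must not widen.} The real difficulty is that the window $[b_i^mD^l,b_i^MD^l]$ stays fixed at $(1\pm\varepsilon)$, while errors compound multiplicatively over up to $\kappa\log_{\lambda_1}n$ steps. Naively one gets $(1\pm\varepsilon)\prod_l(1\pm\eta_l)$, which can escape the window. I would handle this by proving a slightly stronger inductive statement. At level $L$, I would work with the strictly smaller window $(1\pm\varepsilon/2)$, obtained by applying Lemma~\ref{lem:equivalent-events} with $\varepsilon/2$. The inductive claim is then that the relative deviation at step $l$ is at most $\varepsilon/2+\sum_{j\le l}\eta_j$. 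Because $\eta_j$ is a small power of $n$ and there are only $O(\log n)$ steps, $\sum_j\eta_j=O(n^{-\kappa_0/4}\log n)\to0$. This keeps the deviation inside $(1\pm\varepsilon)$ for large $n$, which yields the stated bound $\left(1-(2T+1)n^{-\delta}\right)^k$. If conditioning on the coarser window $A_{b^m,b^M}$ is essential as stated, I would instead note that the relative fluctuations shrink geometrically, since $\eta_l\propto\lambda_1^{-(L+l)/2}$. The accumulated error is then dominated by the first step and absorbed into the $\varepsilon$-slack.
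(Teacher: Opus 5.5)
Your skeleton is the paper's: chain rule over levels, a conditional binomial count per type with mean $\overrightarrow{|\partial N_{L+l-1}(u_i)|}^\top D$ up to depletion and second-order corrections, and Chernoff at relative scale $n^{-\beta}$ with $\beta<\kappa_0/2$ (your $\eta_l=n^{-\kappa_0/4}$). It also uses Markov with Lemma~\ref{lem:neighborhood-upper-bound} for depletion, with failures summing to $(2T+1)n^{-\delta}$, and a strictly tighter starting window so that the accumulated factors $(1\pm n^{-\beta})^k$ stay inside $(1\pm\varepsilon)$. Your Steps 1--2 are fine at the paper's level of rigor.

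The genuine gap is exactly the step you call the main obstacle, and neither of your resolutions closes it.

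\textbf{Tighter-window route.} Running the induction from the $(1\pm\varepsilon/2)$ window $A'$ bounds $\mathbb{P}(\,\cdot\mid A')$, not $\mathbb{P}(\,\cdot\mid A_{b^m,b^M})$. To return you would need $\mathbb{P}((A')^c\mid A_{b^m,b^M})$ to be polynomially small. Lemma~\ref{lem:equivalent-events} gives at most $o(1)$ with no rate. In fact $\mathbb{P}(A_{b^m,b^M}\setminus A')$ does not tend to $0$ at all: $|\partial N_L(u_i)_t|/[D^L]_{t(u_i)t}$ converges to a Kesten--Stigum limit that is non-degenerate on survival (already for $T=1$ it has a continuous density on $(0,\infty)$).

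\textbf{Fallback route.} Conditioning on $A_{b^m,b^M}$ leaves no $\varepsilon$-slack. Consider the part of $A_{b^m,b^M}$ where $\overrightarrow{|\partial N_L(u_i)|}$ is within relative distance $n^{-\kappa_0/2}$ of $b_i^m$. There the next layer's conditional mean is at most $\overrightarrow{|\partial N_L(u_i)|}^\top D$, since depletion and inclusion--exclusion both lower it. So the mean is at most $O(1)$ standard deviations above the threshold $b_i^mD$, and $\mathcal{E}_{n,1}$ fails there with conditional probability bounded away from $0$. Geometric decay of the later $\eta_l$ does not help, because the very first step already exits.

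Consequently the stated bound itself forces the conditional mass of that boundary layer, given $A_{b^m,b^M}$, to be $O(n^{-\delta})$. Any proof must therefore include a quantitative anti-concentration estimate for the level-$L$ profile near the faces $1\pm\varepsilon$, at scale about $n^{-\kappa_0/2}$. One way would be a branching-process coupling with polynomial error together with a bounded density for the limit. Markov and Chernoff give only concentration and cannot supply this. Without it, your induction proves either the version conditioned on the tighter window, or the version whose target window widens by $(1\pm n^{-\beta})^l$.

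The paper's proof has the same hole. It runs the induction on events $\mathcal{E}'_{n,k}$ built from $\varepsilon'<\varepsilon$. It then returns to $A_{b^m,b^M}$ by asserting $\mathbb{P}(\mathcal{E}'_{n,0}\mid A_{b^m,b^M})=1$ from $\mathcal{E}'_{n,0}\subseteq A_{b^m,b^M}$. That uses the inclusion in the wrong direction: it only yields $\mathbb{P}(A_{b^m,b^M}\mid \mathcal{E}'_{n,0})=1$. You located the crux correctly; closing it needs that extra ingredient or a reformulated statement.
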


\begin{proof}
See Section \ref{pf:neighborhood-growth}.
\end{proof}

Conditional on the event $A_{b^{m},b^{M}}$, which ensures that the $L$-th layer lies within a controlled multiplicative window around $D^L$, Lemma~\ref{lem:neighborhood-growth} establishes that the neighborhood sizes remain close to the deterministic trajectory $D^k$ for an additional $k = O(\log n)$ generations w.h.p. This result shows that, in the supercritical regime, neighborhood expansion not only initiates exponentially but persists in a stable, predictable fashion over logarithmic scales that closely tracks the multi-type branching process with mean matrix $D$, reinforcing the connection between local expansion dynamics and the global structure of the graph.

\section{Extension to Continuous Type Spaces (Kernel Model)}\label{kernel}

In the finite model, the geometry of the graph is governed by the spectral structure of the affinity matrix $D$, which controls the rate of neighborhood expansion and the probability that independently growing neighborhoods intersect. These mechanisms are precisely what drive the $(1\pm\varepsilon)$-distortion guarantees established in Section~\ref{main-res}. Many real-world networks, however, are more accurately described by continuously varying node attributes rather than a finite collection of discrete classes, and the analysis developed for finitely many node types extends naturally to a continuous latent-space setting.

To capture this setting, we consider an inhomogeneous random graph model \citep{bollobas2007phase} in which each node $i$ is assigned a latent position
\(x_i \in \mathcal{X},\)
sampled independently from a probability space $(\mathcal{X},\mu)$. Connectivity is determined by a measurable kernel
\(\kappa \in L^2(\mathcal{X}\times\mathcal{X}, \mu\times\mu)\),
where $\kappa(x,y)$ quantifies the affinity between positions $x$ and $y$. Conditional on the latent positions, edges are generated independently with probability
\(P_{ij}
=
\min\left\{
{\kappa(x_i,x_j)}/{n},
1
\right\}.\)

This formulation generalizes the finite-type model: if $\mathcal{X}$ is partitioned into finitely many regions and $\kappa$ is constant on each block, then the model reduces to the previously studied discrete affinity-matrix framework. By working within the $L^2$ space that allows the kernel to possess localized singularities, our framework serves as a unified continuum limit for a vast taxonomy of network architectures---encompassing heavy-tailed scale-free networks via unbounded Chung-Lu kernels \citep{chung2002average} and continuous variants of the stochastic block model with fluid community boundaries \citep{holland1983stochastic}.

\subsection{The Integral Operator}

In the finite-type setting, the affinity matrix $D$ acts on vectors whose coordinates encode type-specific neighborhood densities. Iterating $D$ describes how neighborhoods expand through successive graph distances, while the leading eigenvalue determines the asymptotic growth rate of this exploration process.

In the kernel setting, the matrix $D$ is replaced by the integral operator
\[
(\mathcal{T}_\kappa f)(x)
=
\int_{\mathcal{X}} \kappa(x,y)f(y)\,d\mu(y),
\]
where $\mu$ is the probability measure describing the distribution of latent positions in $\mathcal{X}$, and \(f\in L^2(\mathcal{X},\mu)\) is a square-integrable function representing the density of reachable mass or influence across the latent space. Thus, $\mathcal{T}_\kappa$ plays exactly the same role as matrix multiplication in the finite-dimensional setting: repeated application of the operator models the propagation and expansion of neighborhoods across the graph, effectively describing a multitype branching process in a continuous state space.

The spectral radius \(\rho(\mathcal{T}_\kappa)\), equivalently the principal eigenvalue $\lambda_1$, governs the large-scale geometry of the graph. When \(\lambda_1>1\), the graph is supercritical and contains a giant connected component w.h.p. \citep[Theorem 3.1]{bollobas2007phase}. Moreover, neighborhood volumes grow asymptotically at exponential rate $\lambda_1$, implying that typical graph distances scale as $\log_{\lambda_1}n$ \citep[Theorem 3.14]{bollobas2007phase}. Consequently, the same spectral mechanism underlying the finite-type distortion bounds continues to govern metric behavior in the continuous model.

\subsection{The Sandwiching Argument}

A key technical step in extending these results is the approximation of the continuous kernel by finite-dimensional block models. For every $\delta > 0$, there exists a measurable partition
\(\mathcal{X} = \bigsqcup_{t=1}^{T} \mathcal{X}_t\) 
such that $\kappa$ admits a step-function approximation $\kappa_\delta$ satisfying
\(\|\kappa - \kappa_\delta\|_{\infty} < \delta.\)
This approximating kernel $\kappa_\delta$ is constant on each rectangle $\mathcal{X}_i \times \mathcal{X}_j$, effectively inducing a finite-type random graph model by leveraging the fact that step functions are dense in the space of kernels under the relevant metric \citep{lovasz2012large}.

As the partition is refined, the associated operator $\mathcal{T}_{\kappa_\delta}$ converges to $\mathcal{T}_{\kappa}$ in operator norm. Because the spectral radius is a continuous functional of the operator for compact kernels \citep{dunford1963spectral}, this convergence implies that the leading eigenvalues and the resulting exponential neighborhood growth rates of the approximation approach those of the continuous kernel. By ``sandwiching'' the true kernel between two such step functions, $\kappa_\delta^{-}$ and $\kappa_\delta^{+}$, we can bound the true shortest-path distances $d_{\kappa}$ between those of the finite-type models. 
The lower approximation $\kappa_\delta^{-}$ generates a sparser graph, so graph distances in the corresponding model are typically larger ($d_{\kappa_\delta^-}(u_1,u_2)\ge d_\kappa(u_1,u_2)$).
Conversely, the upper approximation $\kappa_\delta^{+}$ produces a denser graph with shorter distances ($d_{\kappa_\delta^+}(u_1,u_2)\le d_\kappa(u_1,u_2)$).
These approximations create a geometric ``sandwich'' around the true metric structure.

Since both bounding models are finite-type graphs, we will show that the distortion results from Section \ref{main-res} apply directly to them. As the partition becomes finer (i.e. the number of types $T\to \infty$),
\(\|\mathcal{T}_{\kappa_\delta^\pm}-\mathcal{T}_\kappa\|
\to 0\) \citep[Chapter 2]{krasnosel1976integral}, and therefore \(\lambda_1(\mathcal{T}_{\kappa_\delta^\pm}) \to \lambda_1(\mathcal{T}_\kappa)\) \citep[Theorem 3.16]{kato1966perturbation}. Because shortest-path distances scale logarithmically with the effective branching factor ${d(u_1,u_2)}/{\log n}
\overset{p}{\longrightarrow}
{1}/{\log \lambda_1}$ \citep[Theorem 6.2]{RGCN2}, the geometric behavior of the finite approximations converges to that of the original kernel model. We formalize a rigorous probability space coupling and spectral perturbation framework for arbitrary bounded kernels in the following theorem. %This continuity argument transfers the finite-type $(1 \pm \varepsilon)$-distortion guarantees to arbitrary bounded kernels, as formalized in the following theorem.

\begin{theoremEnv}[Metric Sandwiching for Kernel Models]
\label{thm:metric-sandwich}

Let $(\mathcal{X},\mu)$ be a probability space and $\kappa : \mathcal{X} \times \mathcal{X} \to [0,\infty)$ be a bounded, primitive kernel with integral operator spectral radius $\lambda_1(\mathcal{T}_\kappa) > 1$. For any $\varepsilon \in (0,1)$, there exists a $\delta > 0$ and a finite measurable partition $\mathcal{P}_\delta = \{\mathcal{X}_1, \dots, \mathcal{X}_T\}$ of $\mathcal{X}$ defining bounded, primitive step-function kernels $\kappa_\delta^{-}$ and $\kappa_\delta^{+}$ such that $\|\kappa - \kappa_\delta^{\pm}\|_\infty \le \delta$. 

Let $\mathcal{G}_n = (G_{\kappa_\delta^{-}}, G_\kappa, G_{\kappa_\delta^{+}})$ be a joint coupling of random graphs constructed on a shared sequence of latent positions $x_1, \dots, x_n \sim \mu$ and independent edge variables $U_{ij} \sim \mathrm{Uniform}(0,1)$, where an edge exists in $G_f$ if $U_{ij} \le f(x_i, x_j)/n$. Then the following properties hold simultaneously:

\begin{enumerate}
    \item \textbf{Edge-Set Inclusion:} The respective edge sets are monotonically nested on the same probability space, satisfying:
    \[ E(G_{\kappa_\delta^{-}}) \subseteq E(G_\kappa) \subseteq E(G_{\kappa_\delta^{+}}) \quad \text{w.h.p.}\]
    \item \textbf{Spectral Radius Perturbation:} The spectral radii of the corresponding step-kernel operators $\lambda_1(\mathcal{T}_{\kappa_\delta^{-}})$ and $\lambda_1(\mathcal{T}_{\kappa_\delta^{+}})$ satisfy a linear perturbation bound around the true kernel spectral radius:
    $$
    1 < \lambda_1(\mathcal{T}_{\kappa_\delta^{-}}) \leq \min\{\lambda_1(\mathcal{T}_{\kappa_\delta^{+}}),\lambda_1(\mathcal{T}_\kappa)\}\quad \text{with} \quad \lambda_1(\mathcal{T}_{\kappa_\delta^{\pm}}) = \lambda_1(\mathcal{T}_\kappa) \pm O(\delta).
    $$

\end{enumerate}
\end{theoremEnv}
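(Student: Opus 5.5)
I will construct the partition and the two step kernels explicitly, check both claimed properties, and then address where the uniform approximation comes from.

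\textbf{Construction.} Fix a finite measurable partition $\mathcal{P}_\delta=\{\mathcal{X}_1,\dots,\mathcal{X}_T\}$ on which the oscillation of $\kappa$ over each rectangle is at most $\delta$, in the sense that $\sup_{\mathcal{X}_i\times\mathcal{X}_j}\kappa-\inf_{\mathcal{X}_i\times\mathcal{X}_j}\kappa\le\delta$ for all $i,j$. This is the approximation granted in the sandwiching discussion above. On each rectangle set
\[
\kappa_\delta^{-}=\inf_{\mathcal{X}_i\times\mathcal{X}_j}\kappa
\quad\text{and}\quad
\kappa_\delta^{+}=\sup_{\mathcal{X}_i\times\mathcal{X}_j}\kappa .
\]
Then $\kappa_\delta^-\le\kappa\le\kappa_\delta^+$ pointwise and $\|\kappa-\kappa_\delta^\pm\|_\infty\le\delta$. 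Both step kernels are bounded because $\kappa$ is bounded.

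\textbf{Edge-set inclusion.} This follows by pure monotonicity under the coupling. If $U_{ij}\le\kappa_\delta^-(x_i,x_j)/n$, then $U_{ij}\le\kappa(x_i,x_j)/n\le\kappa_\delta^+(x_i,x_j)/n$. Hence the nesting holds deterministically, and a fortiori w.h.p. Since the kernels are bounded, $f/n<1$ for large $n$, so the truncation $\min\{\cdot,1\}$ never interferes with the ordering.

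\textbf{Spectral radius perturbation.} I use positivity of the operators twice.
\begin{itemize}
\item Pointwise ordering $\kappa_\delta^-\le\kappa\le\kappa_\delta^+$ of nonnegative kernels gives $\mathcal{T}_{\kappa_\delta^-}^k f\le\mathcal{T}_\kappa^k f\le\mathcal{T}_{\kappa_\delta^+}^k f$ for every $f\ge0$. Gelfand's formula then gives $\lambda_1(\mathcal{T}_{\kappa_\delta^-})\le\lambda_1(\mathcal{T}_\kappa)\le\lambda_1(\mathcal{T}_{\kappa_\delta^+})$. This is the $\min$ inequality.
\item For the linear rate, $\|\mathcal{T}_{\kappa}-\mathcal{T}_{\kappa_\delta^\pm}\|_{L^2\to L^2}\le\|\kappa-\kappa_\delta^\pm\|_{L^2(\mu\times\mu)}\le\delta$ by the Hilbert--Schmidt bound, since $\mu$ is a probability measure.
\item The kernels are self-adjoint, because the step approximations inherit the symmetry of $\kappa$ when each rectangle pair is treated symmetrically. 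For self-adjoint operators, $\lambda_1=\sup_{\|f\|=1}\langle\mathcal{T}f,f\rangle$ (taken over $f\ge0$ by positivity), which is $1$-Lipschitz in operator norm. Hence $|\lambda_1(\mathcal{T}_{\kappa_\delta^\pm})-\lambda_1(\mathcal{T}_\kappa)|\le\delta$. This avoids Kato-type perturbation arguments that would need a spectral gap.
\item Choosing $\delta<\lambda_1(\mathcal{T}_\kappa)-1$ yields $\lambda_1(\mathcal{T}_{\kappa_\delta^-})\ge\lambda_1(\mathcal{T}_\kappa)-\delta>1$.
\end{itemize}

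\textbf{Main obstacle.} I expect the hardest part to be the existence of a finite partition with uniformly small oscillation, together with primitivity of $\kappa_\delta^-$.

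For the partition, a bounded measurable $\kappa$ is approximated only in $L^1$ or $L^2$ by step functions, not uniformly. I would first try the route via a measure-theoretic reduction. Modify $\kappa$ on a null set, or assume (as \cite{bollobas2007phase} do for graphical kernels) that $\mathcal{X}$ is a separable metric space with $\kappa$ continuous a.e. Then on a compact set $K$ with $\mu(K^c)<\eta$, $\kappa$ is uniformly continuous, and a fine partition of $K$ gives small oscillation. The exceptional set $K^c$ becomes its own type. There, $\kappa_\delta^\pm$ are set to $0$ and $\sup\kappa$ respectively, with $\sup\kappa$ finite by boundedness. The uniform bound fails on that type, so $\|\cdot\|_\infty\le\delta$ holds only off a set of measure $\eta$. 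However, the $L^2$ error remains $O(\delta+\eta^{1/2}\|\kappa\|_\infty)$, and taking $\eta=\delta^2$ preserves the $O(\delta)$ spectral bound. If the statement is read literally, I will instead assume $\kappa$ is a.e.\ equal to a uniformly approximable kernel.

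For primitivity of $\kappa_\delta^-$, infima can create zero blocks. I would handle this by intersecting the partition with the support structure of $\kappa$. Primitivity of $\kappa$ (irreducibility with aperiodicity) then transfers to the type graph whose block $(i,j)$ is present when $\kappa>0$ on a positive-measure subset of the rectangle. If necessary, I refine until each present block has positive infimum on a full-measure subset.
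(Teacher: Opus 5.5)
The genuine gap is the primitivity of $\kappa_\delta^-$, and your proposed refinement cannot supply it. The type graph you describe, with block $(i,j)$ present when $\kappa>0$ on a positive-measure part of $\mathcal{X}_i\times\mathcal{X}_j$, is essentially the type graph of $\kappa_\delta^+$. For the blockwise infimum, a block is present only when $\kappa$ is essentially bounded away from $0$ on the whole rectangle, and no refinement achieves this near a zero of $\kappa$.

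Concretely, let $\mathcal{X}=[0,1]$ with Lebesgue measure and $\kappa(x,y)=6xy$. This kernel is bounded, symmetric, continuous (so even the sup-norm approximation in the statement is available) and positive a.e. Moreover, $\mathcal{T}_\kappa$ has rank one with $\lambda_1(\mathcal{T}_\kappa)=2$. In any finite measurable partition, pigeonhole gives a cell $\mathcal{X}_i$ with $\mu(\mathcal{X}_i\cap[0,h])>0$ for every $h>0$. Hence $\operatorname{ess\,inf}_{\mathcal{X}_i\times\mathcal{X}_j}\kappa=0$ for every $j$ with $\mu(\mathcal{X}_j)>0$. So the blockwise infimum has a zero row and is reducible, however fine the partition. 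The same holds for every step kernel lying below $\kappa$ a.e., and the w.h.p. inclusion forces that ordering: otherwise, given the positions, about $\tfrac{n}{2}\int(\kappa_\delta^--\kappa)_+\,d\mu^2\to\infty$ edges of $G_{\kappa_\delta^-}$ are expected to be missing from $G_\kappa$.

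Your own fallback has the same defect, since putting $\kappa_\delta^-=0$ on the exceptional type $K^c$ isolates that type. So this clause of the statement cannot be proved as written. The paper's Step 3 does not rescue it either: $\|\phi_1^--\phi_1\|_2\to0$ does not force $\phi_1^->0$ a.e. Moreover, here $\phi_1(x)\propto x$ is not even bounded below, contrary to what that step assumes. At best you can get a quasi-irreducible $\kappa_\delta^-$ (irreducible on a subcollection of types, zero on the rest). The use of the finite-type theorems in Theorem~\ref{thm:universal-distortion} would then have to be adapted to that.

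The remaining steps are sound, and in places cleaner than the paper's. You are right that a bounded measurable kernel need not admit a partition with uniformly small oscillation. The paper's proof in fact never establishes the sup-norm claim and only produces $\|\kappa-\kappa_\delta^\pm\|_2\le\delta$. It first truncates at $M_n$ to cover unbounded $L^2$ kernels, which is why it needs a Markov bound on discrepant edges where you, using boundedness, get exact nesting.

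Your retreat to $L^2$ is therefore what is actually used, but your route to it needs repair. Continuity a.e. on $\mathcal{X}\times\mathcal{X}$ does not give uniform continuity on $K\times K$; think of $\mathbf{1}\{x+y<1\}$ on $[0,1]$. What it does give is this: partition a large compact $K$ into cells of small diameter and keep $K^c$ as one extra cell. Then, as the mesh goes to zero, the blockwise infimum and supremum converge to $\kappa$ at every continuity point outside $(K^c\times\mathcal{X})\cup(\mathcal{X}\times K^c)$. This yields small $L^2$ error by bounded convergence, plus your $O(\|\kappa\|_\infty\,\mu(K^c)^{1/2})$ term.

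For the spectral step, your observation is that for symmetric kernels $\lambda_1=\sup_{\|f\|_2=1}\langle\mathcal{T}f,f\rangle$ is $1$-Lipschitz in operator norm. This gives $|\lambda_1(\mathcal{T}_{\kappa_\delta^\pm})-\lambda_1(\mathcal{T}_\kappa)|\le\|\kappa-\kappa_\delta^\pm\|_2$ globally with constant one. The paper instead invokes Kato's perturbation theory for a simple isolated eigenvalue, which yields only a local bound with a gap-dependent constant. Your Gelfand-formula monotonicity is the same as the paper's appeal to monotonicity of the spectral radius of positive operators.
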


\begin{proof}
See Section \ref{pf:sandwich}.
\end{proof}

This framework serves as a rigorous ``bridge'' between finite-type affinity matrices and continuous latent-space kernels. The result shows that, for supercritical graphs with $\lambda_1 > 1$, the asymptotic structural behavior is a continuous functional of the underlying kernel. Although individual edge modifications are discrete and potentially unstable, the large-$n$ geometric behavior is governed by the integral operator $\mathcal{T}_\kappa$. Consequently, continuous graphons can be treated as limits of finite block models while preserving the underlying random graph mechanics. More generally, the theorem shows that the structural framework of an inhomogeneous random graph is spectrally robust as small perturbations of $\kappa$ in the $L^2$ norm induce only controllable, continuous variations in the network's spectral radius.

From a proof-theoretical standpoint, this theorem provides a rigorous reduction argument. By ``sandwiching'' the true kernel between two finite step-function approximations, we transfer the analysis of infinite-dimensional operators to the more tractable linear algebra of finite affinity matrices. Because the exact edge inclusions and spectral bounds hold for the bounding models $\kappa_\delta^\pm$, and since these coupled systems can be made arbitrarily tight relative to the true kernel operator $\mathcal{T}_\kappa$, the fundamental structural and topological properties of the continuous model are fully constrained.

\subsection{Main Result: Universal Kernel Distortion}

By combining the finite-type distortion guarantees with the metric sandwiching framework, we can now lift our analysis from discrete blocks to continuous latent spaces. The following theorem formalizes this transition, proving that the $(1\pm\varepsilon)$-distortion bounds established in Theorems~\ref{thm:lower-bound} and \ref{thm:upper-bound} generalize universally to graphs generated by arbitrary square-integrable kernels, accommodating both bounded topologies and heavy-tailed, unbounded architectures.

\begin{theoremEnv}[Universal Kernel Distortion]
\label{thm:universal-distortion}

Let $G_\kappa$ be an inhomogeneous random graph generated by a primitive kernel $\kappa \in L^2(\mathcal{X} \times \mathcal{X}, \mu \times \mu)$ with spectral radius $\lambda_1>1$. Let $\varepsilon\in(0,1)$. Suppose the landmark-based estimators $\underline{d}$ and $\overline{d}$ are constructed as defined in Section~\ref{landmark-embeddings}. Then for any two vertices $u_1,u_2$ chosen uniformly at random from the graph $G_\kappa$, we have the following:
\begin{enumerate}
\item If parameters $(\theta,r,R)$ satisfy the conditions in Theorem~\ref{thm:lower-bound},
\begin{equation*}
(1-\varepsilon)\, d_\kappa(u_1,u_2)
\le
\underline{d}(u_1,u_2)
\le
d_\kappa(u_1,u_2) \quad \text{w.h.p.}
\end{equation*}
\item If parameters $(\theta,r,R)$ satisfy the conditions in Theorem \ref{thm:upper-bound},
\begin{equation*}
d_\kappa(u_1,u_2) 
\le
\overline{d}(u_1,u_2)
\le
(1+\varepsilon)\, d_\kappa(u_1,u_2) \quad \text{w.h.p.}
\end{equation*}
\end{enumerate}
Consequently, the landmark-based embedding provides a $(1\pm\varepsilon)$-approximation of the true shortest-path distance for arbitrary $L^2$ kernels.

\end{theoremEnv}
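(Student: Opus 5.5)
The plan is to reduce to the bounded case via truncation, then sandwich the bounded kernel between finite-type models using Theorem~\ref{thm:metric-sandwich}, and finally transfer the neighborhood-growth and intersection estimates (Propositions~\ref{prop:neighborhood-growth} and~\ref{prop:intersection-growth}) rather than the final distortion statements themselves. The landmark estimators are computed on $G_\kappa$, so it is not enough to know that $d_{\kappa_\delta^\pm}$ bracket $d_\kappa$. What must be shown is that the events driving the proofs of Theorems~\ref{thm:lower-bound} and~\ref{thm:upper-bound} still hold in $G_\kappa$ with slightly perturbed exponents. These events are: a landmark set hitting $N_{k_1}(u_1)$ while missing $N_{k_2}(u_2)$, and a landmark set hitting $N_k(u_1)\cap N_k(u_2)$.

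\textbf{Step 1 (truncation).} For $\kappa\in L^2$, set $\kappa^{(K)}=\min\{\kappa,K\}$. Then $\|\mathcal{T}_{\kappa^{(K)}}-\mathcal{T}_\kappa\|\le\|\kappa-\kappa^{(K)}\|_{L^2}\to0$, so $\lambda_1(\mathcal{T}_{\kappa^{(K)}})\to\lambda_1$, and for large $K$ the truncated kernel stays supercritical. Under the uniform coupling of Theorem~\ref{thm:metric-sandwich}, $G_{\kappa^{(K)}}\subseteq G_\kappa$. This gives the lower sandwich directly.

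The upper side is more delicate. I would bound the discrepancy between $G_\kappa$ and $G_{\kappa^{(K)}}$ locally. The expected number of extra edges incident to the exploration of $N_k(u)$ is at most $|N_k(u)|\cdot\int(\kappa-K)_+\,d\mu$ up to size-biasing. Since $\kappa\in L^2$ makes the size-biased excess small, the extra edges only inflate the growth rate by $\lambda_1(\mathcal{T}_\kappa)-\lambda_1(\mathcal{T}_{\kappa^{(K)}})=o_K(1)$. In practice I would dominate the exploration of $G_\kappa$ by a branching process with operator $\mathcal{T}_\kappa$, using Lemma~\ref{lem:neighborhood-upper-bound}'s argument with $\|D\|_2$ replaced by $\|\mathcal{T}_\kappa\|$.

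\textbf{Step 2 (finite-type sandwich).} Apply Theorem~\ref{thm:metric-sandwich} to $\kappa^{(K)}$ with resolution $\delta$. This produces primitive finite-type models $\kappa_\delta^\pm$ with $\lambda_\pm=\lambda_1\pm O(\delta)+o_K(1)$, each satisfying Assumptions~\ref{assumption1}--\ref{assumption4}; the cell masses play the role of $\alpha_t$, and the law of large numbers gives $n_t/n\to\alpha_t$. Under the coupling, $N_k^{-}(u)\subseteq N_k^{\kappa}(u)\subseteq N_k^{+}(u)$. Proposition~\ref{prop:neighborhood-growth} then yields $|N_k(u)|\in[c\lambda_-^k,C\lambda_+^k]$ in $G_\kappa$ for $k\le\log_{\lambda_+}n$. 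The same sandwich also gives $d_\kappa(u_1,u_2)/\log_{\lambda_1}n\to1$ in probability, in line with the paper's citation of \citet{RGCN2}.

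\textbf{Step 3 (rerunning the landmark arguments).} In the proof of Theorem~\ref{thm:lower-bound}, the probability that a random set of size $M^j$ hits $N_{k_1}(u_1)$ but misses $N_{k_2}(u_2)$ depends only on $|N_{k_1}(u_1)|/n$ and $|N_{k_2}(u_2)|/n$. With sizes sandwiched between $\lambda_\pm^k$, the exponents change by factors $\log\lambda_\pm/\log\lambda_1=1\pm O(\delta)$. Choosing $\delta$ small relative to the slack $\varsigma$ in $R$ and in $\varepsilon-\theta$ absorbs this change, so the failure probability is still $o(1)$.

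For the upper bound, I need the lower estimate $|N_k(u_1)\cap N_k(u_2)|\gtrsim\lambda_-^{2k}/n$ in $G_\kappa$. The intersection in $G_{\kappa_\delta^-}$ is contained in the intersection in $G_\kappa$, so Proposition~\ref{prop:intersection-growth} applied to the minus model gives this, provided $2k\ge(1+\rho)\log_{\lambda_-}n$. Taking $k=\frac{1+\varepsilon}{2}d_\kappa$ and $d_\kappa\approx\log_{\lambda_1}n$, the condition holds once $O(\delta)<\varepsilon-\rho$. The case where $u_1,u_2$ lie outside the giant component has vanishing probability, by the displayed bound following Theorem~\ref{thm:upper-bound}.

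\textbf{Main obstacle.} The hardest part is the unbounded, heavy-tailed case in Step 1. High-weight vertices can create shortcuts that the bounded sandwich misses, and the containment $G_\kappa\subseteq G_{\kappa^+}$ fails. My first attempt would be the branching-process domination described above, controlling $\E|N_k(u)|$ by $\|\mathcal{T}_\kappa\|^k$. If that is too weak because of hubs (e.g.\ power-law exponent below 3, where typical distances are $o(\log n)$), I would restrict the claim to kernels with $\|\mathcal{T}_\kappa\|<\infty$ and argue that truncation changes $d_\kappa$ by a factor $1+o_K(1)$ w.h.p.
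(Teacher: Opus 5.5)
There is a genuine gap: a conditioning mismatch between the giant component of $G_\kappa$ and the giant component of the auxiliary model $G_{\kappa_\delta^-}$, which your Step~3 never addresses.

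Everything you import from the finite-type model is available only on the event $X^-$ that $u_1,u_2$ lie in the giant of $G_{\kappa_\delta^-}$. This includes the lower bound $|N_{k_1}(u_1)|\ge|\partial N^-_{k_1}(u_1)|\gtrsim\lambda_-^{k_1}$, the intersection lower bound from Proposition~\ref{prop:intersection-growth}, and the upper half of $d_\kappa(u_1,u_2)\le(1+\epsilon)\log_{\lambda_1}n$ obtained through $d_\kappa\le d_{\kappa_\delta^-}$. The theorem, however, concerns pairs in the giant of $G_\kappa$ (event $X$). Write $\zeta_f$ for the limiting giant fraction of $G_f$. Then $X^-\subseteq X$ w.h.p.\ and $\mathbb{P}(X\setminus X^-)\to\zeta_\kappa^2-\zeta_{\kappa_\delta^-}^2$. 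This is small only as $\delta\to0$ and $K\to\infty$, not as $n\to\infty$, and on $X\setminus X^-$ the minus model says nothing about the growth of $N_k(u_1)$.

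So your sentence ``choosing $\delta$ small relative to the slack $\varsigma$ \ldots\ the failure probability is still $o(1)$'' is false as written. Absorbing $\log\lambda_-/\log\lambda_1=1-O(\delta)$ into the exponents handles only pairs in $X^-$. Your closing remark about pairs outside the giant refers to the giant of $G_\kappa$, not that of $G_{\kappa_\delta^-}$.

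The repair is a limsup argument. The failure events $\{\underline{d}<(1-\varepsilon)d_\kappa\}$ and $\{\bar{d}>(1+\varepsilon)d_\kappa\}$ depend only on $G_\kappa$ and the landmark draw, not on $(\delta,K)$. So for each admissible $(\delta,K)$ you get $\limsup_n\mathbb{P}(\mathrm{fail}\mid X)\le1-\zeta_{\kappa_\delta^-}^2/\zeta_\kappa^2$. You then need continuity of the giant fraction, $\zeta_{\kappa_\delta^-}\to\zeta_\kappa$ (e.g.\ along monotone approximations $\kappa_\delta^-\uparrow\kappa$), to send this bound to $0$. This is the $\mathbb{P}(X^-)/\mathbb{P}(X)\to1$ step of the paper's proof, and you need it in both halves.

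With that step added, your route is sound and differs usefully from the paper's. The paper proceeds as follows:
\begin{itemize}
\item it truncates at $M_n$ with $n/M_n\to0$, so that $G_\kappa=G_{\kappa_{M_n}}$ w.h.p., and uses the two-sided inclusion $G_{\kappa_\delta^-}\subseteq G_\kappa\subseteq G_{\kappa_\delta^+}$;
\item it obtains the upper-bound half by transferring $\bar{d}_{\kappa_\delta^-}\le(1+\varepsilon')d_{\kappa_\delta^-}$ through monotonicity, together with $d_{\kappa_\delta^-}\le(1+\varepsilon')d_\kappa$;
\item it reruns only the lower-bound argument, using $N^-\subseteq N\subseteq N^+$.
\end{itemize}
Rerunning both arguments on $G_\kappa$ avoids the $n$-dependent plus model. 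It also avoids the paper's step $d_\kappa(u_1,s^*)+d_\kappa(u_2,s^*)\ge\bar{d}_\kappa(u_1,u_2)$. That step is not automatic, because $\bar{d}$ only uses landmarks that are simultaneously closest to $u_1$ and $u_2$, and added edges can change the closest landmarks. For the same reason, take your landmark event to be ``exactly one landmark of $S_{ij}$ in $N_k(u_1)\cup N_k(u_2)$, and it lies in the intersection''.

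Your ``main obstacle'' is not an obstacle under the stated hypotheses. For the symmetric $L^2$ kernel, $\mathcal{T}_\kappa$ is self-adjoint and Hilbert--Schmidt, so $\|\mathcal{T}_\kappa\|=\lambda_1<\infty$. Since every vertex has latent position distributed as $\mu$, path counting gives $\mathbb{E}|\partial N_k(u)|\le\langle 1,\mathcal{T}_\kappa^k 1\rangle\le\lambda_1^k$, with no loss from hubs. Markov's inequality at level $n^{\gamma}$ then supplies every upper bound you need:
\begin{itemize}
\item the bound on $|N_{k_2}(u_2)|$;
\item the bound on $|N_k(u_1)\cup N_k(u_2)|$;
\item $d_\kappa\ge(1-\epsilon)\log_{\lambda_1}n$, via $\mathbb{P}(u_2\in N_k(u_1))=\mathbb{E}|N_k(u_1)|/n$.
\end{itemize}
So drop the Step~2 claim that $|N_k(u)|\le C\lambda_+^k$ in $G_\kappa$ follows from $N_k\subseteq N_k^+$; that inclusion is unavailable after a fixed truncation. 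The restriction to $\|\mathcal{T}_\kappa\|<\infty$ you contemplate is vacuous, and Chung--Lu with exponent below $3$ is not an $L^2$ kernel.
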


\begin{proof}
See Section \ref{pf:universal-distortion}.
\end{proof}

Theorem \ref{thm:universal-distortion} establishes that the tight $(1\pm\varepsilon)$-distortion guarantees derived for finite-type affinity matrices are not structural artifacts of discrete block models, but instead reflect a fundamental metric property of general inhomogeneous random graphs. By abstracting node attributes into a continuous latent space $(\mathcal{X}, \mu)$, the theorem extends these guarantees to real-world networks in which connectivity is driven by smooth, continuous traits (such as geographical positions, semantic text embeddings, or dense social proximity vectors) as well as heavy-tailed degree distributions governed by unbounded power-law configurations, rather than rigid category assignments.

Through a metric sandwiching argument, we show that when the underlying kernel $\kappa$ is square-integrable and primitive, the continuous graph metric behaves as a stable limit of finite-dimensional constructions governed strictly by the structural parameters $(\theta, r, R)$. This ensures that the computational benefits of landmark-based approaches (namely, avoiding explicit all-pairs shortest path calculations) extend naturally to non-parametric settings. 

Algorithmically, this highlights the robustness of our multiscale sampling scheme. Even though continuous neighborhoods expand along smoothly varying frontiers rather than uniform block boundaries, the contraction of the propagation operator $\mathcal{T}_\kappa$ toward its principal eigendirection preserves the geometric intersection behavior seen in the discrete model. Ultimately, because kernel perturbations and boundary fluctuations induce only controlled, continuous distortions in the operator spectrum, Theorem \ref{thm:universal-distortion} validates distance-preserving landmark embeddings as a robust and scalable framework for structural representation across a broad class of network models.

\section{GNN-based Landmark Embeddings and Experimental Results} \label{sec:numerical}

\cite{Sarma2010ASD} proposed using one BFS to calculate the shortest path distance between every node and each landmark set in the local step given in Section~\ref{landmark-embeddings}, requiring $D$ BFS runs to generate landmark embeddings in $\reals^D$. This is prohibitive for large graphs, with complexity $O(n+m)$ per source and $O(n(n+m))$ for all pairs \citep{cormen2009introduction}. We propose replacing BFS in the local step with a GNN, which approximates landmark distances from graph structure. Once trained, GNN inference is computationally efficient, and crucially GNNs trained on small graphs can be transferred to larger ones, leveraging the transferability properties of graph neural networks on convergent graph sequences \citep{ruiz20-transf, ruiz2021transferability}. This last property is directly motivated by the IHG framework; since our theoretical results characterize distance geometry in terms of the spectral properties of $D$, graphs from the same IHG model share structural regularities that a GNN can learn and exploit across sizes.

Formally, GNNs are deep convolutional architectures tailored to graph data \citep{scarselli2008graph, kipf17-classifgcnn, defferrard17-cnngraphs, ruiz2020gnns}. Focusing on node-level data represented as $\bbX \in \reals^{n \times d}$, each GNN layer applies a graph convolution followed by a pointwise nonlinearity,
\begin{equation*}
    \bbX_\ell = \sigma \left( \sum_{k=0}^{K-1} \bbA^k \bbX_{\ell-1} \bbW_{\ell,k}\right),
\end{equation*}
where $\bbA \in \reals^{n \times n}$ is the graph adjacency, $\bbW_{\ell,k} \in \reals^{d_{\ell-1} \times d_\ell}$ are learnable parameters, and $\sigma$ is a pointwise nonlinearity. The full GNN is written compactly as $\bbY = \Phi(\bbX, \bbA; \ccalW)$. A key property inherited from graph convolutions is locality: each layer exchanges information only within one-hop neighborhoods, so $L$-layer GNNs aggregate information within $L$-hop neighborhoods. This aligns naturally with the landmark-based embedding task, where the relevant quantity---the distance from a node to a landmark---is determined by local neighborhood structure up to the appropriate radius.

\subsection*{Experimental Setup}\label{app:exp_details}

We train GNNs to approximate landmark distances in sparse, undirected, unweighted random graphs. As the canonical $T=1$ special case of the IHG model, we use Erd\H{o}s--R\'enyi graphs $\text{ER}_n(\lambda/n)$, where each pair of nodes is connected independently with probability $\lambda/n$. Setting $1 < \lambda \ll n$ ensures sparsity and the existence of a giant component w.h.p., placing the model firmly in the supercritical regime of our theory. We consider $\lambda \in \{3,4,5,6\}$ and $n \in \{25, 50, 100, 200, 400, 800, 1600, 3200\}$.

We evaluate four standard GNN architectures---GCN \citep{kipf17-classifgcnn}, GraphSAGE \citep{NIPS2017_5dd9db5e}, GAT \citep{Velickovic18-GraphAttentionNetworks}, and GIN \citep{xu2018how}---all using sum aggregation, dropout, and ReLU activations. For each architecture, we evaluate nine models with $\lfloor\sqrt{n}\rfloor$ nodes in the first and last layers and varying hidden-layer depth and width.

Each graph is treated as a batch of nodes with a 200-50-50 train-validation-test split. Input signals $\bbX \in \reals^{n \times r}$ one-hot encode landmark nodes, and outputs $\bbY \in \reals^{n \times r}$ represent shortest path distances $[\bbY]_{us} = d(u,s)$. Training runs for 1000 epochs with early stopping (100 epochs), MSE loss, Adam optimizer (lr=0.01, weight decay=0.0001), and a cyclic-cosine learning rate schedule (0.001--0.1 for 10 cycles).

\subsection{Experiment 1: Learning the GNNs}\label{exp1app}

In the first experiment, we evaluate the ability of trained GNNs to compute end-to-end shortest paths. We consider $n=50$ and set the GNN depth to be larger than $\lceil \log_\lambda n \rceil$. Figure~\ref{fig:exp1app} plots the actual shortest path distances versus those predicted by our selected GNN architectures. Predictions for distances beyond the GNN depth saturate, indicating that GNNs cannot capture longer distances even with depth exceeding the expected path length. As expected, GNNs are not suitable for computing end-to-end shortest path distances, especially on sparser graphs with $\lambda \in \{3,4\}$, which tend to exhibit longer paths.

\begin{figure*}[h]
\centering
\includegraphics[width=0.49\textwidth]{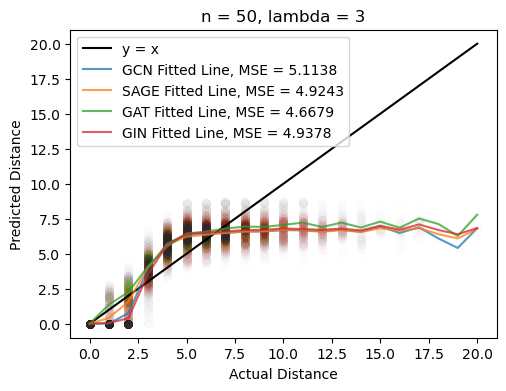}
\includegraphics[width=0.49\textwidth]{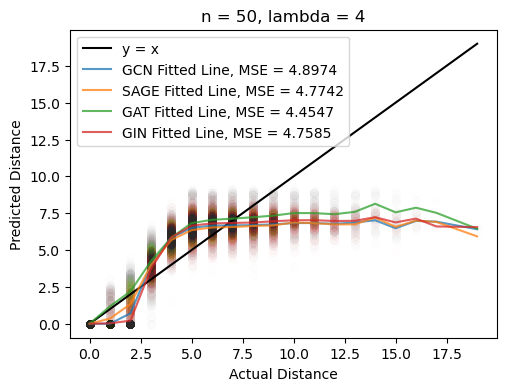}\\
\:\includegraphics[width=0.49\textwidth]{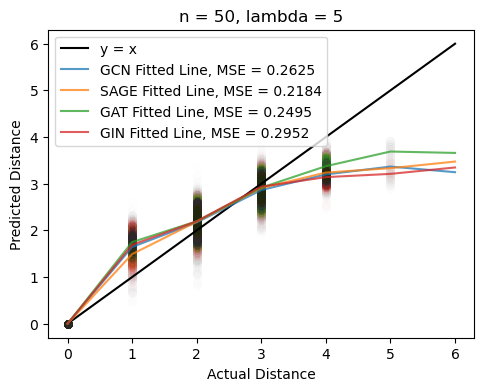}\:
\includegraphics[width=0.49\textwidth]{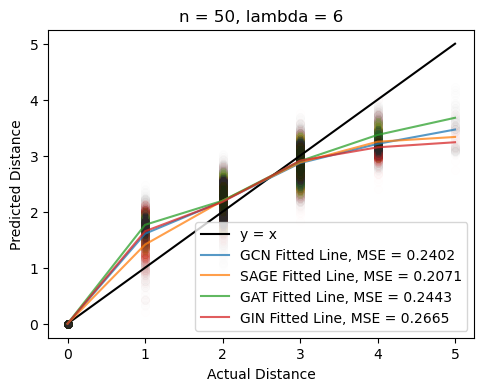}
\caption{End-to-end shortest path distance predictions from $\lfloor \sqrt{n}\rfloor\text{-64-32-16-}\lfloor\sqrt{n}\rfloor$ GNNs trained on graphs generated by $\ER(\lambda/n)$. The evaluation data consists of graphs from the same model.}
\label{fig:exp1app}
\end{figure*}

\subsection{Experiment 2: Comparing BFS-Based and GNN-Based Landmark Embeddings}\label{exp2app}

In this experiment, we compare the lower bounds resulting from BFS-based and GNN-based landmark embeddings against the actual shortest path distances. We focus on lower bounds since they depend only on coordinate-wise differences and are thus a clean measure of embedding quality independent of the landmark index matching required for upper bounds.
%\textit{Only lower bounds $\underline{d}(u_1,u_2)$ are compared to ensure a fair evaluation, as computing the upper bounds $\bar{d}(u_1,u_2)$ requires storing additional information—namely, the indices of the closest landmarks from the landmark sets to each node. Moreover, unlike in lower bound computations, the saturation effect inherent in GNNs cannot be mitigated in upper bound computations, making the upper bound an unreliable metric for shortest path approximation when calculated upon GNN-based landmark distances.}

To construct the landmark embeddings, we sample $r+1$ landmark sets $S_0, S_1, \dots, S_r$ of cardinalities $2^0, 2^1, \dots, 2^r$ with $r = \lfloor \log n \rfloor$ for $R$ repetitions. In Figure~\ref{fig:exp2}(a-d), GNN-based lower bounds underperform the vanilla lower bounds for smaller $\lambda \in \{3,4\}$, but yield substantial improvements for larger $\lambda \in \{5,6\}$ across all three tested values of $R$. Although both $\lambda$ values are in the supercritical regime ($\lambda > 1$), several factors explain this difference. As shown in Figure~\ref{fig:exp1app}, the GNN learns poorer landmark embeddings for $\lambda\in \{3,4\}$, even on small 50-node graphs. Additionally, for large $n$, graphs are almost surely connected when $\lambda\in \{5,6\}$ but not when $\lambda\in \{3,4\}$. Finally, Figure~\ref{fig:exp2}(e) illustrates that GNN-based embeddings can be generated faster than BFS-based embeddings, particularly on large graphs as exact local embedding computations via BFS scale poorly with graph size.

\begin{figure*}[h]
\centering
\includegraphics[width=1\textwidth]{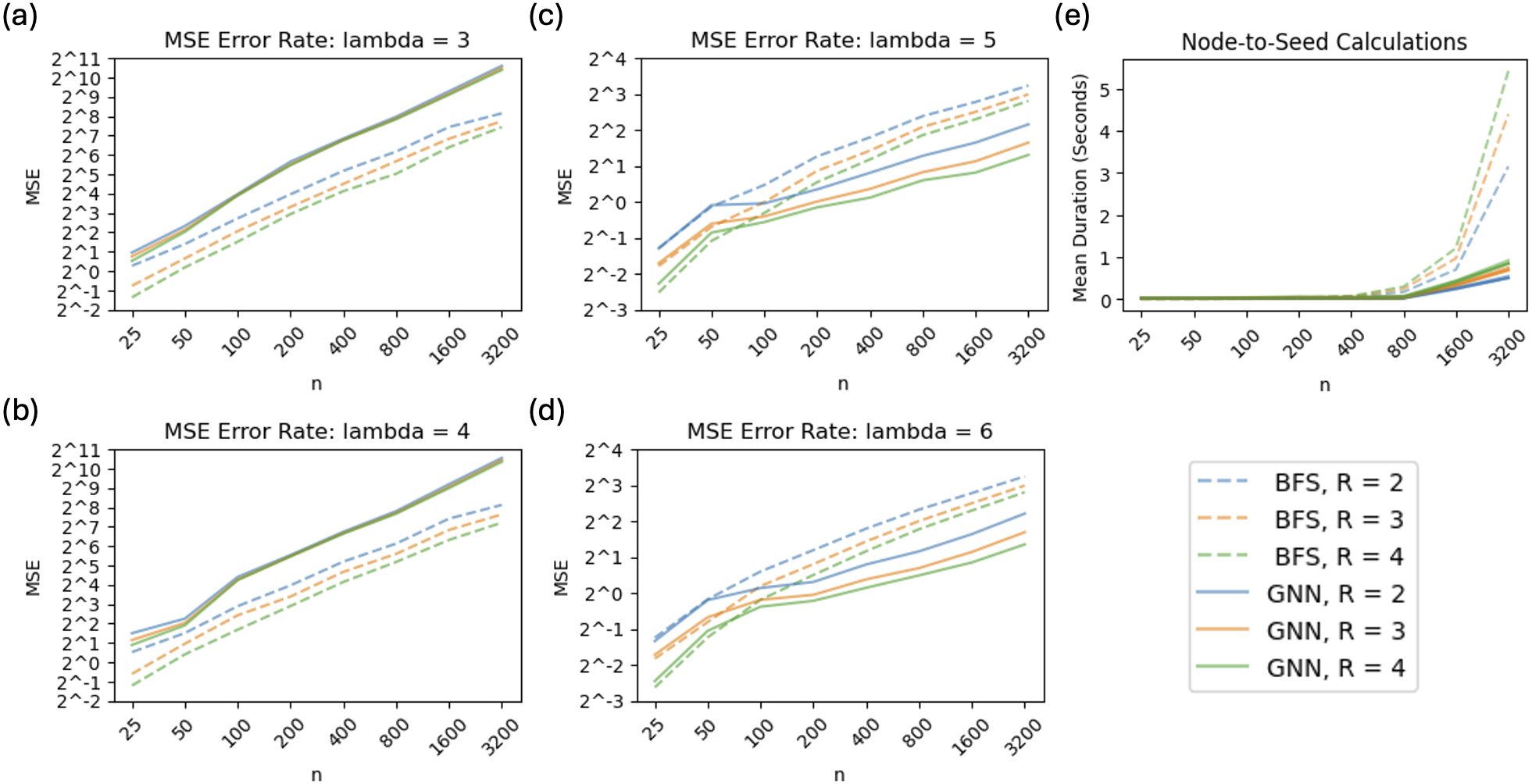}
\caption{(a)-(d) Error rates of BFS-based and GNN-based lower bounds on graphs generated by $\ER(\lambda/n)$, with the GNNs trained on graphs from the same model. (e) Time required to generate all node-to-landmark distances in $n$-node ER graphs by NetworkX's highly optimized BFS compared to our widest and deepest GNNs. All GCN, GraphSage, GAT, and GIN models are represented by the same color and solid lines for the same $R$, and the deviations between them are insignificant.}
\label{fig:exp2}
\end{figure*}

\subsection{Experiment 3: Transferability} \label{exp3}

In our last experiment, we investigate whether GNNs trained on small graphs can be transferred to compute landmark embeddings on larger networks for downstream shortest path approximation via LBs. This is motivated by \newcite{ruiz20-transf} and \newcite{ruiz2021transferability}, which show that GNNs are transferable as their outputs converge on convergent graph sequences. This, in turn, allows models trained on smaller graphs to generalize to similar larger graphs.

Here, we focus on $\lambda \in \{5,6\}$ and train a sequence of eight GNNs on ER graphs ranging from $n=25$ to $n=3200$ nodes. These GNNs are then used to generate local node embeddings on graphs from the ER model with the same $\lambda$ and $n' = 12800$ nodes. Figure~\ref{fig:exp3}(a,d) shows the MSE for each instance as the training graph size increases, with flat dashed lines indicating the MSE of BFS-based LBs on the $n'$-node graph. We observe a steady decrease in MSE as $n$ grows, with GNN-based embeddings matching BFS-based performance when GNNs are trained on graphs of $n=100$, which is 128 times smaller than the target graph.

\begin{figure*}[h]
\centering
\includegraphics[width=1\textwidth]{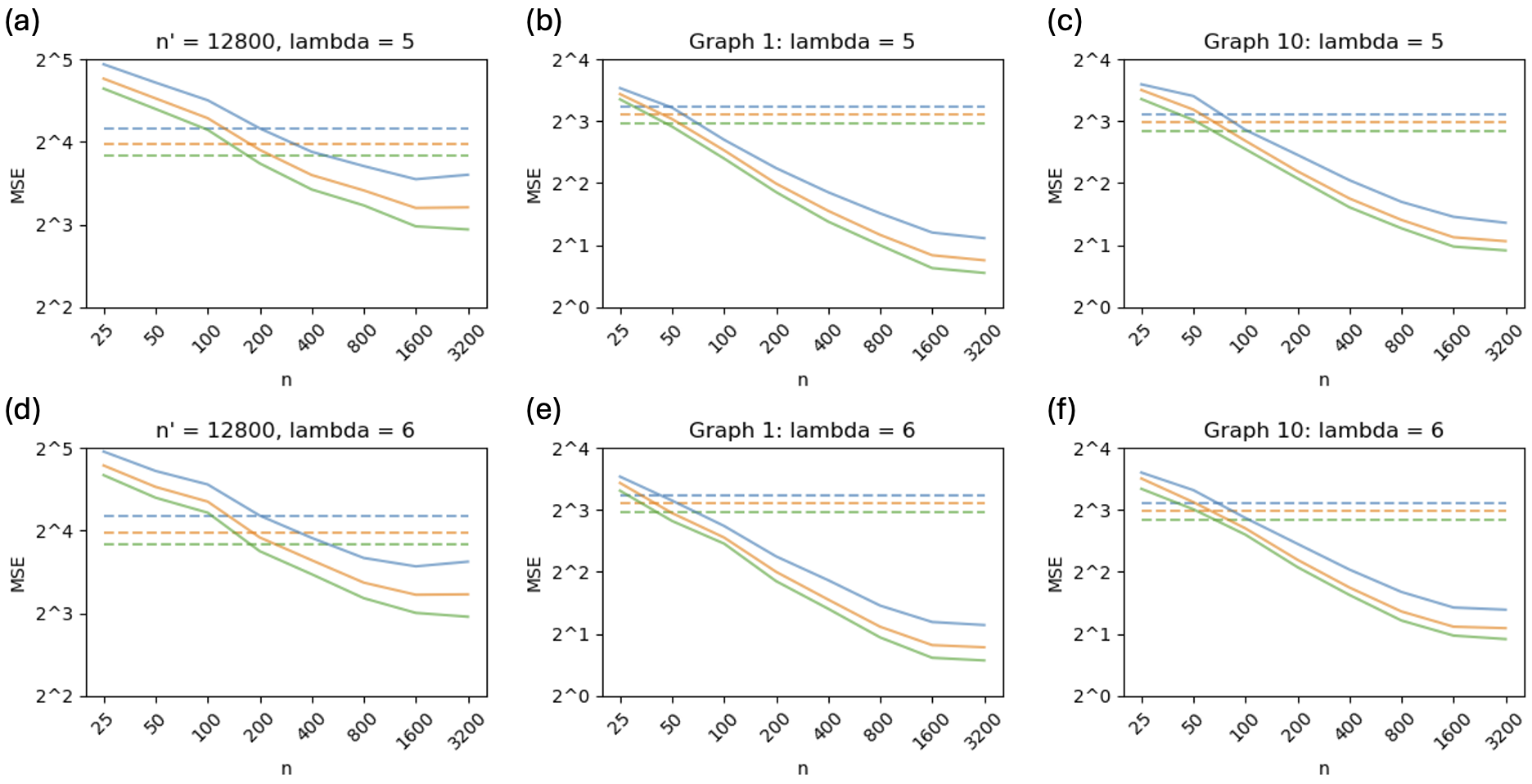}
\caption{Error rates of BFS-based and GNN-based lower bounds on (a,d) test Erd\H{o}s--R\'enyi graphs generated by $\text{ER}_{n'}(\lambda/n')$, (b,e) Arxiv COND-MAT collaboration network with 21,364 nodes, and (c,f) GEMSEC company network with 14,113 nodes, with the GNNs trained on graphs from $\ER(\lambda/n)$. Legend is the same as in Figure \ref{fig:exp2}.}
\label{fig:exp3}
\end{figure*}

\begin{table}[h]
\caption{Details on the largest connected component of selected benchmark networks.}
\centering
\resizebox{\textwidth}{!}{
\begin{tabular}{|c|c|c|c|c|} 
\hline
\# & Name & Category & \# of Nodes & \# of Edges \\
\hline
1 & Arxiv COND-MAT \citep{10.1145/1217299.1217301} & Collaboration Network & 21,364 & 91,315 \\
2 & Arxiv GR-QC \citep{10.1145/1217299.1217301} & Collaboration Network & 4,158 & 13,425 \\
3 & Arxiv HEP-PH \citep{10.1145/1217299.1217301} & Collaboration Network & 11,204 & 117,634 \\
4 & Arxiv HEP-TH \citep{10.1145/1217299.1217301} & Collaboration Network & 8,638 & 24,817 \\
5 & Oregon Autonomous System 1 \citep{10.1145/1081870.1081893} & Autonomous System & 11,174 & 23,409 \\
6 & Oregon Autonomous System 2 \citep{10.1145/1081870.1081893} & Autonomous System & 11,461 & 32,730 \\
7 & GEMSEC Athletes \citep{rozemberczki2019gemsec} & Social Network & 13,866 & 86,858 \\
8 & GEMSEC Public Figures \citep{rozemberczki2019gemsec} & Social Network & 11,565 & 67,114 \\
9 & GEMSEC Politicians \citep{rozemberczki2019gemsec} & Social Network & 5,908 & 41,729 \\
10 & GEMSEC Companies \citep{rozemberczki2019gemsec} & Social Network & 14,113 & 52,310 \\
11 & GEMSEC TV Shows \citep{rozemberczki2019gemsec} & Social Network & 3,892 & 17,262 \\
12 & Twitch-EN \citep{rozemberczki2019multiscale} & Social Network & 7,126 & 35,324 \\
13 & Deezer Europe \citep{feather} & Social Network & 28,281 & 92,752 \\
14 & LastFM Asia \citep{feather} & Social Network & 7,624 & 27,806 \\
15 & Brightkite \citep{nr} & Social Network & 56,739 & 212,945\\
16 & ER-AVGDEG10-100K-L2 \citep{nr} & Labeled Network & 99,997 & 499,359 \\
\hline
\end{tabular}\label{tab}
}
\end{table}

\begin{figure*}
\centering
\includegraphics[width=0.32\textwidth]{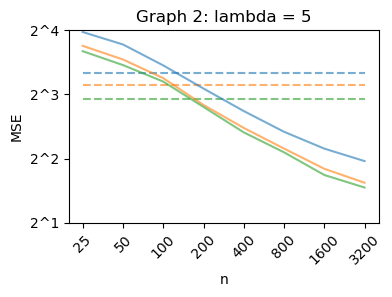}
\includegraphics[width=0.32\textwidth]{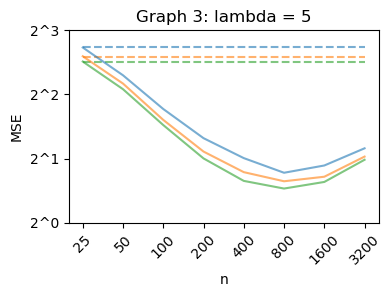}
\includegraphics[width=0.32\textwidth]{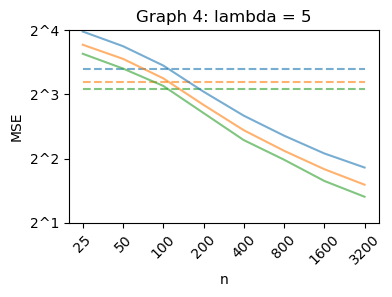}\\
\includegraphics[width=0.32\textwidth]{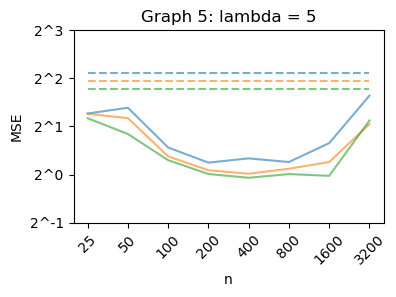}
\includegraphics[width=0.32\textwidth]{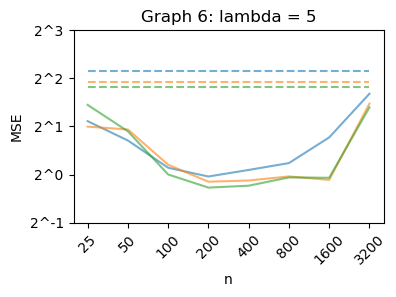}
\includegraphics[width=0.32\textwidth]{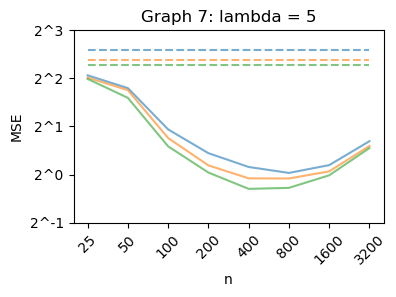}\\
\includegraphics[width=0.32\textwidth]{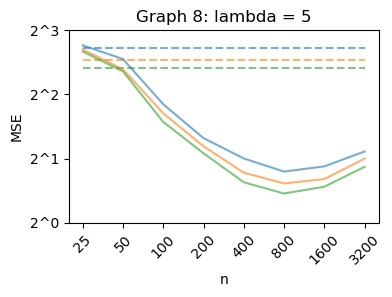}
\includegraphics[width=0.32\textwidth]{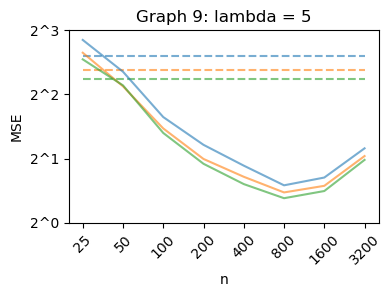}
\includegraphics[width=0.32\textwidth]{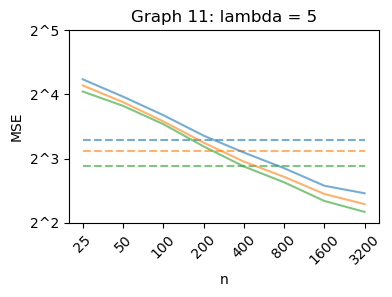}\\
\includegraphics[width=0.32\textwidth]{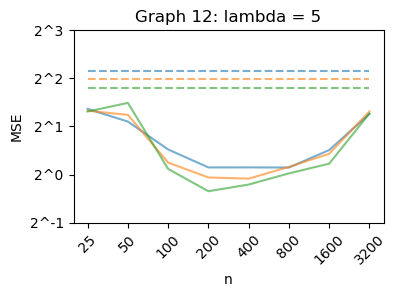}
\includegraphics[width=0.32\textwidth]{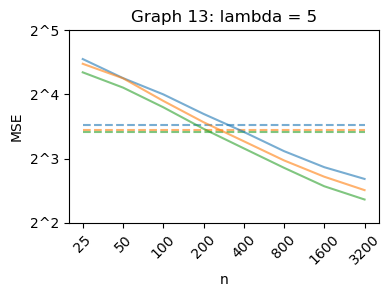}
\includegraphics[width=0.32\textwidth]{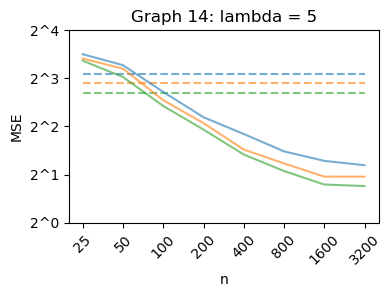}
\includegraphics[width=0.32\textwidth]{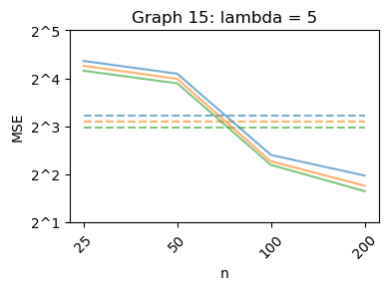}
\includegraphics[width=0.32\textwidth]{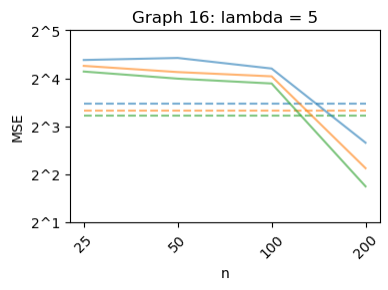}
\caption{Additional transferability results on real networks, with the GNNs trained on graphs from $\ER(\lambda/n)$. Legend is the same as in Figure \ref{fig:exp2}.}
\label{fig:exp3real}
\end{figure*}

When examining the transferability of the same set of GNNs on sixteen real-world networks listed in Table~\ref{tab}, we again observe that MSE improves with training graph size and that GNN-based lower bounds outperform BFS-based lower bounds, even though the landmark embeddings are learned on much smaller graphs (see Figures~\ref{fig:exp3} and \ref{fig:exp3real}). This can be explained as random graphs can model real-world networks in certain scenarios, and networks with similar sparsity likely exhibit similar local structures which local message-passing in GNNs can learn with sufficient training.

All experiments use PyTorch Geometric \citep{fey2019fast} on a Lambda Vector 1 machine (AMD Ryzen Threadripper PRO 5955WX CPU, 16 cores, 128 GB RAM, 2× NVIDIA RTX 4090 GPUs, no parallel training). Code is available at \url{https://github.com/ruiz-lab/shortest-path}.

\section{Proofs}
This section contains the proofs of all theoretical results in the paper.

\subsection{Proof of Lemma \ref{lem:neighborhood-upper-bound}}\label{pf:neighborhood-upper-bound}

Let
$$D
=\begin{bmatrix}
 d_{11} &  d_{12} & \cdots &  d_{1T} \\
 d_{21} &  d_{22} & \vdots &  d_{2T} \\
\vdots & \vdots & \ddots & \vdots \\
 d_{T1} &  d_{T2} & \cdots &  d_{TT}
\end{bmatrix}
=\begin{bmatrix}
v_1 \\
v_2 \\
\vdots \\
v_{T}
\end{bmatrix}
=\begin{bmatrix}
v'_1 & v'_2 & \cdots & v'_{T}
\end{bmatrix}.$$
By construction of $G$, we have $\partial N_k(u)=\cup_{t=1}^{T} \partial N_k(u)_t$ and $\partial N_k(u)_t\cap \partial N_{k'}(u)_{t'}=\emptyset$ for any distances $k\neq k'$ or any types $t\neq t'$. Let $I_{xy}$ be the indicator random variable for the edge $\{x,y\}$ being present. Then for $k\geq 1$,
\begin{align*}
    \E(|\partial N_k(u)_t|\mid N_{k-1}(u))&\leq\E\left(\sum_{t'=1}^{T}\sum_{x\in\partial N_{k-1}(u)_{t'}}\sum_{\substack{y \text{ is of type }t\\y\notin N_{k-1}(u)}}I_{xy}\mid N_{k-1}(u)\right)\\
    &= \sum_{t'=1}^{T} |\partial N_{k-1}(u)_{t'}|\left(n_t-\sum_{l=0}^{k-1}|\partial N_l(u)_t|\right)\frac{ d_{t't}}{n_t}.
\end{align*}
It follows that
\begin{align*}
    \E(|\partial N_k(u)_{t_k}|)&=\E(\E(|\partial N_k(u)_{t_k}|\mid N_{k-1}(u))) \\
    &\leq\E\left(\sum_{t_{k-1}=1}^{T} |\partial N_{k-1}(u)_{t_{k-1}}|\left(n_{t_k}-\sum_{l=0}^{k-1}|\partial N_l(u)_{t_k}|\right)\frac{ d_{t_{k-1}t_k}}{n_{t_k}}\right)\\
    &\leq \sum_{t_{k-1}=1}^{T}  d_{t_{k-1}t_k} \E(|\partial N_{k-1}(u)_{t_{k-1}}|)\\
    &\leq \sum_{t_{k-1}=1}^{T}  d_{t_{k-1}t_k} \sum_{t_{k-2}=1}^{T}  d_{t_{k-2}t_{k-1}} \E(|\partial N_{k-2}(u)_{t_{k-2}}|)\\
    &\leq \cdots \leq \sum_{t_{k-1}=1}^{T}  d_{t_{k-1}t_k} \sum_{t_{k-2}=1}^{T}  d_{t_{k-2}t_{k-1}}\cdots \sum_{t_1=1}^{T}  d_{t_1 t_2} \E(|\partial N_1(u)_{t_1}|).\\
\end{align*}
Since
\begin{align*}
    \E(|\partial N_1(u)_{t_1}|) =
    \begin{cases}
    1 \cdot n_{t_1} \frac{ d_{t_0 t_1}}{n_{t_1}} & \text{if } t_1 \neq t_0 \\
    1 \cdot (n_{t_1} - 1) \frac{ d_{t_0 t_1}}{n_{t_1}} & \text{otherwise}
    \end{cases}
    \leq  d_{t_0 t_1},
\end{align*}
we have for all $k\geq 2$ that
\begin{align}
    \E(|\partial N_k(u)_{t_k}|) &\leq \sum_{t_{k-1}=1}^{T}  d_{t_{k-1}t_k} \sum_{t_{k-2}=1}^{T}  d_{t_{k-2}t_{k-1}}\cdots \sum_{t_1=1}^{T}  d_{t_1 t_2}  d_{t_0 t_1} \notag\\
    &= \sum_{t_{k-1}=1}^{T}  d_{t_{k-1}t_k} \sum_{t_{k-2}=1}^{T}  d_{t_{k-2}t_{k-1}}\cdots \sum_{t_2=1}^{T}  d_{t_2 t_3} \left<v_{t_0},v'_{t_2}\right> \notag\\
    &= \sum_{t_{k-1}=1}^{T}  d_{t_{k-1}t_k} \sum_{t_{k-2}=1}^{T}  d_{t_{k-2}t_{k-1}}\cdots \sum_{t_3=1}^{T}  d_{t_3 t_4} \left<v_{t_0},\sum_{t_2=1}^{T}  d_{t_2 t_3} v'_{t_2}\right>
    \notag\\
    &= \sum_{t_{k-1}=1}^{T}  d_{t_{k-1}t_k} \sum_{t_{k-2}=1}^{T}  d_{t_{k-2}t_{k-1}}\cdots \sum_{t_3=1}^{T}  d_{t_3 t_4} \left< v_{t_0},D v'_{t_3}  \right>\notag\\
    &= \sum_{t_{k-1}=1}^{T}  d_{t_{k-1}t_k} \sum_{t_{k-2}=1}^{T}  d_{t_{k-2}t_{k-1}}\cdots \sum_{t_3=1}^{T}  d_{t_3 t_4} \left< D^\top v_{t_0} ,v'_{t_3} \right>\notag\\
    &= \sum_{t_{k-1}=1}^{T}  d_{t_{k-1}t_k} \sum_{t_{k-2}=1}^{T}  d_{t_{k-2}t_{k-1}}\cdots \sum_{t_4=1}^{T}  d_{t_4 t_5} \left< (D^\top)^2 v_{t_0} ,v'_{t_4} \right>\notag\\
    &= \cdots = \left< (D^\top)^{k-2} v_{t_0},v'_{t_k} \right>=\left< (D^\top)^{k-2} D^\top e_{t_0},De_{t_k} \right> = [D^k]_{t_0t_k}\notag
\end{align}
and
\begin{align*}
    \E(|N_k(&u)_{t}|)=\E(|\partial N_0(u)_{t}|)+\E(|\partial N_1(u)_{t}|)+\sum_{l=2}^k\E(|\partial N_l(u)_{t}|)\\
    &\leq 1 + d_{t_0t}+\sum_{l=2}^k \|D\|_2^{l} \leq 1 + \|D\|_2+\sum_{l=2}^k \|D\|_2^{l}.
\end{align*}

\noindent Under Assumptions \ref{assumption1} and \ref{assumption2} with $k=\Theta(\log n)$, there exists $C>0$ such that for all $k>K$,
\begin{align*}
    \E(|\partial N_k(u)_{t_k}|) &\leq \left< e_{t_0},D^ke_{t_k} \right> = [D^k]_{t_0t_k}\leq C\lambda_1^k
\end{align*}
and 
\begin{align*}
    \E(|N_k(u)_{t}&|)=\E(|\partial N_0(u)_{t}|)+\E(|\partial N_1(u)_{t}|)+\sum_{l=2}^k\E(|\partial N_l(u)_{t}|)\\
    &\leq 1 + d_{t_0t}+\sum_{l=2}^{K-1} \|D\|_2^{l} + \sum_{l=K}^k C\lambda_1^l = O(\lambda_1^k).
\end{align*}

\subsection{Proof of Lemma \ref{lem:equivalent-events}}\label{pf:equivalent-events}

%We start with proving $P(A_n \setminus B_n) \to 0$ and $P(B_n \setminus A_n) \to 0$, which is equivalent to $\mathbb{P}(A_n \triangle B_n)\to 0$, then use this result to prove $P(A_n)\to c$ for some $c>0$. First, we recall inhomogeneous random graph theory:

\noindent Recall inhomogeneous random graph theory:
\begin{enumerate}
\item[(1)] Since $\lambda_1 > 1$, the largest component satisfies $|\mathcal{C}_{(1)}| = \Theta(n)$ and all other components satisfy $|\mathcal{C}_{(i)}| = O(\log n)$ for $i>1$ w.h.p. \citep[Theorems 3.1 and 3.12]{bollobas2007phase}.
\item[(2)] Local neighborhood growth (starting from a fixed node) can be approximated by a multitype Poisson branching process with mean matrix $D$ \citep[Sections 4.1--4.2]{bollobas2007phase}.
\item[(3)] The local neighborhood explorations of two randomly chosen nodes can, w.h.p., be coupled to two independent copies of the multitype Poisson branching process up to a slowly growing logarithmic depth $k \leq \kappa \log_{\lambda_1} n$ with $\kappa \in (0, 1)$ \citep[Lemma 11.5]{bollobas2007phase}.
\item[(4)] Let $(Z^i_k)_{k\ge 0}$ be the branching process starting with one node of type $i$, and let $Z^i_k(j)$ denote the number of nodes of type $j$ at generation $k$. A Kesten–Stigum type theorem for a supercritical multitype branching process {\resultcite{grama2023kesten}{Theorem 2.7, Remark 2.8, and Corollary 2.9}} states that
\begin{itemize}
    \item On the event of survival/explosion, $Z^i_k(j)$ grows asymptotically like $[D^k]_{ij}$ {\resultcite{grama2023kesten}{Equations 1.2, 1.7, and 2.21}}.
    \item On the event of extinction, $Z^i_k(j) \to 0$ w.h.p. {\resultcite{grama2023kesten}{Equations 2.19 and 2.20}}.
\end{itemize}
\end{enumerate}

\noindent If \(B_n\) does not occur, then at least one of \(u_1\) or \(u_2\) must lie in \(\mathcal{C}_{(i)}\) for some \(i>1\). Since \(|\mathcal{C}_{(i)}| = O(\log n)\) for all \(i>1\) w.h.p.\ according to (1), \(A_n\) cannot occur as it explicitly requires both components containing \(u_1\) and \(u_2\) to have size at least \(n^{\Omega(1)}\) w.h.p. Since \(B_n^c \subseteq A_n^c\), we immediately have \(\mathbb{P}(A_n \setminus B_n) \to 0\) as $n \to \infty$.

\noindent Conversely, to establish that \(\mathbb{P}(B_n \setminus A_n) \to 0\), we look at the neighborhood exploration of $u_1$ and $u_2$ given that $B_n$ occurs. By (2) and (3), the local explorations can be jointly coupled to two branching processes until generation $k \leq \kappa \log_{\lambda_1} n$. Under this coupling, a node belongs to the giant component $\mathcal{C}_{(1)}$ if and only if its locally coupled branching process avoids early extinction and enters the survival/explosion regime. Because $B_n$ implies $u_1, u_2 \in \mathcal{C}_{(1)}$, it follows that both independent branching processes $\mathcal{Z}_1$ and $\mathcal{Z}_2$ survive w.h.p. By the Kesten--Stigum theorem (4), conditional on survival, the generation sizes of both processes grow exponentially at rate $\lambda_1^k$, tracking the matrix elements $[D^k]_{ij}$. This exponential growth directly fulfills the conditions defining event $A_n$. Consequently, conditional on $B_n$, event $A_n$ occurs w.h.p., meaning $\mathbb{P}(A_n \mid B_n) \to 1$, which proves $\mathbb{P}(B_n \setminus A_n) \to 0$.

\subsection{Proof of Lemma \ref{lem:neighborhood-growth}}\label{pf:neighborhood-growth}

From Markov's inequality and Lemma \ref{lem:neighborhood-upper-bound}, there exist $\delta>0$ for any $\gamma\in (\kappa_0+\kappa,1)$ such that 
$$\PR(| N_k(u_i)_t|\geq n^\gamma)\leq \frac{O(\lambda_1^{k})}{n^\gamma}\leq \frac{O\left(n^{\kappa_0+\kappa}\right)}{n^\gamma}\leq n^{-\delta}$$
for $i=1,2$ and $t\in [T]$ with $k\leq (\kappa_0+\kappa)\log_{\lambda_1} n$ with sufficiently large $n$. Then for each fixed $k\leq (\kappa_0+\kappa)\log_{\lambda_1} n$,
\begin{align}
\PR(| N_k(u_i)_t | &\leq n^\gamma \text{ for } i = 1,2 \text{ and } t\in [T] )\notag\\
&= 1-\PR(\exists i \in \{1,2\}, t \in [T] s.t.  \left| N_k(u_i)_t \right| \geq n^\gamma)\notag\\
&\geq 1- \sum_{i=1,2} \sum_{t=1}^{T} \PR(| N_k(u_i)_t|\geq n^\gamma)\geq 1-2T n^{-\delta}. \label{markov2}
\end{align}

\noindent Let $\delta_n=n^{-\beta}$ with $0<\beta<\frac{\kappa_0}{2}$. {The upper bound on $\beta$ is chosen so that the upper bound in \eqref{upper-chernoff} vanishes faster than $2T n^{-\delta}$ as $n^{\kappa_0 - 2\beta} \to \infty$, which explains the failure probability in \eqref{failure}. Since ${n_t}/{n} \geq \alpha$ for all $t \in [T]$ for some constant $\alpha > 0$ by Assumption \ref{assumption4}. Define
$$\cE'_{n,k}=\{\overrightarrow{|\partial N_{L+k}(u_i)|} \in [{b'}_i^{m}(1-\delta_n)^k(1-\alpha^{-1} n^{\gamma-1})^kD^k, {b'}_i^{M}(1+\delta_n)^k D^k]:i=1,2\}$$
where ${b'}_i^{m}=(1-\varepsilon') e_{t(u_i)}^\top D^L$ and ${b'}_i^{M}=(1+\varepsilon')e_{t(u_i)}^\top D^L$ with $0<\varepsilon'<\varepsilon$, which reflects tighter bounds than in $\cE_{n,k}$. Our next step is to bound $\E(\overrightarrow{|\partial N_{L+k}(u_i)|}\mid  N_{L+k-1}(u_i))$ and then use the Chernoff-Hoeffding bound \resultcite{dubhashi2009concentration}{Theorem 1.1} to prove that \( \cE'_{n,k} \) occurs w.h.p. given \( \cap_{l=0}^{k-1}\cE'_{n,l} \) and \( A_{b^m,b^M} \); that is, local expansions from two uniformly random nodes continue to grow exponentially up to the neighborhood radius \( (\kappa_0+\kappa)\log_{\lambda_1} n \), provided that the previous layers exhibit exponential growth.}

\noindent Let $I_{xy}$ be the indicator random variable for the edge $\{x,y\}$ being present. Let $[\partial N_{k}(u)]_{t't}$ be nodes of type $t$ not in $N_{k-1}(u)$ that have an edge with a node of type $t'$ in $\partial N_{k-1}(u)$. Conditionally on $A_{b^m,b^M}=\left\{\overrightarrow{|\partial N_L(u_i)|}\in [b_i^{m},b_i^{M}]: i=1,2\right\}$,
\begin{align*}
    \E(|\partial &N_{L+k}(u_i)_{t}|\mid  N_{L+k-1}(u_i)) \\
    &= \E\left(\sum_{\substack{x \notin N_{L+k-1}(u_i)\\ t(x)=t}}\1_{\{\exists t'\in [T] \: \exists y\in \partial N_{L+k-1}(u_i)_{t'}\::\:I_{xy}=1\}}\mid  N_{L+k-1}(u_i)\right)\\
    &=(n_t-|N_{L+k-1}(u_i)_{t}|)\left(1-\prod_{t'=1}^{T}\left(1-\frac{d_{t't}}{n_t}\right)^{|\partial N_{L+k-1}(u_i)_{t'}|}\right)
\end{align*}
Conditioning on $\cE'_{n,k-1}$ with $\lambda_1>1$, $\frac{d_{t't}}{n_t}\in [0,1]$, and $D$ being primitive, we have that
\begin{align*}
|\partial N_{L+k-1}(u_i)_{t'}|\frac{d_{t't}}{n_t}&\leq (1+\varepsilon')(1+\delta_n)^k[D^{L+k-1}]_{t(u_i)t'}\frac{d_{t't}}{\alpha n}\\
&\leq (1+\varepsilon')(1+n^{-\beta})^k C\lambda_1^{L+k-1} \frac{d_{t't}}{\alpha n}\\
&\leq (1+\varepsilon')(1+n^{-\beta})^k C d_{t't} \frac{n^{\kappa_0 + \kappa}}{\alpha n}
\end{align*}
for all $t'\in [T]$ and all $i=1,2$ for some constant $C>0$. Since $-\beta<0$ and $k\leq \kappa\log_{\lambda_1} n$, $(1+n^{-\beta})^k\to 1$ as $n\to \infty$. Since $\kappa_0 + \kappa<1$, $|\partial N_{L+k-1}(u_i)_{t'}|\frac{d_{t't}}{n_t}\leq (1+\varepsilon')(1+n^{-\beta})^k C d_{t't}\alpha \frac{n^{\kappa_0 + \kappa}}{\alpha n}$ vanishes, and so 

\begin{align*}
    \left(1 - \frac{d_{t't}}{n_t}\right)^{|\partial N_{L+k-1}(u_i)_{t'}|} = 1 - |\partial N_{L+k-1}(u_i)_{t'}|\frac{d_{t't}}{n_t}(1+o(1)),
\end{align*}
which implies
\begin{align}
    \E(|\partial &N_{L+k}(u_i)_{t}|\mid  N_{L+k-1}(u_i))\notag\\
    &=(n_t-|N_{L+k-1}(u_i)_{t}|)\left(1-\prod_{t'=1}^{T}\left(1-|\partial N_{L+k-1}(u_i)_{t'}|\frac{d_{t't}}{n_t}(1+o(1))\right)\right).\label{prod-error}
\end{align}
Since $T$ is finite, we have the identity
\begin{align}
    \prod_{t'=1}^{T} \left( 1 - A_{t'}(1 + o(1)) \right) = 1 - \left( \sum_{t'=1}^{T} A_{t'} \right) (1 + o(1)) \label{identity}
\end{align}
Applying this identity to \eqref{prod-error}, we obtain
\begin{align*}
    \E(|\partial &N_{L+k}(u_i)_{t}|\mid  N_{L+k-1}(u_i))\notag\\
    &=(n_t-|N_{L+k-1}(u_i)_{t}|)\left( \sum_{t'=1}^{T} |\partial N_{L+k-1}(u_i)_{t'}|\frac{d_{t't}}{n_t} \right) (1 + o(1))\\
    &=\left(1-\frac{|N_{L+k-1}(u_i)_{t}|}{n_t}\right)(1+o(1))\left\langle\overrightarrow{|\partial N_{L+k-1}(u_i)|},De_t\right\rangle.
\end{align*}
Conditionally on $\cap_{l=0}^{k-1}\cE'_{n,l}$ and $A_{b^m,b^M}$, from \eqref{markov2} we have
\begin{align*}
    {b'}_i^{m}(1-\delta_n)^{k-1}(1-\alpha^{-1} n^{\gamma-1})^{k}D^{k}\leq \E(\overrightarrow{|\partial N_{L+k}(u_i)|}\mid  N_{L+k-1}(u_i)) \leq {b'}_i^{M}(1+\delta_n)^{k-1} D^{k}
\end{align*}
with probability at least $1-2T n^{-\delta}$ since $1-\alpha^{-1} n^{\gamma-1}\leq 1-\frac{|N_{L+k-1}(u_i)_{t}|}{n_t}\leq 1$ for all $i=1,2$ and $t\in [T]$ with sufficiently large $n$. Denote this event $R_k$. Using $\PR(A)\leq \PR(A\mid B)+\PR(B^c)$, we obtain
\begin{align*}
    \PR({\cE'}_{n,k}^c &\mid \cap_{l=0}^{k-1}\cE'_{n,l}, A_{b^m,b^M}) \leq \PR({\cE'}_{n,k}^c \mid R_k, \cap_{l=0}^{k-1}\cE'_{n,l}, A_{b^m,b^M}) + 2T n^{-\delta}.
\end{align*}
Then by union bound and Chernoff-Hoeffding bound \resultcite{dubhashi2009concentration}{Theorem 1.1},
\begin{align*}
    &\PR({\cE'}_{n,k}^c \mid R_k, \cap_{l=0}^{k-1}\cE'_{n,l}, A_{b^m,b^M}) \\
    &\leq \sum_{i=1,2} \sum_{t=1}^{T} \PR(||\partial N_{L+k}(u_i)_t|-\E(|\partial N_{L+k}(u_i)_t|)|\geq \delta_n \E(|\partial N_{L+k}(u_i)_t|) \mid R_{k}, \cap_{l=0}^{k-1}\cE'_{n,l}, A_{b^m,b^M})\\
    &\leq \sum_{i=1,2} \sum_{t=1}^{T} 2\exp\left(-\frac{\delta_n^2}{3}\E(|\partial N_{L+k}(u_i)_t|\mid R_{k}, \cap_{l=0}^{k-1}\cE'_{n,l}, A_{b^m,b^M})\right)\\
    &\leq 4T\exp\left(-\frac{\delta_n^2}{3}{b'}_i^{m}(1-\delta_n)^{k-1}(1-\alpha^{-1} n ^{\gamma-1})^{k}D^{k}e_t\right)\\
    &=4T\exp\left(-\frac{\delta_n^2}{3}(1-\varepsilon')(1-n^{-\beta})^{k-1}(1-\alpha^{-1} n^{\gamma-1})^{k} [D^{L+k}]_{t(u_i)t}\right).
\end{align*}
Since $D$ is primitive, there exists $c>0$ such that $ [D^{L+k}]_{ij}\geq c\lambda_1^{L+k}\geq cn^{\kappa_0}$ for all pairs of types $(i,j)$ and so
\begin{align}
    \PR({\cE'}_{n,k}^c &\mid R_k, \cap_{l=0}^{k-1}\cE'_{n,l}, A_{b^m,b^M}) \notag\\
    &\leq 4T\exp\left(-\frac{n^{-2\beta}}{3}(1-\varepsilon')(1-n^{-\beta})^{k-1}(1-\alpha^{-1} n^{\gamma-1})^{k} cn^{\kappa_0}\right)\label{upper-chernoff}.    
\end{align}
Since $-\beta<0$ and $\gamma-1<0$ with $k\leq \kappa\log_{\lambda_1} n$, $(1-n^{-\beta})^{k-1}\to 1$ and $(1- \alpha^{-1} n^{\gamma-1})^k\to 1$ as $n\to \infty$. Also since $2\beta<\kappa_0$, $4T \exp\left(-\frac{n^{-2\beta}}{3}(1-\varepsilon')(1-n^{-\beta})^{k-1}(1-\alpha^{-1} n^{\gamma-1})^{k} cn^{\kappa_0}\right)$ vanishes faster than $2T n^{-\delta}$. Thus, 
\begin{align}
    \PR(\cE'_{n,k} \mid  \cap_{l=0}^{k-1}\cE'_{n,l}, A_{b^m,b^M})\geq 1-(2T+1)n^{-\delta}\label{failure}
\end{align}
for sufficiently large $n$. Then by induction,
\begin{align*}
    \PR(&\cap_{l=0}^{k}\cE'_{n,l}\mid A_{b^m,b^M})\\
    &=\PR(\cE'_{n,k} \mid  \cap_{l=0}^{k-1}\cE'_{n,l}, A_{b^m,b^M})\PR(\cE'_{n,k-1} \mid  \cap_{l=0}^{k-2}\cE'_{n,l}, A_{b^m,b^M})\dots \PR(\cE'_{n,0} \mid A_{b^m,b^M})\\
    &\geq (1-(2T+1)n^{-\delta})\cdot (1-(2T+1)n^{-\delta})\dots (1-(2T+1)n^{-\delta})\PR(\cE'_{n,0} \mid A_{b^m,b^M})\\
    &= (1-(2T+1)n^{-\delta})^{k}\PR(\cE'_{n,0} \mid A_{b^m,b^M}).
\end{align*}
Since $(1-\varepsilon')(1-\delta_n)^k(1-\alpha^{-1} n^{\gamma-1})^k\geq 1-\varepsilon$ and $(1+\varepsilon')(1+\delta_n)^k\leq 1+\varepsilon$ for sufficiently large $n$, ${b'}_i^{m}(1-\delta_n)^k(1-\alpha^{-1} n^{\gamma-1})^k\geq b_i^{m}$ and ${b'}_i^{M}(1+\delta_n)^k\leq b_i^{M}$. Therefore, $\cE'_{n,0}\subseteq A_{b^m,b^M}$ and $\cE'_{n,k}\subseteq \cE_{n,k}$ for all $k\geq 0$. Hence, $\PR(\cE'_{n,0} \mid A_{b^m,b^M})=1$ and so $$\PR(\cap_{l=0}^{k}\cE_{n,l}\mid A_{b^m,b^M})\geq \PR(\cap_{l=0}^{k}\cE'_{n,l}\mid A_{b^m,b^M})\geq (1-(2T+1)n^{-\delta})^{k}.$$

\subsection{Proof of Proposition \ref{prop:neighborhood-growth}}\label{pf:neighborhood-growth-prop}

Recall all the notation from Lemma \ref{lem:neighborhood-growth} and its proof. By Lemma \ref{lem:neighborhood-growth}, there exists $\delta>0$ such that $\PR(\cap_{l=0}^{k}\cE_{n,l} \mid A_n) \geq (1-(2T+1)n^{-\delta})^{k}\to 1$ for any $k\leq \kappa\log_{\lambda_1} n$ for all sufficiently large $n$. By Lemma \ref{lem:equivalent-events}, $\PR(A_n \setminus B_n) \to 0$ and $\PR(B_n \setminus A_n) \to 0$, so any event that holds w.h.p. under $A_n$ also holds w.h.p. under $B_n$.

\subsection{Proof of Proposition \ref{prop:intersection-growth}}\label{pf:intersection-growth}

{Similar to the proof of Lemma \ref{lem:neighborhood-growth}, this proof also consists of two main steps: first, we bound \( \E \big(|\partial N_{k_1}(u_1)_t \cap \partial N_{k_2}(u_2)_{t}| \big| N_{k_1}(u_1), N_{k_2-1}(u_2)\big) \); then we use the Chernoff--Hoeffding bound \resultcite{dubhashi2009concentration}{Theorem 1.1} to show that the intersection grows w.h.p. While the upper bound on \( \E \big(|\partial N_{k_1}(u_1)_t \cap \partial N_{k_2}(u_2)_{t}| \big| N_{k_1}(u_1), N_{k_2-1}(u_2)\big) \) is straightforward from the neighborhood growth results in Proposition \ref{prop:neighborhood-growth}, the lower bound concerns distinct radius regimes \( j \leq L \) and \( j > L \). The former, where the branching process approximation is valid but does not yield any contribution to the intersection, utilizes Markov's inequality to prove an upper bound on \( |N_{L}(u_2)_t| \), while the latter uses Lemma \ref{lem:neighborhood-growth} together with Theorem 2.8 and Corollary 2.4 from \cite{JLR00} to establish an upper bound on \( |\partial N_{k_1}(u_1)_t \cap \partial N_{j}(u_2)_t| \).}

\noindent Recall all the notation from Lemma \ref{lem:neighborhood-growth} and its proof. 
Then for every $L < k_1, j \leq k= (\kappa_0+\kappa)\log_{\lambda_1} n$ and $t\in [T]$,
\begin{align*}
    \E&(|\partial N_{k_1}(u_1)_t\cap \partial N_{j}(u_2)_t| \mid N_{k_1}(u_1),N_{j-1}(u_2))\\
    &=\E\left(\sum_{x \in \partial N_{k_1}(u_1)_t \setminus N_{j-1}(u_2)_t}\1_{\{\exists t'\in [T] \: \exists y\in \partial N_{j-1}(u_2)_{t'}\: :\: I_{xy}=1\}}\mid N_{k_1}(u_1),N_{j-1}(u_2)\right)\\
    &=\left(|\partial N_{k_1}(u_1)_t|-\sum_{i\leq j-1}|\partial N_{k_1}(u_1)_t\cap \partial N_{i}(u_2)_t|\right)\left(1-\prod_{t'=1}^{T}\left(1-\frac{d_{t't}}{n_t}\right)^{|\partial N_{j-1}(u_2)_{t'}|}\right).
\end{align*}
Conditionally on $\cap_{l=0}^{k-L}\cE_{n,l}$ with $\lambda_1>1$, $\frac{d_{t't}}{n_t}\in [0,1]$, and $D$ being primitive, Lemma \ref{lem:neighborhood-growth} implies with probability at least $\left(1-(2T+1)n^{-\delta}\right)^{k}\geq 1-k(2T+1)n^{-\delta} > 1-\log_{\lambda_1}n(2T+1)n^{-\delta}$ that
\begin{align*}
|\partial N_{j-1}(u_2)_{t'}|\frac{d_{t't}}{n_t}&\leq (1+\varepsilon)[D^{j-1}]_{t(u_2)t'} \frac{d_{t't}}{\alpha n}\leq (1+\varepsilon)C\lambda_1^{j-1} \frac{d_{t't}}{\alpha n}\leq (1+\varepsilon) C d_{t't} \frac{n^{\kappa_0 + \kappa}}{\alpha n}
\end{align*}
for all $t'\in [T]$ and some constant $C>0$. Since $\kappa_0 + \kappa<1$, $|\partial N_{j-1}(u_2)_{t'}|\frac{d_{t't}}{n_t}\leq (1+\varepsilon) C d_{t't} \frac{n^{\kappa_0 + \kappa}}{\alpha n}$ vanishes, and so 
\begin{align*}
    \left(1 - \frac{d_{t't}}{n_t}\right)^{|\partial N_{j-1}(u_2)_{t'}|} = 1 - |\partial N_{j-1}(u_2)_{t'}|\frac{d_{t't}}{n_t}(1+o(1)),
\end{align*}
the identity \eqref{identity} implies
\begin{align}
    &\E(|\partial N_{k_1}(u_1)_t\cap \partial N_{j}(u_2)_t| \mid N_{k_1}(u_1),N_{j-1}(u_2))\notag\\
    &=\left(|\partial N_{k_1}(u_1)_t|-\sum_{i\leq j-1}|\partial N_{k_1}(u_1)_t\cap \partial N_{i}(u_2)_t|\right)\left(1-\prod_{t'=1}^{T}\left(1 - |\partial N_{j-1}(u_2)_{t'}|\frac{d_{t't}}{n_t}(1+o(1))\right)\right)\notag\\
    &=\left(|\partial N_{k_1}(u_1)_t|-\sum_{i\leq j-1}|\partial N_{k_1}(u_1)_t\cap \partial N_{i}(u_2)_t|\right)\sum_{t'=1}^{T}\left(|\partial N_{j-1}(u_2)_{t'}|\frac{d_{t't}}{n_t}\right)(1+o(1))\notag\\
    &= \left(|\partial N_{k_1}(u_1)_t|-\sum_{i\leq j-1}|\partial N_{k_1}(u_1)_t\cap \partial N_{i}(u_2)_t|\right)(1+o(1))\frac{\langle\overrightarrow{|\partial N_{j - 1}(u_2)|},De_{t}\rangle}{n_t}\label{intersect-error}.
\end{align}
Again by Lemma \ref{lem:neighborhood-growth} and $D$ being primitive, we obtain
\begin{align*}
    \E \big(|\partial N_{k_1}&(u_1)_t \cap \partial N_{j}(u_2)_t| \big| N_{k_1}(u_1), N_{j-1}(u_2)\big)\leq (1+\varepsilon)[D^{k_1}]_{t(u_1)t}T(1+\varepsilon)\frac{[D^{k}]_{t(u_2)t}}{\alpha n}\\
    &\leq T(1+\varepsilon)^2C^2\lambda_1^{k_1}\frac{n^{\kappa_0+\kappa}}{\alpha n}\leq \lambda_1^{k_1}\frac{n^{-\gamma}}{7(\lfloor k \rfloor-\lfloor L \rfloor)}
\end{align*}
and so
\begin{align*}
    \E \big(|\partial N_{k_1}(u_1)_t \cap \partial N_{j}(u_2)_t| \big)&=\E\big(\E \big(|\partial N_{k_1}(u_1)_t \cap \partial N_{j}(u_2)_t| \big| N_{k_1}(u_1), N_{j-1}(u_2)\big)\big)\\
    &\leq \lambda_1^{k_1}\frac{n^{-\gamma}}{7(\lfloor k \rfloor-\lfloor L \rfloor)}
\end{align*}
for all $t \in [T]$ and all $L< j\leq k$ with $C>0$ and $0<\gamma<\min\{1-\kappa_0-\kappa,\kappa_0\}$ for sufficiently large $n$. The factor $7$ is chosen so that $x \geq 7\lambda$ where $x = \lambda_1^{k_1}\frac{n^{-\gamma}}{\lfloor k \rfloor-\lfloor L \rfloor}$ is the threshold and $\lambda = \E(|\partial N_{k_1}(u_1)_t \cap \partial N_{j}(u_2)_t|)$ is the mean, satisfying the condition required by Corollary 2.4 of \cite{JLR00} that we use in later steps.

\noindent Now let $A$ be the event that there exist $t\in[T]$ and $L< j\leq k$ such that 
$$\left|\partial N_{k_1}(u_1)_t \cap \partial N_{j}(u_2)_t\right| \geq \lambda_1^{k_1} \frac{n^{-\gamma}}{\lfloor k \rfloor-\lfloor L \rfloor}.$$ Let $B$ be the event that 
$$\E(|\partial N_{k_1}(u_1)_t \cap \partial N_{j}(u_2)_t|)\leq \lambda_1^{k_1} \frac{n^{-\gamma}}{7(\lfloor k \rfloor-\lfloor L \rfloor)}$$
for all $t \in [T]$ and all $L < j\leq k$. Hence, $$\PR(A)\leq \PR(A\mid B)+\PR(B^c) < \PR(A\mid B) + \log_{\lambda_1}n(2T+1)n^{-\delta}.$$
By Theorem 2.8 and Corollary 2.4 from \cite{JLR00} with union bound, 
\begin{align*}
    \PR(A\mid B) &\leq T(\lfloor k\rfloor-\lfloor L\rfloor)\exp\left( - \lambda_1^{k_1} \frac{n^{-\gamma}}{\lfloor k \rfloor-\lfloor L \rfloor} \right)\\
    &< T ((\kappa_0+\kappa)\log_{\lambda_1}n-\kappa_0\log_{\lambda_1}n+1) \exp\left(-
    \frac{n^{\kappa_0-\gamma}}{\lfloor k \rfloor-\lfloor L \rfloor}\right) \\
    &= T (\kappa\log_{\lambda_1}n+1) \exp \left(-n^{\gamma'}\right)
\end{align*}
for $\gamma'=\kappa_0-\gamma>0$. It follows that
\begin{align}
    \PR\Bigg(A^c \mid N_{k_1}(u_1), N_{k_2-1}(u_2)\Bigg) &> 1-T (\kappa\log_{\lambda_1}n+1) \exp \left(-n^{\gamma'}\right)-\log_{\lambda_1}n(2T+1)n^{-\delta} \notag\\
    &\geq 1-\log_{\lambda_1}n(2T+2)n^{-\delta} \label{prop-all}
\end{align}
for sufficiently large $n$ as $T (\kappa\log_{\lambda_1}n+1)\exp \left(-n^{\gamma'}\right)$ vanishes faster than $\log_{\lambda_1}n(2T+1)n^{-\delta}$. 

\noindent Since $k_1> \kappa_0\log_{\lambda_1}n$, Markov's inequality and Lemma \ref{lem:neighborhood-upper-bound} imply that there exist $\gamma''\in\left(\kappa_0,\frac{k_1}{\log_{\lambda_1}n}\right)$ and $\delta'>0$ such that 
$$\PR(| N_L(u_2)_t|\geq n^{\gamma''})\leq \frac{O(\lambda_1^{L})}{n^{\gamma''}}= \frac{O\left(n^{\kappa_0}\right)}{n^{\gamma''}}\leq n^{-\delta'}$$
for $t\in [T]$ with sufficiently large $n$. Therefore,
\begin{align}
\PR(| N_L&(u_2)_t | \leq n^{\gamma''} \text{ for } t\in [T]) = 1-\PR(\exists t \in [T]  s.t.  \left| N_L(u_2)_t \right| \geq n^{\gamma''})\notag\\
&\geq 1- \sum_{t=1}^{T} \PR(| N_L(u_2)_t|\geq n^{\gamma''}) \geq 1-T n^{-\delta'}. \label{markov3}
\end{align}
Combining \eqref{intersect-error}, \eqref{prop-all}, \eqref{markov3} with Proposition \ref{prop:neighborhood-growth} and $D$ being primitive, we have w.h.p. that
\begin{align}
    &\E \big(|\partial N_{k_1}(u_1)_t \cap \partial N_{k_2}(u_2)_{t}| \big| N_{k_1}(u_1), N_{k_2-1}(u_2)\big)\notag\\
    &{\geq \left(|\partial N_{k_1}(u_1)_t| - \sum_{L<j \leq k_2 - 1} |\partial N_{k_1}(u_1)_t \cap \partial N_{j}(u_2)_t|-|N_{L}(u_2)_t|\right) (1+o(1))\frac{\langle\overrightarrow{|\partial N_{k_2 - 1}(u_2)|},De_{t}\rangle}{n_t}}\notag\\
    &\geq \left((1-\varepsilon)[D^{k_1}]_{t(u_1)t} - (\lfloor k_2\rfloor -1-\lfloor L \rfloor) \lambda_1^{k_1} \frac{n^{-\gamma}}{\lfloor k \rfloor-\lfloor L \rfloor}-n^{\gamma''}\right) \frac{\langle(1-\varepsilon)e_{t(u_2)}^\top D^{k_2-1},De_{t}\rangle}{n_t}\notag\\
    &\geq \left((1-\varepsilon)[D^{k_1}]_{t(u_1)t} -  \lambda_1^{k_1} n^{-\gamma}-n^{\gamma''}\right) \frac{(1-\varepsilon)[D^{k_2}]_{t(u_2)t}}{n_t}\notag\\
    &\geq \left((1-\varepsilon)c\lambda_1^{k_1} - \lambda_1^{k_1} n^{-\gamma}-n^{\gamma''}\right) \frac{(1-\varepsilon)c\lambda_1^{k_2}}{n_t} \geq \frac{(1-\varepsilon)^2c^2\lambda_1^{k_1+k_2}}{2n_t} \label{lower-intersect}
\end{align}
and 
\begin{align}
    &\E \big(|\partial N_{k_1}(u_1)_t \cap \partial N_{k_2}(u_2)_{t}| \big| N_{k_1}(u_1), N_{k_2-1}(u_2)\big)
    \notag\\
    &\leq |\partial N_{k_1}(u_1)_t| (1+o(1))\frac{\langle\overrightarrow{|\partial N_{k_2 - 1}(u_2)|},De_{t}\rangle}{n_t} \leq (1+\varepsilon)[D^{k_1}]_{t(u_1)t}\frac{\langle(1+\varepsilon)e_{t(u_2)}^\top D^{k_2-1},De_{t}\rangle}{n_t} \notag\\
    &= (1+\varepsilon)^2 [D^{k_1}]_{t(u_1)t}\frac{[D^{k_2}]_{t(u_2)t}}{n_t}\leq \frac{(1+\varepsilon)^2C^2\lambda_1^{k_1+k_2}}{n_t} \label{upper-intersect}
\end{align}
for some $c,C>0$, where the last "$\geq$" holds since $\lambda_1^{k_1} n^{-\gamma}$ and $n^{\gamma''}$ grow strictly slower than $(1-\varepsilon)c\lambda_1^{k_1}$ as $\gamma>0$ and $\gamma''<\frac{k_1}{\log_{\lambda_1}n}$. {Here, the two regimes $j \leq L$ and $j > L$ are controlled by different tools: for $j > L$, the event $\mathcal{E}_{n,l-L}$ from Lemma~\ref{lem:neighborhood-growth} is in force 
and the intersection layers are bounded using the event $A^c$; for $j \leq L$, the branching process approximation 
is not yet running, so the cruder Markov bound \eqref{markov3} is used instead.}
The bounds in \eqref{lower-intersect} and \eqref{upper-intersect} hold for all $t \in [T]$ w.h.p., and we denote this event $S$.

\noindent Let $\rho>0$ and $R$ denote the event that 
$$|\partial N_{k_1}(u_1)_t \cap \partial N_{k_2}(u_2)_{t}|\notin\left(\frac{(1-\varepsilon)^3c^2\lambda_1^{k_1+k_2}}{2n_t},\frac{(1+\varepsilon)^3C^2\lambda_1^{k_1+k_2}}{n_t}\right)$$ 
for all $k_1,k_2,t$ such that $k_1+k_2\geq(1+\rho) \log_{\lambda_1}n_t$. It follows that 
\begin{align*}
    \PR(R\mid N_{k_1}(u_1), N_{k_2-1}(u_2)) \leq \PR(R\mid S, N_{k_1}(u_1), N_{k_2-1}(u_2)) + \PR(S^c\mid N_{k_1}(u_1), N_{k_2-1}(u_2))
\end{align*}
Using Chernoff-Hoeffding bound \resultcite{dubhashi2009concentration}{Theorem 1.1} and union bound,
\begin{align*}
    \PR&(R\mid S, N_{k_1}(u_1), N_{k_2-1}(u_2))\leq T(\lfloor k\rfloor-\left\lceil L \right\rceil+1)^2 2\exp\left(-\frac{\varepsilon^2}{3}\frac{(1-\varepsilon)^2c^2\lambda_1^{k_1+k_2}}{2n_t}\right)\\
    &\leq T((\kappa_0+\kappa)\log_{\lambda_1}n - \kappa_0\log_{\lambda_1}n +1)^2 2\exp\left(-\frac{\varepsilon^2}{3}\frac{(1-\varepsilon)^2c^2n^{1+\rho}}{2n}\right)\\
    &= T(\kappa\log_{\lambda_1}n +1)^2 2\exp\left(-\frac{\varepsilon^2}{3}\frac{(1-\varepsilon)^2c^2n^{\rho}}{2}\right)
\end{align*}
Therefore,
\begin{align*}
    \E(\mathbf{1}_{\{R^c\}}|N_{k_1}(u_1), N_{k_2-1}(u_2))&=\PR\left(R^c\mid N_{k_1}(u_1), N_{k_2-1}(u_2)\right)\\
    &\geq \PR(S\mid N_{k_1}(u_1), N_{k_2-1}(u_2))-\PR(R\mid S, N_{k_1}(u_1), N_{k_2-1}(u_2))
\end{align*}
where the right-hand side converges to 1. Taking expectation on both sides, we get 
$$|\partial N_{k_1}(u_1)_t \cap \partial N_{k_2}(u_2)_{t}|\in\left(\frac{(1-\varepsilon)^3c^2\lambda_1^{k_1+k_2}}{2n_t},\frac{(1+\varepsilon)^3C^2\lambda_1^{k_1+k_2}}{n_t}\right)$$ 
for all $t\in [T]$ and $L\leq k_1, k_2\leq k$ satisfying $k_1+k_2\geq(1+\rho)\log_{\lambda_1}n_t$ w.h.p.

\subsection{Proof of Theorem \ref{thm:lower-bound}}\label{pf:lower-bound}

If $u_1$ and $u_2$ are not in the same connected component, $d(u_1,u_2) = \infty$ and either $d(u_1,s) = \infty$ or $d(u_2,s) = \infty$ for any landmark $s$ sampled in the local step (see Section \ref{landmark-embeddings}). Hence, $\underline{d}(u_1,u_2) = \infty$. Now we focus on the case that $u_1$ and $u_2$ are in the same connected component (i.e., $u_1\leftrightarrow u_2$).

\noindent Let $k_1 = \varepsilon' d(u_1,u_2)$ and $k_2 = (1- \varepsilon+\varepsilon') d(u_1,u_2)$ with $0<\varepsilon'=\min\left\{\frac{\varepsilon}{2},\varepsilon -\theta\right\}-\varepsilon''<\min\left\{\frac{\varepsilon}{2},\varepsilon -\theta\right\}$
for some $\varepsilon''\in \left(0,\min\left\{\frac{\varepsilon}{2},\varepsilon -\theta\right\}\right)$ (to be chosen later).
Since $\varepsilon'<\frac{\varepsilon}{2}$, $k_1+k_2<d(u_1,u_2)$, and so $N_{k_1}(u_1) \cap N_{k_2}(u_2) = \varnothing$. 

\noindent Let $S_{ij}$ be the landmark set of size $M^i$ sampled in the $j$-th round and $Z_{ij}$ denote the event that $S_{ij}\cap N_{k_1}(u_1) \neq \varnothing$ but $S_{ij}\cap N_{k_2}(u_2) = \varnothing$. If $Z_{ij}$ happens for some $i\leq r$ and $j\leq R$, then $d(u_1,S_{ij}) \leq k_1$ and $d(u_2,S_{ij}) \geq k_2$, and consequently, $\underline{d} (u_1,u_2) \geq k_2-k_1= (1-\varepsilon) d(u_1,u_2)$. Thus, denoting $Z = \cup_{i\leq r, j\leq R} Z_{ij}$, it suffices to prove that $\PR(Z \mid u_1 \leftrightarrow u_2) \xrightarrow{\sss \PR} 1$. Since $$\PR(u_1 \leftrightarrow u_2 \text{ but } u_1,u_2\notin \mathcal{C}_{\sss (1)}\mid G) = \frac{1}{n^2}\sum_{i\geq 2} |\mathcal{C}_{\sss (i)}|^2 \leq \frac{|\mathcal{C}_{\sss (2)}|}{n} \xrightarrow{\sss \PR} 0$$ by Theorems 3.1 and 3.12 from \cite{bollobas2007phase}, it suffices to show that $\PR(Z \mid u_1, u_2\in \mathcal{C}_{\sss (1)} ) \xrightarrow{\sss \PR} 1$ (or equivalently $\PR(Z^c \mid u_1, u_2\in \mathcal{C}_{\sss (1)} )\xrightarrow{\sss \PR} 0$).

\noindent Conditioning throughout on the event
$X = \{u_1,u_2 \in \mathcal{C}_{\sss(1)}\}$, we have for each $(i,j)$ that 
\begin{align*}
  \PR(Z_{ij}\mid X) \mathbf{1}_{X}
  &= \left[\bigg(1-\frac{|N_{k_2}(u_2)|}{n}\bigg)^{M^i} - \bigg(1-\frac{|N_{k_1}(u_1)| + |N_{k_2}(u_2)|}{n}\bigg)^{M^i}\right]\mathbf{1}_{X} ,
\end{align*}
since $\PR(A^c \cap B) = \PR(B) - \PR(A\cap B)$.
By independence of $Z_{ij}$'s,
\begin{align}
  &\PR(Z^c\mid X)\mathbf{1}_{X} \\
  &=\bigg(\prod_{i=0}^{r}\bigg(1-\bigg(1-\frac{|N_{k_2}(u_2)|}{n}\bigg)^{M^i} + \bigg(1-\frac{|N_{k_1}(u_1)| + |N_{k_2}(u_2)|}{n}\bigg)^{M^i}\bigg)\bigg)^R\mathbf{1}_{X} \notag\\
  &\leq \exp\bigg(-R\sum_{i=0}^r \bigg(\bigg(1-\frac{|N_{k_2}(u_2)|}{n}\bigg)^{M^i} - \bigg(1-\frac{|N_{k_1}(u_1)| + |N_{k_2}(u_2)|}{n}\bigg)^{M^i}\bigg)\bigg)\mathbf{1}_{X} \notag\\
  & = \exp\bigg(-R \sum_{i=0}^r \frac{|N_{k_1}(u_1)|}{n} \sum_{j=0}^{M^i-1} \bigg(1-\frac{|N_{k_2}(u_2)|}{n}\bigg)^{M^i - 1 - j} \bigg(1-\frac{|N_{k_1}(u_1)| + |N_{k_2}(u_2)|}{n}\bigg)^{j} \bigg)\mathbf{1}_{X} \notag\\
  & \leq \exp\bigg(-R \frac{| N_{k_1}(u_1)|}{n} \sum_{i=0}^r M^i \bigg(1-\frac{|N_{k_1}(u_1)| + |N_{k_2}(u_2)|}{n}\bigg)^{M^i-1} \bigg)\mathbf{1}_{X} \notag\\
  &\leq \exp\bigg(-R \frac{|\partial N_{k_1}(u_1)|}{n} M^{r} \bigg(1-\frac{|N_{k_1}(u_1)| + |N_{k_2}(u_2)|}{n}\bigg)^{M^r-1} \bigg)\mathbf{1}_{X} \label{exp-eqn}
\end{align}
where the first inequality uses $1-x\leq \exp(-x)$ and the last inequality follows from $\sum_{i=0}^r M^i=\frac{M^{r+1}-1}{M-1}>\frac{M^{r+1}-M^{r}}{M-1}=M^r$.

\noindent Conditionally on $u_1,u_2$ being in the same connected component, Theorem 6.2 from \cite{RGCN2} implies that $d(u_1,u_2)/ \log_{\lambda_1}  n \xrightarrow{\sss \PR} 1$. In other words, $$(1-\epsilon)\log_{\lambda_1}  n\leq d(u_1,u_2)\leq (1+\epsilon)\log_{\lambda_1}  n \quad \text{w.h.p.}$$ for any fixed $\epsilon>0$. 
Choosing $\epsilon$ small enough so that $\varepsilon'\epsilon<1-\varepsilon'$, we then obtain $$k_1\leq \varepsilon'(1+\epsilon)\log_{\lambda_1}  n < \log_{\lambda_1} n \quad \text{w.h.p.},$$ which allows us to apply Proposition \ref{prop:neighborhood-growth} on $|\partial N_{k_1}(u_1)|$. By Proposition \ref{prop:neighborhood-growth} and $D$ being primitive, there exists $c>0$ such that
\begin{align}
    |\partial N_{k_1}(u_1)|\geq \sum_{t=1}^{T}(1-\varepsilon)[D^{k_1}]_{t(u_1)t}\geq T(1-\varepsilon)c\lambda_1^{k_1}\geq T(1-\varepsilon)c\cdot n^{\varepsilon'(1-\epsilon)} \quad \text{w.h.p.}\label{bound1}
\end{align}

\noindent Next, choosing $\epsilon$ small enough so that $\epsilon(1-\varepsilon+\varepsilon')<(1-\theta)-(1-\varepsilon+\varepsilon')$, we have that $$k_2\leq (1-\varepsilon+\varepsilon')(1+\epsilon)\log_{\lambda_1} n<\left(1-\theta\right)\log_{\lambda_1} n \quad \text{w.h.p.},$$ and so there exists $\gamma\in \left(0,1-\theta\right)$ such that $k_1<k_2<\gamma\log_{\lambda_1} n$. By Markov's inequality and Lemma \ref{lem:neighborhood-upper-bound}, there exists $\delta>0$ such that 
$$\PR(| N_{k_i}(u_i)_t|\geq n^\gamma)\leq \frac{O(\lambda_1^{k_i})}{n^\gamma}\leq n^{-\delta}$$
for $i=1,2$ and $t\in [T]$ with sufficiently large $n$. Therefore,
\begin{align*}
\PR(|N_{k_i}(u_i)_t | &\leq n^\gamma \text{ for } i = 1,2 \text{ and } t\in [T]) \\
&= 1-\PR(\exists i \in \{1,2\}, t \in [T]  s.t.  \left| N_{k_i}(u_i)_t \right| \geq n^\gamma)\notag\\
&\geq 1- \sum_{i=1,2}\sum_{t=1}^{T} \PR(| N_{k_i}(u_i)_t|\geq n^\gamma) \geq 1-2Tn^{-\delta},
\end{align*}
and so 
\begin{align}
    |N_{k_i}(u_i)|\leq T n^\gamma \quad \text{for }i=1,2 \text{ w.h.p.}\label{bound2}
\end{align}

\noindent From \eqref{bound1} and \eqref{bound2},
\begin{align*}
  \PR(Z^c\mid X) \mathbf{1}_{X} &\leq \exp\bigg(-R \frac{T(1-\varepsilon)c\cdot n^{\varepsilon'(1-\epsilon)}}{n} M^{\frac{\theta}{\log M} \log n-1} \bigg(1-\frac{2T n^\gamma}{n}\bigg)^{M^{\frac{\theta}{\log M} \log n}} \bigg)\notag\\
  &= \exp\bigg(-R \frac{T (1-\varepsilon)c\cdot n^{\left(\min\left\{\frac{\varepsilon}{2},\varepsilon -\theta\right\}-\varepsilon''\right)(1-\epsilon)}}{nM} n^{\theta} \bigg(1-\frac{2T n^\gamma}{n}\bigg)^{n^{\theta}} \bigg).
\end{align*}
Since $\gamma+\theta<1$, $\bigg(1-\frac{2T n^{\gamma}}{n}\bigg)^{n^{\theta}} \geq 1-\frac{2T n^{\gamma+\theta}}{n} \to 1$ as $n\to \infty$. Since $\epsilon$ can be chosen small enough so that $-\varepsilon''\epsilon+\varepsilon''+\epsilon\min\left\{\frac{\varepsilon}{2},\varepsilon -\theta\right\}<\varsigma$ for any $\varsigma>0$, $$R=\Omega\left(n^{1-\theta-\min\left\{\frac{\varepsilon}{2},\varepsilon -\theta\right\}+\varsigma}\right)$$ is sufficient for the final bound to tend to 0. By Assumption \ref{assumption2}, $\mathbb{P}(x\in \mathcal{C}_{(1)})$ is strictly bounded away from zero \citep[Theorem 3.1]{bollobas2007phase}, so $\mathbb{P}(X)$ does not vanish. Then since $\PR(Z^c\mid X)\mathbf{1}_{X}$ vanishes, $\PR(Z^c\mid X)$ thus vanishes.

\subsection{Proof of Theorem \ref{thm:upper-bound}}\label{pf:upper-bound}

If $u_1$ and $u_2$ are not in the same connected component, $d(u_1,u_2) = \infty$ and either $d(u_1,s) = \infty$ or $d(u_2,s) = \infty$ for any landmark $s$ sampled in the local step (see Section \ref{landmark-embeddings}). Hence, $\bar{d}(u_1,u_2) = \infty$. Now we focus on the case that $u_1$ and $u_2$ are in the same connected component (i.e., $u_1\leftrightarrow u_2$).

\noindent Let $k =\varepsilon' d(u_1,u_2)$ with 
$0<\varepsilon'=\frac{1+\varepsilon}{2}-\varepsilon''<\frac{1+\varepsilon}{2}$
for some $\varepsilon''\in \left(0,\frac{1+\varepsilon}{2}\right)$ (to be chosen later).

\noindent Let $S_{ij}$ be the landmark set of size $M^i$ sampled in the $j$-th round and $Z_{ij}$ be the event that $S_{ij}$ contains at least one landmark node in $N_k(u_1)\cap N_k(u_2)$ and none in $(N_k(u_1)\cup N_k(u_2))\setminus (N_k(u_1)\cap N_k(u_2))$. If $Z_{ij}$ happens for some $i\leq r$ and $j\leq R$, the landmarks in the intersection will be the common landmarks for calculating $\bar{d}(u_1,u_2)$, and so $\bar{d}(u_1,u_2) \leq 2k\leq (1+\varepsilon) d(u_1,u_2)$. Thus, denoting $Z = \cup_{i\leq r, j\leq R} Z_{ij}$, it suffices to prove that $\PR(Z \mid u_1 \leftrightarrow u_2) \xrightarrow{\sss \PR} 1$. Since $$\PR(u_1 \leftrightarrow u_2 \text{ but } u_1,u_2\notin \mathcal{C}_{\sss (1)}\mid G) = \frac{1}{n^2}\sum_{i\geq 2} |\mathcal{C}_{\sss (i)}|^2 \leq \frac{|\mathcal{C}_{\sss (2)}|}{n} \xrightarrow{\sss \PR} 0$$ by Theorems 3.1 and 3.12 from \cite{bollobas2007phase}, it suffices to show that $\PR(Z \mid u_1, u_2\in \mathcal{C}_{\sss (1)} ) \xrightarrow{\sss \PR} 1$ (or equivalently $\PR(Z^c \mid u_1, u_2\in \mathcal{C}_{\sss (1)} )\xrightarrow{\sss \PR} 0$). 

\noindent Conditioning throughout on the event
$X = \{u_1,u_2 \in \mathcal{C}_{\sss(1)}\}$, we have for each (i,j) that
\begin{align*}
    \PR(Z_{ij} &\mid X) \mathbf{1}_{X}\\
    &=\frac{|N_k(u_1)\cap N_k(u_2)|}{n}\left(\frac{|N_k(u_1)\cap N_k(u_2)|}{n}+1-\frac{|N_k(u_1)\cup N_k(u_2)|}{n}\right)^{|S_{ij}|-1}\mathbf{1}_{X}.
\end{align*}
By independence of $Z_{ij}$'s,
\begin{align*}
    &\PR(Z^c \mid X)\mathbf{1}_{X} \\
    &= \left(\prod_{i=0}^r\left(1-\frac{|N_k(u_1)\cap N_k(u_2)|}{n}\left(\frac{|N_k(u_1)\cap N_k(u_2)|}{n}+1-\frac{|N_k(u_1)\cup N_k(u_2)|}{n}\right)^{|S_{ij}|-1}\right)\right)^R\mathbf{1}_{X}\\
    &\leq \exp\left(-R\sum_{i=0}^{r}\frac{|\partial N_k(u_1)_{t}\cap \partial N_k(u_2)_{t}|}{n}\left(\frac{|\partial N_k(u_1)_{t}\cap \partial N_k(u_2)_{t}|}{n}+1-\frac{|N_k(u_1)|+|N_k(u_2)|}{n}\right)^{M^i-1}\right)\mathbf{1}_{X}
\end{align*}
for any node type $t\in [T]$, since $1-x\leq \exp(-x)$ for $x\geq 0$.

\noindent Conditionally on $u_1,u_2$ being in the same connected component, Theorem 6.2 from \cite{RGCN2} implies that $d(u_1,u_2)/ \log_{\lambda_1}  n \xrightarrow{\sss \PR} 1$. In other words, $$(1-\epsilon)\log_{\lambda_1}  n\leq d(u_1,u_2)\leq (1+\epsilon)\log_{\lambda_1}  n \quad \text{w.h.p.}$$ for any fixed $\epsilon>0$. Choosing $\epsilon$ small enough so that $\varepsilon'\epsilon<1-\varepsilon'$, we then obtain $$k\leq \varepsilon'(1+\epsilon)\log_{\lambda_1}n<\log_{\lambda_1}n \quad \text{w.h.p.}$$ This allows us to use Proposition \ref{prop:neighborhood-growth}.

\noindent Since $(1+\varepsilon)\log_{\lambda_1}n>\log_{\lambda_1} n_{t}$, we can choose $\varepsilon'',\epsilon$ small enough so that $$\left(-2\varepsilon''\epsilon+2\varepsilon''+\epsilon(1+\varepsilon)\right)\log_{\lambda_1}n<(1+\varepsilon)\log_{\lambda_1}n-\log_{\lambda_1} n_{t}$$ and obtain $$2k\geq 2\left(\frac{1+\varepsilon}{2}-\varepsilon''\right)(1-\epsilon)\log_{\lambda_1}n> \log_{\lambda_1} n_{t},$$ which allows us to also use Proposition \ref{prop:intersection-growth}.

\noindent Choosing $L=\kappa_0\log_{\lambda_1}n <\min\{k,\gamma\log_{\lambda_1} n\}$ for some $\kappa_0\in (0,\gamma)$ and $\gamma\in \left(0,1-\theta\right)$, we obtain from Markov's inequality and Lemma \ref{lem:neighborhood-upper-bound} that $$\PR(| N_L(u_i)_t|\geq n^\gamma)\leq \frac{O(\lambda_1^{L})}{n^\gamma}\leq n^{-\delta}$$ for all $i=1,2$ and $t \in [T]$ with some $\delta>0$ and sufficiently large $n$. Therefore,
\begin{align*}
\PR(| N_L (u_i)_t | \leq &n^\gamma \text{ for } t \in [T] \text{ and } i = 1,2) \\
&= 1-\PR(\exists t \in [T], i \in \{1,2\}  s.t.  \left| N_L(u_i)_t \right| \geq n^\gamma)\notag\\
&\geq 1- \sum_{i=1,2}\sum_{t=1}^{T} \PR(| N_L(u_i)_t|\geq n^\gamma) \geq 1-2T n^{-\delta},
\end{align*}
and so $|N_{L}(u_i)|\leq T n^\gamma$ for $i=1,2$ w.h.p. Then by Proposition \ref{prop:neighborhood-growth} and $D$ being primitive, there exists $C>0$ such that
\begin{align*}
    |N_{k}(u_i)|&=|N_{L}(u_i)|+\sum_{l=L+1}^{k}|\partial N_{l}(u_i)|\leq T n^\gamma+\sum_{l=L+1}^{k}\sum_{t=1}^{T}(1+\varepsilon)[D^{l}]_{t(u_i)t}\\
    &\leq T n^\gamma+\sum_{l=L+1}^{k}\sum_{t=1}^{T}(1+\varepsilon)C\lambda_1^l < T n^\gamma+\log_{\lambda_1}nT (1+\varepsilon)C\lambda_1^k
\end{align*}
for $i=1,2$ w.h.p. Combining these with Proposition \ref{prop:intersection-growth} and $n_{t}\leq n$, we have w.h.p. that
\begin{align*}
    &\PR(Z^c \mid X)\mathbf{1}_{X} \\
    &\leq \exp\left(-R\frac{(1-\varepsilon)^3c^2\lambda_1^{2k}}{2n^2}\sum_{i=0}^{r}\left(\frac{(1-\varepsilon)^3c^2\lambda_1^{2k}}{2n^2}+1-\frac{2T n^\gamma+2\log_{\lambda_1}nT (1+\varepsilon)C\lambda_1^k}{n}\right)^{M^i-1}\right)
\end{align*}

\noindent Since $\gamma<1$ and $0<\lambda_1^{k}<n$, $$0<\frac{2T n^\gamma+2\log_{\lambda_1}nT (1+\varepsilon)C\lambda_1^k}{n}-\frac{(1-\varepsilon)^3c^2\lambda_1^{2k}}{2n^2}< 1$$
for sufficiently large $n$. Also since $\sum_{i=0}^r(M^i-1)< \sum_{i=0}^r M^i=\frac{M^{r+1}-1}{M-1}\leq \frac{n^\theta M}{M-1}$,
\begin{align*}
    &\PR(Z^c \mid X)\mathbf{1}_{X}\\
    &\leq \exp\left(-R\frac{(1-\varepsilon)^3c^2\lambda_1^{2k}}{2n^2}\sum_{i=0}^{r}\left(1-\left(\frac{2T n^\gamma}{n}+\frac{2\log_{\lambda_1}nT (1+\varepsilon)C\lambda_1^k}{n}-\frac{(1-\varepsilon)^3c^2\lambda_1^{2k}}{2n^2}\right)(M^i-1)\right)\right)\\
    &< \exp\left(-R\frac{(1-\varepsilon)^3c^2\lambda_1^{2k}}{2n^2}\left(r -\left(\frac{2T n^\gamma}{n}+\frac{2\log_{\lambda_1}nT (1+\varepsilon)C\lambda_1^k}{n}-\frac{(1-\varepsilon)^3c^2\lambda_1^{2k}}{2n^2}\right)\frac{n^\theta M}{M-1}\right)\right).
\end{align*}
Since $\gamma<1-\theta$, $0<\frac{2T n^{\gamma+\theta}}{n}\to 0$ as $n\to \infty$. Since we can choose $\varepsilon'',\epsilon$ small enough so that $$-\varepsilon''\epsilon-\varepsilon''+\epsilon\frac{1+\varepsilon}{2}<\frac{1-\varepsilon}{2}-\theta,$$ we then have $$0<\frac{\lambda_1^{2k}}{n^2}n^{\theta}< \frac{\lambda_1^{k}}{n}n^{\theta}\leq \frac{n^{\left(\frac{1+\varepsilon}{2}-\varepsilon''\right)(1+\epsilon)+\theta}}{n}<1$$ for sufficiently large $n$. Then w.h.p.,
\begin{align*}
    \PR(Z^c \mid X)\mathbf{1}_{X}
    &< \exp\left(-R\frac{(1-\varepsilon)^3c^2n^{2\varepsilon'(1-\epsilon)}}{2n^2}\left(\frac{\theta}{\log M}\log n-1-\left(1+1-0\right)\right)\right)\label{prop-ub}\\
    &< \exp\left(-R\frac{(1-\varepsilon)^3c^2n^{2\left(\frac{1+\varepsilon}{2}-\varepsilon''\right)(1-\epsilon)}}{2n^2}\frac{\theta}{2\log M}\log n\right)\notag
\end{align*}
for some $c>0$. Since $\varepsilon''$ and $\epsilon$ can be chosen to be small enough so that $2\left(-\varepsilon''\epsilon+\varepsilon''+\epsilon\frac{1+\varepsilon}{2}\right)<\varsigma$ for any $\varsigma>0$, $$R=\Omega\left(n^{2-2\frac{1+\varepsilon}{2}+\varsigma}\right)=\Omega\left(n^{1-\varepsilon+\varsigma}\right)$$ is sufficient for the final bound to tend to 0. By Assumption \ref{assumption2}, $\mathbb{P}(x\in \mathcal{C}_{(1)})$ is strictly bounded away from zero \citep[Theorem 3.1]{bollobas2007phase}, so $\mathbb{P}(X)$ does not vanish. Then since $\PR(Z^c\mid X)\mathbf{1}_{X}$ vanishes, $\PR(Z^c\mid X)$ thus vanishes.

\subsection{Proof of Theorem \ref{avg-case}}\label{pf-avg-case}

\paragraph{Lower-Bound Distortion.} Suppose Proposition \ref{prop:neighborhood-growth} holds with probability at least $1-n^{-4\delta}$ instead of w.h.p. In that case, 
\[
\mathbb{P}\left(\frac{\underline{d}(u,v)}{d(u,v)} < 1-\varepsilon \;\middle|\; u \leftrightarrow v\right) \le n^{-4\delta}.
\]
Therefore, we have:
\begin{align}
\mathbb{P}\left(\underline{d}(u,v) < (1-\varepsilon)d(u,v), \; u \leftrightarrow v\right) &\le \mathbb{P}(u \leftrightarrow v)n^{-4\delta} = \mathbb{E}\left[\frac{\sum_i|\mathcal{C}_{(i)}|^2}{n^2}\right]n^{-4\delta} \le n^{-4\delta} \label{prop-upper-bound1},
\end{align}
where in the final step we use $\mathbb{E}\left[\frac{\sum_i|\mathcal{C}_{(i)}|^2}{n^2}\right] \le \mathbb{E}\left[\frac{|\mathcal{C}_{(1)}|}{n}\right] \to \zeta < 1$ because $|\mathcal{C}_{(1)}| = \Theta(n)$ and $|\mathcal{C}_{(i)}| = O(\log n)$ for $i>1$ w.h.p. \citep[Theorems 3.1 and 3.12]{bollobas2007phase}.

\noindent Let $Y = \left\{(i,j) : \underline{d}(i,j) \ge (1-\varepsilon)d(i,j), \; i \leftrightarrow j\right\}$ denote the set of node pairs where the lower-bound distance approximation succeeds. To evaluate the global average distortion over all connected pairs, we decompose the sum over $Y$ and its complement $Y^c$:
\begin{align*}
    \frac{1}{|U|}\sum_{i\leftrightarrow j}&\frac{\underline{d}(i,j)}{d(i,j)} = \frac{1}{|U|}\sum_{(i,j)\in Y}\frac{\underline{d}(i,j)}{d(i,j)} + \frac{1}{|U|}\sum_{(i,j)\in Y^c,\,i\leftrightarrow j}\frac{\underline{d}(i,j)}{d(i,j)} \\
    &\ge \frac{1}{|U|}\sum_{(i,j)\in Y}(1-\varepsilon) + 0 = (1-\varepsilon)\left(1-\frac{|Y^c|}{|U|}\right).
\end{align*}

\noindent To show $\frac{1}{|U|}\sum_{u_1\leftrightarrow u_2}\frac{\underline{d}(u_1,u_2)}{d(u_1,u_2)}\geq 1-\varepsilon$ w.h.p., it suffices to show $\frac{|Y^c|}{|U|}\xrightarrow{\mathbb{P}} 0$, and we proceed by contradiction. Suppose $\lim_{n\to \infty}\mathbb{P}\left(\frac{|Y^c|}{|U|}> \delta\right) > 0$. Then there exists a subsequence $(n_l)_{l\ge 1}$ such that $\mathbb{P}\left(\frac{|Y^c|}{|U|} > \delta\right) \ge \varepsilon'$ for some constant $\varepsilon' > 0$. By selecting a pair of vertices $(u,v)$ uniformly at random from the graph, the joint event $\left\{\underline{d}(u,v) < (1-\varepsilon)d(u,v), \; u \leftrightarrow v\right\}$ is precisely the event that the sampled pair belongs to $Y^c$. Therefore,
\begin{align*}
    &\mathbb{P}\left(\underline{d}(u,v) < (1-\varepsilon)d(u,v), \; u \leftrightarrow v\right) = \mathbb{E}\left[\frac{|Y^c|}{n_l^2}\right] = \mathbb{E}\left[\frac{|Y^c|}{|U|} \cdot \frac{|U|}{n_l^2}\right]\\
    &\geq \mathbb{E}\left[\frac{|Y^c|}{|U|} \cdot \frac{|U|}{n_l^2} \;\middle|\; \frac{|Y^c|}{|U|} > \delta\right] \mathbb{P}\left(\frac{|Y^c|}{|U|} > \delta\right) \geq \delta \varepsilon' \mathbb{E}\left[\frac{|U|}{n_l^2} \;\middle|\; \frac{|Y^c|}{|U|} > \delta\right]
\end{align*}
Again since $|\mathcal{C}_{(1)}| = \Theta(n)$ and $|\mathcal{C}_{(i)}| = O(\log n)$ for $i>1$ w.h.p. \citep[Theorems 3.1 and 3.12]{bollobas2007phase}, $\frac{|U|}{n_l^2} = \frac{\sum_i|\mathcal{C}_{(i)}|^2}{n_l^2} \xrightarrow{\mathbb{P}} \zeta^2 > 0$ as $n_l \to \infty$. It follows that for sufficiently large $n_l$,
\begin{align*}
    \mathbb{P}&\left(\underline{d}(u,v) < (1-\varepsilon)d(u,v), \; u \leftrightarrow v\right) \ge \frac{\delta \varepsilon' \zeta^2}{2}.\\
\end{align*}
However, as $n_l \to \infty$, this strictly positive constant bound directly contradicts the polynomial upper bound established in \eqref{prop-upper-bound1} as $\frac{\delta \varepsilon' \zeta^2}{2} \gg n_l^{-4\delta}$,
where the right-hand side vanishes. Thus, $\frac{|Y^c|}{|U|} \xrightarrow{\mathbb{P}} 0$, completing the proof.

\paragraph{Upper-Bound Distortion.} Suppose that Proposition \ref{prop:intersection-growth} holds with probability at least $1-n^{-4\delta}$ instead of w.h.p. In that case, 
\[
\mathbb{P}\left(\frac{\bar{d}(u,v)}{d(u,v)} > 1+\varepsilon \;\middle|\; u \leftrightarrow v\right) \le n^{-4\delta}.
\]
Therefore, we have:
\begin{align}
\mathbb{P}\left(\bar{d}(u,v) > (1+\varepsilon)d(u,v), \; u \leftrightarrow v\right) &\le \mathbb{P}(u \leftrightarrow v)n^{-4\delta} = \mathbb{E}\left[\frac{\sum_i|\mathcal{C}_{(i)}|^2}{n^2}\right]n^{-4\delta} \le n^{-4\delta} \label{prop-upper-bound2},
\end{align}
where in the final step we use $\mathbb{E}\left[\frac{\sum_i|\mathcal{C}_{(i)}|^2}{n^2}\right] \le \mathbb{E}\left[\frac{|\mathcal{C}_{(1)}|}{n}\right] \to \zeta < 1$ as $|\mathcal{C}_{(1)}| = \Theta(n)$ and $|\mathcal{C}_{(i)}| = O(\log n)$ for $i>1$ w.h.p. \citep[Theorems 3.1 and 3.12]{bollobas2007phase}.

\noindent Furthermore, with probability $1-n^{-4\delta}$ for a sufficiently small $\delta$, the diameter of the graph components satisfies $\text{diameter}(G) \le C\log_{\lambda_1} n$ \citep{Riordan2010}. For any connected pair $i \leftrightarrow j$ and a chosen landmark $s$, the upper-bound metric satisfies $\bar{d}(i,j) \le d(i,s) + d(j,s) \le 2C\log_{\lambda_1} n$. Since $d(i,j) \ge 1$ for all distinct pairs, $\frac{\bar{d}(i,j)}{d(i,j)} \le 2C \log_{\lambda_1} n$.

\noindent Let $Y$ denote the set of node pairs where the upper-bound distance approximation succeeds. To evaluate the global average distortion over all connected pairs $|U|$, we decompose the sum over $Y$ and its complement $Y^c$:
\begin{align*}
    \frac{1}{|U|}\sum_{i\leftrightarrow j}\frac{\bar{d}(i,j)}{d(i,j)} &= \frac{1}{|U|}\sum_{(i,j)\in Y}\frac{\bar{d}(i,j)}{d(i,j)} + \frac{1}{|U|}\sum_{(i,j)\in Y^c}\frac{\bar{d}(i,j)}{d(i,j)} \\
    &\le \frac{1}{|U|}\sum_{(i,j)\in Y}(1+\varepsilon) + \frac{1}{|U|}\sum_{(i,j)\in Y^c} 2C\log_{\lambda_1} n \\
    &= (1+\varepsilon)\frac{|Y|}{|U|} + 2C\log_{\lambda_1} n \frac{|Y^c|}{|U|}\\
    &= 1 + \varepsilon + \left(2C\log_{\lambda_1} n - (1+\varepsilon)\right)\frac{|Y^c|}{|U|}.
\end{align*}
Thus, to show that $\frac{1}{|U|}\sum_{i\leftrightarrow j}\frac{\bar{d}(i,j)}{d(i,j)} \le 1+\varepsilon$ w.h.p., it suffices to establish that $\frac{|Y^c|}{|U|} \xrightarrow{\mathbb{P}} 0$, and we prove this by contradiction. Suppose $\lim_{n\to \infty}\mathbb{P}\left(\frac{|Y^c|}{|U|} > \delta\right) > 0$. Then there exists a subsequence $(n_l)_{l\ge 1}$ such that $\mathbb{P}\left(\frac{|Y^c|}{|U|} > \delta\right) \ge \varepsilon'$ for some constant $\varepsilon' > 0$. By selecting a pair of vertices $(u,v)$ uniformly at random from the graph, the joint event $\left\{\bar{d}(u,v) > (1+\varepsilon)d(u,v), \; u \leftrightarrow v\right\}$ maps precisely to sampling a pair from $Y^c$. Hence,
\begin{align*}
    &\mathbb{P}\left(\bar{d}(u,v) > (1+\varepsilon)d(u,v), \; u \leftrightarrow v\right) = \mathbb{E}\left[\frac{|Y^c|}{n_l^2}\right] = \mathbb{E}\left[\frac{|Y^c|}{|U|} \cdot \frac{|U|}{n_l^2}\right]\\
    &\ge \mathbb{E}\left[\frac{|Y^c|}{|U|} \cdot \frac{|U|}{n_l^2} \;\middle|\; \frac{|Y^c|}{|U|} > \delta\right] \mathbb{P}\left(\frac{|Y^c|}{|U|} > \delta\right) \ge \delta \varepsilon' \mathbb{E}\left[\frac{|U|}{n_l^2} \;\middle|\; \frac{|Y^c|}{|U|} > \delta\right].
\end{align*}
Again since $|\mathcal{C}_{(1)}| = \Theta(n)$ and $|\mathcal{C}_{(i)}| = O(\log n)$ for $i>1$ w.h.p. \citep[Theorems 3.1 and 3.12]{bollobas2007phase}, $\frac{|U|}{n_l^2} = \frac{\sum_i|\mathcal{C}_{(i)}|^2}{n_l^2} \xrightarrow{\mathbb{P}} \zeta^2 > 0$ as $n_l \to \infty$. It follows that for sufficiently large $n_l$,
\begin{align*}
    \mathbb{P}\left(\bar{d}(u,v) > (1+\varepsilon)d(u,v), \; u \leftrightarrow v\right) &\ge \frac{\delta \varepsilon' \zeta^2}{2}.
\end{align*}
However, as $n_l \to \infty$, this logarithmic decay rate strictly dominates the polynomial upper bound established in \eqref{prop-upper-bound2} as 
\(\frac{\delta \varepsilon'\zeta^2}{2\log_{\lambda_1} n_l} \gg n_l^{-4\delta}\),
yielding a clear contradiction. Thus, $\frac{|Y^c|}{|U|}\log_{\lambda_1} n \xrightarrow{\mathbb{P}} 0$, completing the proof.

\subsection{Proof of Theorem \ref{thm:metric-sandwich}}\label{pf:sandwich}

\paragraph{Step 1. Construction and Coupling.}
For any $\delta>0$, since $\kappa \in L^2(\mathcal{X}^2, \mu\times\mu)$ may be unbounded, we use a truncation threshold $M_n > 0$ to isolate the singularities with $\lim_{n \to \infty} \frac{n}{M_n} = 0$. Let $\kappa_{M_n}(x,y) = \min(\kappa(x,y), M_n)$ be the truncated kernel. We then construct a finite measurable partition $\mathcal{P}_\delta=\{\mathcal{X}_1,\dots,\mathcal{X}_T\}$ on the bounded domain to define the step functions:
\[
\kappa_\delta^{-}(x,y) = \sum_{i,j=1}^{T} \left( \mathrm{inf}_{(x,y)\in X_i\times X_j}\kappa_{M_n}(x,y) \right) \mathbf{1}_{X_i\times X_j}(x,y)
\]
\[
\kappa_\delta^{+}(x,y) = \sum_{i,j=1}^{T} \left( \mathrm{sup}_{(x,y)\in X_i\times X_j}\kappa_{M_n}(x,y) \right) \mathbf{1}_{X_i\times X_j}(x,y)
\]
such that 
$$\|\kappa_{M_n} - \kappa_\delta^{\pm}\|_{2}\leq \frac{\delta}{2}.$$
Consequently, $\kappa_\delta^{-}(x,y) \le \kappa_{M_n}(x,y) \le \kappa_\delta^{+}(x,y)$. On the other hand, the lower bound remains globally valid on $\kappa$, i.e. $\kappa_\delta^{-}(x,y) \le \kappa(x,y)$ $\mu\times\mu$-a.e., while the upper bound holds everywhere except on the singular tail set $\Omega_n = \{(x,y) : \kappa(x,y) > M_n\}$ where $\kappa_\delta^{+}(x,y) \le \kappa(x,y)$. Since $\kappa \in L^2$, the global energy of the kernel is finite 
($\iint \kappa^2 \, d\mu^2 < \infty$), so
\begin{equation*}
    \|\kappa - \kappa_{M_n}\|_{2}^2 = \iint_{\Omega_n} |\kappa(x,y) - M_n|^2 \, d\mu(x) \, d\mu(y)
\end{equation*}
is finite and vanishes as $M_n\to\infty$. Since $\lim_{n \to \infty} \frac{n}{M_n} = 0$, there exists an $N_\delta > 0$ such that for all $n \ge N_\delta$ the tail mass satisfies $\|\kappa - \kappa_{M_n}\|_{2} \le \frac{\delta}{2}$. Then by triangle inequality,
$$\|\kappa - \kappa_\delta^{\pm}\|_{2} \le \|\kappa - \kappa_{M_n}\|_{2} + \|\kappa_{M_n} - \kappa_\delta^{\pm}\|_{2}\leq \frac{\delta}{2} + \frac{\delta}{2} = \delta.$$

\noindent We define a coupling of the graphs $G_{\kappa_\delta^{-}}, G_{\kappa_{M_n}},G_{\kappa_\delta^{+}}$, and $G_{\kappa}$ by sampling latent positions $x_1,\dots,x_n \sim \mu$ and independent edge variables $U_{ij}\sim \mathrm{Uniform}(0,1)$. An edge $(i,j)$ exists in $G_f$ if $U_{ij}\le {f(x_i,x_j)}/{n}$. Since $\kappa_\delta^{-} \le \kappa_{M_n} \le \kappa_\delta^{+}$ holds everywhere, we preserve the deterministic inclusion $E(G_{\kappa_\delta^{-}}) \subseteq E(G_{\kappa_{M_n}}) \subseteq E(G_{\kappa_\delta^{+}})$. Let 
\begin{equation*}
    X_n = \sum_{1 \le i < j \le n} \mathbf{1}_{\{(i,j) \in E(G_\kappa) \setminus E(G_{\kappa_{M_n}})\}}.
\end{equation*}
It follows that
\begin{equation*}
    \mathbb{E}[X_n] = \sum_{1 \le i < j \le n} \mathbb{P}\big((i,j) \in E(G_\kappa) \setminus E(G_{\kappa_{M_n}})\big)
\end{equation*}
An edge $(i,j)$ lands in the singular discrepancy set if and only if its uniform variable satisfies $\frac{\kappa_{M_n}(x_i,x_j)}{n} < U_{ij} \le \frac{\kappa(x_i,x_j)}{n}$. Conditioning on the sampled latent positions $x_i, x_j \sim \mu$, the width of this window is:
\begin{align*}
    \mathbb{P}\big((i,j) \in E(G_\kappa) \setminus E(G_{\kappa_{M_n}}) \mid x_i, x_j\big) &= \frac{\kappa(x_i,x_j) - \kappa_{M_n}(x_i,x_j)}{n} \nonumber \\
    &= \frac{\kappa(x_i,x_j) - M_n}{n} \mathbf{1}_{\Omega_n}(x_i,x_j)
\end{align*}
Taking the expectation over the latent space positions and noting that there are exactly $\binom{n}{2} = \frac{n(n-1)}{2}$ identically distributed pairs, we get:
\begin{align*}
    \mathbb{E}[X_n] &= \frac{n(n-1)}{2} \cdot \frac{1}{n} \iint_{\Omega_n} \big(\kappa(x,y) - M_n\big) \, d\mu(x) \, d\mu(y) \nonumber \\
    &\le \frac{n}{2} \iint_{\Omega_n} \kappa(x,y) \, d\mu(x) \, d\mu(y)
\end{align*}

\noindent On the domain $\Omega_n$, the inequality $\kappa(x,y) > M_n$ implies $1 \le \frac{\kappa(x,y)}{M_n}$. Substituting this into the integrand allows us to pull out the truncation ceiling:
\begin{align*}
    \mathbb{E}[X_n] &\leq \frac{n}{2}\iint_{\Omega_n} \kappa(x,y) \, d\mu(x) \, d\mu(y) \le \frac{n}{2}\iint_{\Omega_n} \kappa(x,y) \left( \frac{\kappa(x,y)}{M_n} \right) \, d\mu(x) \, d\mu(y) \nonumber \\
    &\le \frac{n}{2}\frac{1}{M_n} \iint_{\mathcal{X}^2} \kappa^2(x,y) \, d\mu(x) \, d\mu(y) = \frac{n\|\kappa\|_{2}^2}{2M_n}.
\end{align*}
Applying Markov's inequality, 
\begin{equation*}
    \mathbb{P}\big(E(G_\kappa) \neq E(G_{\kappa_{M_n}})\big) \le \frac{n \|\kappa\|_{2}^2}{2M_n} \longrightarrow 0 \quad \text{as } n \to \infty,
\end{equation*}
so $E(G_\kappa) = E(G_{\kappa_{M_n}})$ w.h.p., which yields the stochastic edge set inclusion:
\[
E(G_{\kappa_\delta^{-}}) \subseteq E(G_\kappa) \subseteq E(G_{\kappa_\delta^{+}}) \quad \text{w.h.p.}
\]
Since the shortest-path distance $d(u_1,u_2)$ is a monotonically non-increasing function of the edge set, we obtain:
\[
d_{\kappa_\delta^{+}}(u_1,u_2) \le d_\kappa(u_1,u_2) \le d_{\kappa_\delta^{-}}(u_1,u_2) \quad \text{w.h.p.}
\]

\paragraph{Step 2. Spectral Convergence.}

Let $\mathcal{T}_\kappa, \mathcal{T}_{\kappa_\delta^-}, \mathcal{T}_{\kappa_\delta^+}$ denote the integral operators associated with their respective kernels, acting on $L^2(\mathcal{X},\mu)$ via $(T_f \phi)(x) = \int_{\mathcal{X}} f(x,y)\phi(y) \, d\mu(y).$. Since $\kappa \in L^2(\mathcal{X}^2, \mu \times \mu)$, these are Hilbert-Schmidt operators whose operator norms are strictly bounded by the $L^2$ norm of their kernels \citep[Theorem VI.22]{reed1980methods}:
\begin{align}
    \|\mathcal{T}_\kappa - \mathcal{T}_{\kappa_\delta^\pm}\| &\le \|\mathcal{T}_\kappa - \mathcal{T}_{\kappa_\delta^\pm}\|_{\mathrm{HS}} = \|\kappa - \kappa_\delta^\pm\|_{2} \le \delta. \label{T-eq}
\end{align}

\noindent The primitivity of $\kappa$ implies that the principal eigenvalue $\lambda_1(\mathcal{T}_\kappa)$ is algebraically simple and isolated from the rest of the spectrum by a positive spectral gap. By analytic perturbation theory for linear operators \citep[Theorem 3.16]{kato1966perturbation}, the map $T \mapsto \lambda_1(T)$ is locally Lipschitz continuous with respect to the operator norm topology. Combining this directly with our $L^2$-driven operator norm bound $\|\mathcal{T}_\kappa - \mathcal{T}_{\kappa_\delta^\pm}\| \le \delta$, the bilateral spectral deviations scale linearly with $\delta$:
\begin{equation*}
    \lambda_1(\mathcal{T}_{\kappa_\delta^\pm}) = \lambda_1(\mathcal{T}_\kappa) + O(\delta). \label{eq5}
\end{equation*}

\noindent Since the true kernel is assumed to be supercritical ($\lambda_1(\mathcal{T}_\kappa) > 1$), choosing $\delta$ sufficiently small ensures that $\lambda_1(\mathcal{T}_{\kappa_\delta^-}) > 1$. Furthermore, by construction, $\kappa_\delta^-(x,y) \le \kappa_\delta^+(x,y)$ and $\kappa_\delta^-(x,y) \le \kappa(x,y)$ hold $\mu \times \mu$-a.e.\ globally across the entire domain. By the monotonicity property of the spectral radius for positive operators \citep[Proposition 4.1]{schaefer1974banach}, this entrywise operator dominance yields $\lambda_1(\mathcal{T}_{\kappa_\delta^-}) \le \min\big\{\lambda_1(\mathcal{T}_{\kappa_\delta^+}), \lambda_1(\mathcal{T}_\kappa)\big\}$.

\paragraph{Step 3. Primitivity of $\kappa_\delta^{\pm}$.}

\noindent Since the truncation $\kappa_{M_n}(x,y) = \min(\kappa(x,y), M_n)$ only dampens large values without altering the kernel's support, $\kappa_{M_n}$ natively inherits primitivity from $\kappa$. The pointwise dominance $\kappa_\delta^{+}(x,y) \ge \kappa_{M_n}(x,y)$ everywhere on $\mathcal{X}^2$ ensures that $\kappa_\delta^{+}$ preserves these positive-measure pathways, thereby inheriting primitivity as well.

%\noindent Conversely, while taking blockwise infima can introduce zeros for coarse partitions, the primitivity of $\kappa_\delta^{-}$ is guaranteed for a sufficiently small $\delta > 0$. By \eqref{T-eq}, choosing a small $\delta$ places $\mathcal{T}_{\kappa_\delta^{-}}$ within a tight operator norm neighborhood of the unperturbed operator $\mathcal{T}_\kappa$. Because $\kappa$ is primitive, the compact positive operator $\mathcal{T}_\kappa$ possesses a unique dominant eigenvalue $\lambda_1(\mathcal{T}_\kappa)$ isolated by a positive spectral gap, and a strictly positive principal eigenfunction $\phi_1 \ge c > 0$ $\mu$-a.e.\ \citep[Proposition 4.1 and Theorem 5.2]{schaefer1974banach}. By the stability of isolated, simple eigenvalues and their spectral projections \citep{kato1966perturbation}, the perturbed operator $\mathcal{T}_{\kappa_\delta^{-}}$ retains a unique, simple dominant eigenvalue, ruling out periodicity via Theorem 3.16. Furthermore, its principal eigenfunction $\phi_1^{-}$ converges in the $L^2(\mathcal{X}, \mu)$ norm to $\phi_1$. Since $\phi_1$ is bounded away from zero $\mu$-a.e., this $L^2$ convergence ensures that $\phi_1^{-}$ remains strictly positive $\mu$-a.e.\ for sufficiently small $\delta$, ruling out reducibility via Theorem 3.5. Consequently, $\kappa_\delta^{-}$ is guaranteed to inherit primitivity.

\noindent Conversely, while blockwise infima can introduce zeros, the primitivity of $\kappa_\delta^{-}$ is preserved for a sufficiently small $\delta > 0$. Because $\kappa$ is primitive, $\mathcal{T}_\kappa$ possesses an isolated, simple dominant eigenvalue $\lambda_1(\mathcal{T}_\kappa)$ and a strictly positive principal eigenfunction $\phi_1 \ge c > 0$ $\mu$-a.e.\ \citep{schaefer1974banach}. By the spectral stability of simple eigenvalues and projections \citep{kato1966perturbation}, under the operator norm perturbation $\|\mathcal{T}_\kappa - \mathcal{T}_{\kappa_\delta^{-}}\| \le \delta$ in \eqref{T-eq}, the operator $\mathcal{T}_{\kappa_\delta^{-}}$ retains a unique dominant eigenvalue $\lambda_1(\mathcal{T}_{\kappa_\delta^{-}})$, and its principal eigenfunction satisfies $\|\phi_1^{-} - \phi_1\|_{2} \to 0$ as $\delta \to 0$. For a sufficiently small $\delta$, this $L^2$ convergence guarantees that $\phi_1^{-} > 0$ $\mu$-a.e., thereby ruling out both periodicity and reducibility (via Theorems 3.16 and 3.5). Consequently, $\kappa_\delta^{-}$ inherits primitivity.

\subsection{Proof of Theorem \ref{thm:universal-distortion}} \label{pf:universal-distortion}

\paragraph{Step 1: Kernel Approximation.}

Let $\varepsilon'\in(0,\varepsilon)$ such that $(1+\varepsilon')^2<1+\varepsilon$. By Theorem~\ref{thm:metric-sandwich}, there exists a $\delta > 0$ with primitive step-function kernels $\kappa_{\delta}^{-}$ and $\kappa_{\delta}^{+}$ such that $\|\kappa - \kappa_{\delta}^{\pm}\|_2 \le \delta$. As shown in the proof of Theorem~\ref{thm:metric-sandwich}, we can construct from these step-function kernels finite-type inhomogeneous random graphs $G_{\kappa_{\delta}^-}$ and $G_{\kappa_{\delta}^+}$ on $T$ types via a monotone coupling such that 
\begin{equation}
    E(G_{\kappa_{\delta}^-}) \subseteq E(G_\kappa) \subseteq E(G_{\kappa_{\delta}^+}) \quad \text{w.h.p.} \label{edge}
\end{equation}
and
\begin{align}
    1 < \lambda_1(\mathcal{T}_{\kappa_\delta^{-}}) \leq \min\{\lambda_1(\mathcal{T}_{\kappa_\delta^{+}}),\lambda_1(\mathcal{T}_\kappa)\} \quad \text{with} \quad \lambda_1(\mathcal{T}_{\kappa_\delta^{\pm}}) = \lambda_1(\mathcal{T}_\kappa) \pm O(\delta). \label{lambda}
\end{align}

\paragraph{Step 2: Landmark Estimation on Finite Types.}

Since $G_{\kappa_{\delta}^-}$ and $G_{\kappa_{\delta}^+}$ are finite-type graphs with primitive step functions, they satisfy Assumptions~\ref{assumption1}--\ref{assumption4}. Applying Theorems~\ref{thm:lower-bound} and \ref{thm:upper-bound} under the required conditions for $\theta, r, R$, the estimators $\underline{d}$ and $\overline{d}$ constructed on $G_\kappa$ satisfy:
\begin{align}
(1-\varepsilon')d_{\kappa_{\delta}^+}(u_1,u_2)
\le
\underline{d}_{\kappa_{\delta}^+}(u_1,u_2)
\quad\text{and}\quad
\overline{d}_{\kappa_{\delta}^-}(u_1,u_2)
\le
(1+\varepsilon')d_{\kappa_{\delta}^-}(u_1,u_2) \quad \text{w.h.p.} \label{main-bounds}
\end{align}

\paragraph{Step 3: Upper-Bound Distortion.} As explained in Section~\ref{landmark-embeddings},
\[
\bar{d}(u_1,u_2)=\min_{i,j\in \{0,\dots,(r+1)R-1\}}
\left\{
d(u_1,S_i)+d(u_2,S_j):
\arg\min_{s\in S_i} d(u_1,s)
=
\arg\min_{s\in S_j} d(u_2,s)
\right\},
\]
where \(d(u,S)=\min_{s\in S} d(u,s)\) denotes the distance between \(u\) and the closest landmark in \(S\). Since \(\bar{d}\) concerns common landmarks, without loss of generality we write
\[
\overline{d}_{\kappa_{\delta}^-}(u_1,u_2)
=
d_{\kappa_{\delta}^-}(u_1,s^*)
+
d_{\kappa_{\delta}^-}(u_2,s^*)
\]
for some landmark \(s^*\). Since shortest-path distances are monotone with respect to edge addition, \eqref{edge} implies
\[
\overline{d}_{\kappa_{\delta}^-}(u_1,u_2)
= d_{\kappa_{\delta}^-}(u_1,s^*) + d_{\kappa_{\delta}^-}(u_2,s^*) \geq d_{\kappa}(u_1,s^*) + d_{\kappa}(u_2,s^*) \geq \overline{d}_{\kappa}(u_1,u_2) \quad \text{w.h.p.}
\]
Combining this with \eqref{main-bounds}, we obtain
\[
\overline{d}_\kappa(u_1,u_2)
\le
\overline{d}_{\kappa_{\delta}^-}(u_1,u_2)
\le
(1+\varepsilon')d_{\kappa_{\delta}^-}(u_1,u_2) \quad \text{w.h.p.}
\]

\noindent To complete the proof for the upper-bound distortion, it suffices to show that $d_{\kappa_{\delta}^-}(u_1,u_2)\leq (1+\varepsilon')d_\kappa(u_1,u_2)$ w.h.p. Conditionally on $u_1,u_2$ being in the same connected component, Theorem 6.2 from \cite{RGCN2} implies that $d(u_1,u_2)/ \log_{\lambda_1} n \xrightarrow{\sss \PR} 1$. Thus, for any fixed $\epsilon>0$, the typical distances satisfy
\[
(1-\epsilon)\log_{\lambda_1(\mathcal{T}_{\kappa})} n \le d_\kappa(u_1,u_2) \le (1+\epsilon)\log_{\lambda_1(\mathcal{T}_{\kappa})} n \quad \text{w.h.p.}
\]
and similarly for $d_{\kappa_\delta^{-}}(u_1,u_2)$ with respect to $\lambda_1(\mathcal{T}_{\kappa_\delta^{-}})$. It follows that for any $\epsilon > 0$ and sufficiently large $n$, the distance ratio satisfies
\[
\frac{d_{\kappa_\delta^{-}}(u_1,u_2)}{d_\kappa(u_1,u_2)}
\le
\frac{(1+\epsilon)\log \lambda_1(\mathcal{T}_{\kappa})}{(1-\epsilon)\log \lambda_1(\mathcal{T}_{\kappa_\delta^{-}})} \quad \text{w.h.p.}
\]
Since the function $g(\lambda)=\frac{1}{\log \lambda}$ is continuous and differentiable for $\lambda>1$, the Mean Value Theorem paired with the spectral proximity $\lambda_1(\mathcal{T}_{\kappa_\delta^{-}}) = \lambda_1(\mathcal{T}_\kappa) - O(\delta)$ from \eqref{lambda} implies
\[
0
\le
\frac{1}{\log \lambda_1(\mathcal{T}_{\kappa_\delta^{-}})}
-
\frac{1}{\log \lambda_1(\mathcal{T}_{\kappa})}
\le
C\delta
\]
where $C = c_0 \sup_{\lambda \in [\lambda_1(\mathcal{T}_{\kappa})-\delta,\,\lambda_1(\mathcal{T}_{\kappa})]} \left|g'(\lambda)\right|$ for a structural constant $c_0 > 0$ matching the error bound in \eqref{lambda}. This supreme remains strictly bounded because $g'(\lambda)=\frac{-1}{\lambda (\log \lambda)^2}$ is non-zero for all $\lambda>1$. By selecting $\delta$ and $\epsilon$ sufficiently small, we guarantee that
\[
\frac{(1+\epsilon)\log \lambda_1(\mathcal{T}_{\kappa})}{(1-\epsilon)\log \lambda_1(\mathcal{T}_{\kappa_\delta^{-}})}
\le
\frac{1+\epsilon}{1-\epsilon}(1+C\delta\log \lambda_1(\mathcal{T}_{\kappa}))
\le
1+\varepsilon'
\]
which yields $d_{\kappa_\delta^{-}}(u_1,u_2) \le (1+\varepsilon')d_\kappa(u_1,u_2)$ w.h.p. Compounding these bounds into our original chain yields:
\[
\overline{d}_\kappa(u_1,u_2)
\le
(1+\varepsilon')d_{\kappa_{\delta}^-}(u_1,u_2) \le (1+\varepsilon')^2 d_\kappa(u_1,u_2) \le (1+\varepsilon) d_\kappa(u_1,u_2) \quad \text{w.h.p.}
\]

\paragraph{Step 4: Lower-Bound Distortion.}

\noindent To prove $\underline{d}_{\kappa}(u_1,u_2) \ge (1-\varepsilon)d_{\kappa}(u_1,u_2)$ w.h.p., we use proof techniques similar to those in the proof of Theorem~\ref{thm:lower-bound}. Let $Z$ be defined as in the proof of Theorem~\ref{thm:lower-bound} and $X$ be the event that $u_1,u_2$ are in the giant component for the graph $G_{\kappa}$. Let $(Z^{+},X^{+})$ and $(Z^{-},X^{-})$ denote the corresponding events for the graphs $G_{\kappa_{\delta}^{+}}$ and $G_{\kappa_{\delta}^{-}}$, respectively. From \eqref{edge}, we have $X^-\subseteq X \subseteq X^+$ w.h.p., so
\begin{align*}
    \mathbb{P}(Z|X) &\geq \mathbb{P}(Z\cap X^-|X)=\frac{\mathbb{P}(Z\cap X^- \cap X)}{\mathbb{P}(X)} =\frac{\mathbb{P}(Z\cap X^-)}{\mathbb{P}(X^-)}\frac{\mathbb{P}(X^-)}{\mathbb{P}(X)} = \mathbb{P}(Z|X^-)\frac{\mathbb{P}(X^-)}{\mathbb{P}(X)}.
\end{align*}

\noindent By \citet[Theorem 3.1]{bollobas2007phase}, supercriticality from \eqref{lambda} ensures that the giant component probabilities match the integrated branching survival profiles $\int_{\mathcal{X}} \zeta_\kappa \,d\mu$ and $\int_{\mathcal{X}} \zeta_{\kappa_\delta^-} \,d\mu$. Since $\mu(\mathcal{X})=1$, the Cauchy--Schwarz inequality bridges the norms via $\|\kappa - \kappa_\delta^-\|_1 \le \|\kappa - \kappa_\delta^-\|_2 \le \delta$, so $\|\zeta_\kappa - \zeta_{\kappa_\delta^-}\|_1 = O(\delta)$ follows from \citet[Theorem 6.4]{bollobas2007phase} which establishes the qualitative continuity of these profiles under $L^1$ kernel perturbations. Consequently, using the algebraic identity $a^2 - b^2 = (a+b)(a-b)$ and noting that both survival functions are bounded above by $1$, we obtain
\[
\mathbb{P}(X) - \mathbb{P}(X^-) \le 2 \int_{\mathcal{X}} \left|\zeta_\kappa(x) - \zeta_{\kappa_\delta^-}(x)\right| d\mu(x) = 2 \|\zeta_\kappa - \zeta_{\kappa_\delta^-}\|_1 = O(\delta).
\]
Given that $\mathbb{P}(x\in \mathcal{C}_{(1)}|\kappa)$ is strictly bounded away from zero by the supercriticality of $\kappa$ \citep[Theorem 3.1]{bollobas2007phase}, we obtain $\mathbb{P}(X) > 0$, which implies
\[
\frac{\mathbb{P}(X^-)}{\mathbb{P}(X)} = 1 - \frac{\mathbb{P}(X) - \mathbb{P}(X^-)}{\mathbb{P}(X)} \ge 1 - \frac{2\|\zeta_\kappa - \zeta_{\kappa_\delta^-}\|_2}{\mathbb{P}(X)} = 1 - O(\delta).
\]
Taking $\delta \to 0$ yields $\frac{\mathbb{P}(X^-)}{\mathbb{P}(X)} \to 1$, which implies 
\begin{align*}
    \mathbb{P}(Z|X) &\geq \mathbb{P}(Z|X^-)\frac{\mathbb{P}(X^-)}{\mathbb{P}(X)} \to \mathbb{P}(Z|X^-)
\end{align*}

\noindent Next, we lower bound $\mathbb{P}(Z \mid X^-)$ by upper bounding $\mathbb{P}(Z^c\mid X^-)$. Recall the definitions of the neighborhood $N_{k}(u)$ and the boundary $\partial N_{k}(u)$ as formulated in the proof of Theorem \ref{thm:lower-bound} for the graph $G_\kappa$. Analogously, we define $(N^+_{k}(u), \partial N^+_{k}(u))$ and $(N^-_{k}(u), \partial N^-_{k}(u))$ to be the corresponding node sets in the random graphs $G_{\kappa_{\delta}^{+}}$ and $G_{\kappa_{\delta}^{-}}$, respectively. We also define $k_1 = \varepsilon' d_\kappa(u_1,u_2)$ and $k_2 = (1- \varepsilon+\varepsilon') d_\kappa(u_1,u_2)$ with 
\begin{align*}
    0<\varepsilon'=\min\left\{\frac{\varepsilon}{2},\varepsilon -\theta\right\}-\varepsilon''<\min\left\{\frac{\varepsilon}{2},\varepsilon -\theta\right\}
\end{align*} 
for some $\varepsilon''\in \left(0,\min\left\{\frac{\varepsilon}{2},\varepsilon -\theta\right\}\right)$ chosen so that $N_{k_1}(u_1) \cap N_{k_2}(u_2) = \varnothing$. Conditioning throughout on the event $X^-$, we have similar to \eqref{exp-eqn} that
\begin{align*}
  \PR(Z^c\mid X^-)\mathbf{1}_{X^-} &\leq \exp\bigg(-R \frac{|\partial N_{k_1}(u_1)|}{n} M^{r} \bigg(1-\frac{|N_{k_1}(u_1)| + |N_{k_2}(u_2)|}{n}\bigg)^{M^r-1} \bigg)\mathbf{1}_{X^-}\\
  &\leq \exp\bigg(-R \frac{|\partial N^-_{k_1}(u_1)|}{n} M^{r} \bigg(1-\frac{|N^+_{k_1}(u_1)| + |N^+_{k_2}(u_2)|}{n}\bigg)^{M^r-1} \bigg)\mathbf{1}_{X^-},
\end{align*}
where "$\leq$" follows from the monotone edge coupling in \eqref{edge}.

\noindent Conditioned on $X^-$ with $X^-\subseteq X \subseteq \{u_1,u_2 \text{ are in the same connected component of } G_\kappa\}$, Theorem 6.2 from \cite{RGCN2} implies that $d_\kappa(u_1,u_2)/ \log_{\lambda_1(\mathcal{T}_\kappa)} n \xrightarrow{\mathbb{P}} 1$. In other words, 
\[
(1-\epsilon)\log_{\lambda_1(\mathcal{T}_\kappa)} n \le d_\kappa(u_1,u_2) \le (1+\epsilon)\log_{\lambda_1(\mathcal{T}_\kappa)} n \quad \text{w.h.p.}
\]
for any fixed $\epsilon>0$. Choosing $\epsilon$ small enough so that $\varepsilon'\epsilon < 1 - \varepsilon'$, we obtain 
\[
k_1 \le \varepsilon'(1+\epsilon)\log_{\lambda_1(\mathcal{T}_\kappa)}n < \frac{\log {\lambda_1(\mathcal{T}_{\kappa_\delta^-})}}{\log{\lambda_1(\mathcal{T}_\kappa)}}\log_{\lambda_1(\mathcal{T}_{\kappa_\delta^-})}n \leq \log_{\lambda_1(\mathcal{T}_{\kappa_\delta^-})}n \quad \text{w.h.p.},
\]
which allows us to apply Proposition \ref{prop:neighborhood-growth} to $|\partial N^-_{k_1}(u_1)|$. By Proposition \ref{prop:neighborhood-growth} and the fact that $\kappa_{\delta}^-$ is primitive, there exists a constant $c>0$ such that
\begin{align}
    |\partial N^-_{k_1}(u_1)| \geq \sum_{t=1}^{T}&(1-\varepsilon)c\lambda_1(\mathcal{T}_{\kappa_\delta^-})^{k_1} \geq T(1-\varepsilon)c\cdot n^{\varepsilon'(1-\epsilon)\log_{\lambda_1(\mathcal{T}_{\kappa})} \lambda_1(\mathcal{T}_{\kappa_\delta^-})} \quad \text{w.h.p.}\label{bound5}
\end{align}

\noindent Next, recall from \eqref{lambda} that $\lambda_1(\mathcal{T}_{\kappa_\delta^{+}}) = \lambda_1(\mathcal{T}_\kappa) + O(\delta)$. Choosing $\epsilon$ and $\delta$ small enough so that $\epsilon(1-\varepsilon+\varepsilon')<(1-\theta)\log_{\lambda_1(\mathcal{T}_{\kappa_\delta^+})}\lambda_1(\mathcal{T}_{\kappa})-(1-\varepsilon+\varepsilon')$, we obtain 
\begin{align*}
k_2&\leq (1-\varepsilon+\varepsilon')(1+\epsilon)\log_{\lambda_1(\mathcal{T}_{\kappa})} n<\left(1-\theta\right)\log_{\lambda_1(\mathcal{T}_{\kappa_\delta^+})}n \quad \text{w.h.p.},
\end{align*}
and so there exists $\gamma\in \left(0,1-\theta\right)$ such that $k_1<k_2<\gamma\log_{\lambda_1(\mathcal{T}_{\kappa_{\delta}^+})} n$. By Markov's inequality and Lemma \ref{lem:neighborhood-upper-bound}, there exists $\delta'>0$ such that 
$$\PR(| N^+_{k_i}(u_i)_t|\geq n^\gamma)\leq \frac{O(\lambda_1(\mathcal{T}_{\kappa_\delta^+})^{k_i})}{n^\gamma}\leq n^{-\delta'}$$
for $i=1,2$ and $t\in [T]$ with sufficiently large $n$. Therefore,
\begin{align*}
\PR(|N^+_{k_i}(u_i)_t |&\leq n^\gamma \text{ for } i = 1,2 \text{ and } t\in [T]) \\
&= 1-\PR(\exists i \in \{1,2\}, t \in [T] \text{ s.t. } \left| N_{k_i}(u_i)_t \right| \geq n^\gamma)\notag\\
&\geq 1- \sum_{i=1,2}\sum_{t=1}^{T} \PR(| N_{k_i}(u_i)_t|\geq n^\gamma) \geq 1-2Tn^{-\delta'},
\end{align*}
and so 
\begin{align}
    |N^+_{k_i}(u_i)|\leq T n^\gamma \quad \text{for }i=1,2 \text{ w.h.p.}\label{bound6}
\end{align}

\noindent From \eqref{bound5} and \eqref{bound6},
\begin{align*}
  \PR&(Z^c\mid X^-)\mathbf{1}_{X^-} \\
  &\leq \exp\bigg(-R \frac{T(1-\varepsilon)c\cdot n^{\varepsilon'(1-\epsilon)\log_{\lambda_1(\mathcal{T}_{\kappa})} \lambda_1(\mathcal{T}_{\kappa_\delta^-})}}{n} M^{\frac{\theta}{\log M} \log n-1} \bigg(1-\frac{2T n^\gamma}{n}\bigg)^{M^{\frac{\theta}{\log M} \log n}} \bigg)\notag\\
  &= \exp\bigg(-R \frac{T (1-\varepsilon)c\cdot n^{\left(\min\left\{\frac{\varepsilon}{2},\varepsilon -\theta\right\}-\varepsilon''\right)(1-\epsilon)\log_{\lambda_1(\mathcal{T}_{\kappa})} \lambda_1(\mathcal{T}_{\kappa_\delta^-})}}{nM} n^{\theta} \bigg(1-\frac{2T n^\gamma}{n}\bigg)^{n^{\theta}} \bigg).
\end{align*}
Since $\gamma+\theta<1$, $\bigg(1-\frac{2T n^{\gamma}}{n}\bigg)^{n^{\theta}} \geq 1-\frac{2T n^{\gamma+\theta}}{n} \to 1$ as $n\to \infty$. Since $\epsilon$ can be chosen small enough so that $(-\varepsilon''\epsilon+\varepsilon''+\epsilon\min\left\{\frac{\varepsilon}{2},\varepsilon -\theta\right\})\log_{\lambda_1(\mathcal{T}_{\kappa})} \lambda_1(\mathcal{T}_{\kappa_\delta^-})<\varsigma$ for any $\varsigma>0$, $$R=\Omega\left(n^{1-\theta-\min\left\{\frac{\varepsilon}{2},\varepsilon -\theta\right\}\log_{\lambda_1(\mathcal{T}_{\kappa})} \lambda_1(\mathcal{T}_{\kappa_\delta^-})+\varsigma}\right)$$ is sufficient for the final bound to tend to 0. By the supercriticality of $\kappa_\delta^-$ from \eqref{lambda}, $\mathbb{P}(x\in \mathcal{C}_{(1)}|\kappa_\delta^-)$ is strictly bounded away from zero \citep[Theorem 3.1]{bollobas2007phase}, so $\mathbb{P}(X^-)$ does not vanish. Then since $\PR(Z^c\mid X^-)\mathbf{1}_{X^-}$ vanishes, $\PR(Z^c\mid X^-)$ vanishes. Taking $\delta \to 0$ yields 
\begin{align*}
    \mathbb{P}(Z|X) &\geq \mathbb{P}(Z|X^-)\frac{\mathbb{P}(X^-)}{\mathbb{P}(X)} \to \mathbb{P}(Z|X^-) = 1-\mathbb{P}(Z^c|X^-)
\end{align*}
which converges to 1 when
$$R=\Omega\left(n^{1-\theta-\min\left\{\frac{\varepsilon}{2},\varepsilon -\theta\right\}+\varsigma}\right),$$ 
consistent with the result in Theorem \ref{thm:lower-bound}.

\section{Conclusion}
\label{sec:conclusion}

In this paper, we established sharp lower and upper $(1 \pm \varepsilon)$-distortion guarantees for landmark-based distance-preserving embeddings in inhomogeneous random graphs. 
Our analysis demonstrates that the key mechanism underlying accurate distance-preserving embeddings is exponential neighborhood growth and sufficiently rapid intersection behavior, as governed by the spectral properties of the affinity matrix $D$ or its continuous integral operator counterpart $\mathcal{T}_\kappa$. 

% ==================== MODIFIED TEXT ====================
By expanding our analysis from point-wise high-probability bounds to global empirical averages, we demonstrated that the hierarchical landmark framework successfully stabilizes metric distortion across whole topologies, proving that localized structural bottlenecks are statistically negligible. Furthermore, via a novel metric sandwiching coupling, we successfully transferred these guarantees from finite block models to infinite-dimensional, continuous latent-space kernels. This mathematical bridge establishes universal distortion bounds that easily accommodate non-parametric, scale-free, and heavy-tailed network configurations like Chung--Lu power-law models.
% =======================================================

Our results provide a unified spectral perspective linking affinity structure, neighborhood expansion, and metric distortion. By demonstrating that a small set of landmarks can accurately bound distance stretch globally, this framework bridges random graph theory with the design of probabilistic graph spanners for heterogeneous spaces. They offer theoretical justification for scalable distance-preserving node embeddings in large-scale networks exhibiting community structure, cyclic connectivity, or block decomposition. By characterizing distortion behavior up to radius $O(\log_{\lambda_1} n)$, the analysis also clarifies the precise regime in which landmark-based methods operate effectively in sparse, locally tree-like graphs.

%\paragraph{Limitations and Future Directions.}

%Overall, this work lays the theoretical groundwork for 
%distance approximation in structurally heterogeneous graphs, 
%while highlighting several avenues for extending the analysis 
%to broader graph models and practical settings.

%\red{\acks{We would like to express our sincere gratitude to [acknowledge individuals, organizations, or institutions] for their invaluable contributions to this research. We are also grateful to [mention any additional acknowledgements, such as technical assistance, data providers, or colleagues] for their support and assistance throughout the course of this work.}}

%\bibliographystyle{plainnat}
\bibliography{myIEEEabrv,er_shortest_path,bib_dissertation,bibliography}

\end{document}